\theoremstyle{nonitalic}
\newtheorem{theorem}{Theorem}
\newtheorem{lemma}{Lemma}
\newtheorem{corollary}{Corollary}
\newtheorem{remark}{Remark}
\newtheorem{definition}{Definition}
\newtheorem{example}{Example}
\newcommand{\enabstractname}{Abstract}
\newenvironment{enabstract}{
	\quotation
	\par\small
	\mbox{}\hfill{\bfseries \enabstractname}\hfill\mbox{}\par
	\vskip 2.5ex}{\par\vskip 2.5ex} 
\title{\Huge Preference-based opponent shaping in differentiable games}	
\author{Xinyu Qiao\textsuperscript{1}\quad Yudong Hu\textsuperscript{1}\quad Congying Han\textsuperscript{1}\thanks{Corresponding author: hancy@ucas.ac.cn}\quad Weiyan Wu\textsuperscript{1}\quad Tiande Guo\textsuperscript{1}}
\affil{(\textsuperscript{1}{University of Chinese Academy of Science})}					   								
\date{}	
\begin{document}
	\maketitle
	{\footnotetext[1]{qiaoxinyu22@mails.ucas.ac.cn}}
	\begin{enabstract}
            Strategy learning in game environments with multi-agent is a challenging problem. Since each agent's reward is determined by the joint strategy, a greedy learning strategy that aims to maximize its own reward may fall into a local optimum. Recent studies have proposed the opponent modeling and shaping methods for game environments. These methods enhance the efficiency of strategy learning by modeling the strategies and updating processes of other agents. However, these methods often rely on simple predictions of opponent strategy changes. Due to the lack of modeling behavioral preferences such as cooperation and competition, they are usually applicable only to predefined scenarios and lack generalization capabilities. In this paper, we propose a novel Preference-based Opponent Shaping (PBOS) method to enhance the strategy learning process by shaping agents' preferences towards cooperation. We introduce the preference parameter, which is incorporated into the agent's loss function, thus allowing the agent to directly consider the opponent's loss function when updating the strategy. We update the preference parameters concurrently with strategy learning to ensure that agents can adapt to any cooperative or competitive game environment. Through a series of experiments, we verify the performance of PBOS algorithm in a variety of differentiable games. The experimental results show that the PBOS algorithm can guide the agent to learn the appropriate preference parameters, so as to achieve better reward distribution in multiple game environments.
	
        \textbf{Key words: }Opponent Shaping, Preference, Game Theory, Differentiable Game
	\end{enabstract}
		
\section{Introduction}
    Multi-agent reinforcement learning (MARL), as a theoretical framework for modeling agent behavior in complex game environments, has become a significant area of research \cite{zhang2021multi, yang2020overview}. Unlike traditional game theory, MARL typically allows agents to learn strategies through repeated interactions to achieve equilibrium \cite{wen2022multi}. By relaxing the assumptions of agent rationality and independence, MARL can learn strategies efficiently with arbitrary environments and opponents \cite{foerster2017learning,letcher2018stable,hu2023modeling}.
    
Current applications of MARL in game environments are primarily focused on zero-sum games (fully competitive) \cite{foerster2017learning,zhang2020model} and fully cooperative games \cite{gupta2017cooperative,yu2022surprising}, since the behavioral preferences of opponent agents in these environments are relatively easy to predict. Nevertheless, the environments in  practical applications, \textit{e.g.}, economic markets, robotics and distributed control, may have multiple equilibrium \cite{hu2003nash,zhang2020bi}, and opponent agents may not exhibit clear preferences for different strategies, thus agents need to learn strategies in general-sum games \cite{curry2023learning,bucsoniu2010multi}.  
The Prisoner's dilemma \cite{robert1981cooperation,Harper2017ReinforcementLP} is a classic example of the tension between mutual cooperation leading to a win-win situation and focusing solely on self-interest leading to a lose-lose situation. Therefore, modeling and shaping the behavior of opponent agents is the main challenge for the application of MARL in these environments \cite{fung2024analysing}.

Recent advancements in MARL have introduced opponent modeling and shaping techniques that allow agents to learn not just their own strategies, but also to predict and influence the strategies of the opponent, such as \cite{foerster2017learning,letcher2018stable,willi2022cola}. These methods show promise in improving the efficiency of strategy learning by incorporating the behavior of other agents into the learning process. However, these methods presuppose a static notion of opponents' strategic preferences, suggesting that other agents adhere to updating their strategies in accordance with preordained protocols, an assumption that overlooks the dynamic interplay of evolving preferences in the learning process. Due to the lack of dynamic modeling of opponents' behavioral preferences, these methods may fail when confronted with specific environments and adversaries, such as Stackelberg Leader Game \cite{VANDAMME1999105} and Stag Hunt \cite{rousseau1984inequality}. 

In general-sum games, the effectiveness of the agent's learned strategy depends on the preferences of other opponents due to the existence of multiple equilibrium. When facing opponents who are willing to cooperate, a win-win strategy can yield higher rewards. Conversely, if other agents adopt greedy strategies aimed at maximizing their own rewards, cooperative behavior can significantly harm one's own rewards. For example, in some general-sum cooperative games, if each player focuses only on his own loss function, it may lead to a non-optimal distribution of rewards, such as in Stackelberg Leader Game \cite{VANDAMME1999105}, many game algorithms \cite{foerster2017learning,letcher2018stable,schafer2019competitive} converge to the Nash equilibrium (NE) point (2,1) without finding a better one (3,2)—Stackelberg equilibrium \cite{zhang2020bi}.

Considering that the rewards of different strategies are determined by the opponents' behavioral preferences, we propose a novel opponent shaping method konwn as Preference-Based Opponent Shaping (PBOS). We observe that cooperative and competitive behaviors in games are related to the agents' loss functions. Consistent loss functions induce cooperation in games, whereas divergent loss functions lead to complete competition. Therefore, if the opponent's loss function is weightly added during the strategy learning process, different behavioral preferences will emerge. We introduce a preference parameter into the agent's strategy learning, which allows the agent to update its strategy while considering the opponent's loss function. This approach not only enhances the agent's expected returns but also promotes the maximization of cooperation.

The main contribution of this paper is the creative introduction of preference parameters and the effective modification of the objective function, so that in general-sum games, goodwill can be released to find a better strategy than Nash equilibrium. In order to better learn the appropriate preference parameters, the method of shaping the variation of the opponent's preference parameters is adopted. Additionally, we verify the generalization of the algorithm PBOS in a random game environment.

The rest of the paper is organized as follows: Section \ref{sec2} reviews the relevant work in the field of opponent shaping and differentiable games; it provides background on differentiable games and describes the baseline method we used in our experiments. Section \ref{sec3} and \ref{sec4} introduce the PBOS algorithm and theoretical basis in detail. Section \ref{sec5} and \ref{sec6} present the experimental setup and results, showing the performance of PBOS in various game scenarios. In the end, Section \ref{sec7} summarizes the paper and discusses potential directions for future research.
    
\section{Related Work}\label{sec2}
The study of non-zero-sum games has a long history in game theory and evolutionary studies \cite{zhang2020model}. With the vigorous development of multi-agent reinforcement learning, research on non-zero-sum games has gained additional perspectives \cite{yang2020overview}. This paper focuses on a series of approaches to solve the opponent shaping problem \cite{foerster2017learning, letcher2018stable, willi2022cola, fung2024analysing}.

The core idea of opponent shaping is to maintain an explicit belief about the opponent and optimize decisions by establishing assumptions about the opponent's strategy \cite{willi2022cola}. This allows the system to reason about the behavior of the opponent and calculate the optimal response strategy, thereby facilitating more effective decision-making in uncertain and dynamic environments.

    There are several types of methods: pre-defined opponent types \cite{synnaeve2011bayesian,weber2009data}, policy reconstruction methods \cite{mealing2015opponent}, and recursive reasoning methods \cite{he2016opponent,albrecht2019reasoning,wen2019probabilistic}. In comparison, PBOS assumes white-box access to the opponent's learning algorithm, rewards, and gradients, placing it within the framework of differentiable games \cite{balduzzi2018mechanics,letcher2018stable,willi2022cola}.

    Learning with Opponent Learning Awareness (LOLA) \cite{foerster2017learning} modifies the learning objective by predicting and differentiating through opponent learning steps \cite{letcher2018stable}, which has been successful in experiments, especially in the Iterated Prisoner's Dilemma (IPD) \cite{lu2022model}. Unfortunately, LOLA fails to guarantee the preservation of stable fixed points (SFPs) \cite{letcher2018stable}. To improve upon LOLA, Stable Opponent Shaping (SOS) \cite{letcher2018stable} applies ad-hoc corrections to the LOLA update, leading to theoretically guaranteed convergence to SFPs \cite{letcher2018stable,willi2022cola}. Competitive Gradient Descent (CGD) \cite{schafer2019competitive} provides an algorithm for numerical computation of Nash equilibria in competitive two-player zero-sum games \cite{schafer2019competitive}.

    The agents using these methods \cite{foerster2017learning,letcher2018stable,schafer2019competitive} are rational and self-interested, focusing solely on optimizing their strategies to maximize personal gains without considering extending goodwill to promote cooperation with their opponents. This selfishness can cause agents to miss the opportunity for better rewards. For instance, in the Stag Hunt \cite{rousseau1984inequality}, agents utilizing the LOLA, SOS, and CGD algorithms tend to converge to the less favorable Nash equilibrium point (1,1) rather than the more advantageous equilibrium (4,4) (Fig. \ref{Stag Hunt-lc}).

	Next, we introduce the concept of \emph{Differentiable games} and outline the baseline methods used in our experiments.
        \begin{definition}(Differentiable games). 
            A differentiable game is a set of $n$ players control parameters $\theta_i\in \mathbb{R}^{d_i}$ to minimize twice continuously differentiable losses $L_i(\theta_1,\cdots,\theta_n):\mathbb{R}^d\to \mathbb{R}$, where $\theta_i\in \mathbb{R}^{d_i}$ for $\sum_id_i=d$ \emph{\cite{letcher2018stable,Azizian2020ATA,willi2022cola}}. 
            \label{dm}
        \end{definition}
 
        This study specifically focuses on the scenario where $n=2$, representing a two-player game.

	\subsection{LOLA}
	LOLA \cite{foerster2017learning} addresses a differentiable game scenario with $n=2$. A LOLA agent updates its parameter $\theta_1$ under the assumption that its opponent behaves as a ``naive" learner, updating its parameter $\theta_2$ using gradient descent. Specifically, agent 1 formulates its modified loss as $\overline{L}_1=L_1(\theta_1,\theta_2+\Delta \overline{\theta_2})$, where $\Delta \overline{\theta_2}= -\alpha \nabla_2 L_2(\theta_1, \theta_2)$.  Here, $\nabla_2$ denotes the gradient with respect to $\theta_2$ and $\alpha$ represents the assumed learning rate of the ``naive" opponent. The first-order Taylor expansion of $\overline{L}_1$ yields $\overline{L}_1\approx L_1+(\nabla_2L_1)^T \Delta \overline{\theta_2}.$ Consequently, agent 1's first-order LOLA update is 
        \begin{equation}
            \Delta\theta_1=-\beta\Big(\nabla_1L_1+(\nabla_{12}L_1)^T\Delta\overline{\theta_2}+(\nabla_1\Delta\overline{\theta_2})^T\nabla_1L_1\Big),
        \end{equation}
	where $\beta$ denotes agent 1's specific learning rate.
    
        LOLA has demonstrated empirical success, notably achieving tit-for-tat in the Iterated Prisoner's Dilemma (IPD) \cite{willi2022cola}. However, it has shown limitations in maintaining Stable Fixed Points (SFPs) \cite{letcher2018stable}.

\subsection{SOS}
SOS \cite{letcher2018stable} represents a significant advancement over LOLA. According to \cite{letcher2018stable}, the \emph{simultaneous gradient} of the game is defined as the concatenation of each players' gradient,
    \begin{equation}
        \xi=(\nabla L_1,...,\nabla L_n)^T\in \mathbb{R}^d.
    \end{equation}

The \emph{Hessian} of the game, denoted as $H = \nabla \xi$, forms a block matrix
    \begin{equation}
        H=\begin{pmatrix}\nabla_{11}L_1& \cdots & \nabla_{1n}L_1 \\ \vdots & \ddots & \vdots \\ \nabla_{n1}L_n & \cdots & \nabla_{nn}L_n \end{pmatrix}\in \mathbb{R}^{d\times d}.
    \end{equation}

	Notably, $H$ is typically asymmetric and can be viewed as described in \cite{letcher2018stable}. Furthermore, $H$ decomposes into $H=H_d+H_o$, where $H_d$ comprises the diagonal blocks of $H$, and $H_o$ represents the off-diagonal components. By defining $\chi = \text{diag}(H_o^T\nabla L)$, the LOLA gradient can be computed as stated in \cite{letcher2018stable}:
	\begin{equation}
	    \text{LOLA}=(I-\alpha H_o)\xi-\alpha \chi.
	\end{equation} 
	From \cite{letcher2018stable}, the resulting gradient of the SOS algorithm is expressed by:
        \begin{equation}
            \xi_p = (I-\alpha H_o)\xi-p\alpha \chi,
        \end{equation}
	where $\alpha$ denotes the learning rate and $I$ stands for the identity matrix. Here, $p$ is a hyperparameter determined by Algorithm \ref{CPBOS}. Specifically, setting $p=0$ results in $\xi_p$ reducing to LookAhead \cite{zhang2010multi,letcher2018stable}, while $p=1$, then $\xi_p$ corresponds to LOLA. However, the maintenance of fixed points is contingent upon $p$ approaching infinitesimal values \cite{letcher2018stable}. To resolve this issue, SOS introduces a dual criterion for the probability $p$ at each learning step, aiming to drive $p$ towards zero. This approach not only combines the advantages of LookAhead and LOLA but also addresses LOLA's challenge in preserving Stable Fixed Points (SFPs). Empirical findings consistently demonstrate that SOS achieves or surpasses LOLA's performance, as evidenced in \cite{letcher2018stable}.

    \subsection{CGD}
	CGD \cite{schafer2019competitive} is an algorithm designed for computing of Nash equilibria in competitive two-player games. It extends gradient descent principles to the two-player setting.
 
    The update rule of CGD is defined as:
	\begin{equation}
	    \begin{pmatrix}\Delta\theta_1\\ \Delta\theta_2\end{pmatrix}=-\beta\begin{pmatrix}I & \alpha \nabla_{12}L_1\\ \alpha \nabla_{21}L_2 & I\end{pmatrix}^{-1}\begin{pmatrix}\nabla_1L_1 \\ \nabla_2L_2\end{pmatrix}.
	\end{equation}
    where $I$ denotes the identity matrix, and $\alpha,\beta$ are learning rate coefficients.
    
    By applying the expansion $\lambda_{max}(A)<1\Rightarrow (Id-A)^{-1}=\lim\limits_{N\to\infty}\sum_{k=0}^NA^k$ to the update rule, various orders of CGD can be derived \cite{schafer2019competitive}. For instance, setting $N=1$, yields the Linearized CGD (LCGD) \cite{schafer2019competitive}, characterized by the update rule $\Delta \theta_1 := -\alpha \nabla_1L_1+\alpha^2\nabla_{12}L_1\nabla_2L_2$ \cite{willi2022cola}.

    CGD mitigates oscillations and divergent behaviors that may arise in simple bilinear games \cite{schafer2019competitive}. Moreover, for locally convex-concave zero-sum games, CGD has been demonstrated to converge locally, with a potential for exponential convergence rates \cite{schafer2019competitive}. However, in the context of the IPD, CGD does not identify the optimal strategy \cite{willi2022cola}.

\section{Preference-based Opponent Shaping (PBOS)}\label{sec3}
Intuitively, focusing solely on agent's own loss function in a game may lead to a non-stationary environment \cite{letcher2018stable,kim2021policy} or suboptimal outcomes that are socially undesirable \cite{foerster2017learning}. Therefore, it is crucial to model behavioral preferences of opponent agents. Our method, PBOS, integrates the opponent's loss function into the objective to model their preferences.

     In this section, we provide a detailed description of the PBOS algorithm. To validate the effectiveness of incorporating preference parameters into the loss functions, we initially conduct experiments with fixed preference parameters. The original loss functions for agents 1 and 2 are denoted as $L_1$ and $L_2$, respectively. We introduce preference parameters into the loss functions to facilitate agent training. The modified loss functions for agents 1 and 2 are then defined as $L_1' = L_1 + c_1 L_2$ and $L_2' = L_2 + c_2 L_1$, where $c_1$ and $c_2$ are the introduced preference parameters.
    
    Subsequently, we apply the SOS strategy updating to train agents using these modified loss functions, forming the Constant-preference-based Opponent Shaping (CPBOS) approach, detailed in Algorithm \ref{CPBOS}.
    
    \begin{algorithm}
        \caption{CPBOS}\label{CPBOS}
        Initialize $\theta$ randomly, set the preference parameters $c_1$ and $c_2$, and the hyperparameters $a$ and $b$ within the interval (0, 1)\;
        \While{not done}{
            
            Define the modified loss functions as $L_1'=L_1+c_1L_2, L_2'=L_2+c_2L_1$\;
            
            Compute $\xi_0=(I-\alpha H_o)\xi$ and $\chi = \text{diag}(H_o^T\nabla L)$ at $\theta$ based on $L_1',L_2'$\;
    
            \textbf{if} $\langle -\alpha\chi,\xi_0\rangle >0$ \textbf{then} $p_1=1$ \textbf{else} $p_1=\min\{1,\frac{-a\|\xi_0\|^2}{\langle -\alpha\chi,\xi_0\rangle}\}$\;
    
            \textbf{if} $\|\xi\|<b$ \textbf{then} $p_2=\|\xi\|^2$ \textbf{else} $p_2=1$\;
    
            Let $p=\min\{p_1,p_2\}$,compute $\xi_p=\xi_0-p\alpha\chi$ and assign $\theta \leftarrow \theta-\alpha\xi_p$\;
        }
    \end{algorithm}

	Through the Algorithm \ref{CPBOS}, we validate the concept of modifying target loss functions by introducing preference parameters. However, directly determining suitable values for these parameters, denoted as $c$, poses a challenge. Therefore, our approach considers leveraging existing opponent shaping methods to identify appropriate values for $c$, which forms the foundational idea of our algorithm.

    In a two-agent game setting, incorporating the opponent's loss function involves adjusting the loss function of agent $i$, denoted as $f_i(L_1, L_2)$, where $i = 1, 2$. This function $f_i$ should exhibit local monotonicity with respect to both variables and is assumed to be differentiable. Specifically, $f_i(L_1, L_2)$ can be expressed as $\sum_{j=1}^{n} c_{ij} L_j$, with $c_{ii} = 1$, making it particularly suitable for local learning methods such as gradient descent and SOS.

    Building upon these principles, this paper introduces the concept of the preference parameter $c$, which represents the degree of goodwill exhibited by agents towards their opponents. A higher value of $c$ indicates a stronger cooperative inclination. For instance, $c > 0$ signifies a cooperative strategy, while $c < 0$ implies a more competitive stance.

    By incorporating the preference parameter $c$, the original loss functions of the two agents, $L_1$ and $L_2$, are adjusted to $L_1 + c_1 L_2$ and $L_2 + c_2 L_1$, respectively. Subsequently, the SOS algorithm is utilized to update the parameter $\theta$ based on these modified loss functions.
    
    \begin{align*}
        &\Delta L_1=[L_1(\theta_1',\theta_2') + c_1L_2(\theta_1',\theta_2')]-[L_1(\theta_1,\theta_2)+c_1L_2(\theta_1,\theta_2],
        \\&\Delta L_2=[L_2(\theta_1',\theta_2') + c_2L_1(\theta_1',\theta_2')]-[L_2(\theta_1,\theta_2)+c_2L_1(\theta_1,\theta_2)],
    \end{align*}
    where $\theta_1'=\theta_1+\Delta \theta_1$ and $\theta_2'=\theta_2+\Delta \theta_2$. Here, $\Delta \theta = -\alpha \xi_p$, with $\xi_p$ calculated as per the SOS \cite{letcher2018stable} and $\alpha$ being the learning rate.

    Then the above expression is expanded by first-order Taylor with respect to the parameter $\theta$:
    \begin{align*}
        &\Delta L_1\approx\nabla_1 (L_1 + c_1 L_2)\cdot\Delta \theta_1 + \nabla_2 (L_1 + c_1 L_2)\cdot \Delta \theta_2 ,
        \\&\Delta L_2\approx\nabla_1 (L_2 + c_2 L_1)\cdot \Delta \theta_1 + \nabla_2 (L_2 + c_2 L_1)\cdot \Delta \theta_2,
    \end{align*}
    where the gradients are evaluated at the parameters $\theta_1$ and $\theta_2$, and $\Delta \theta=-\alpha \xi_p$ as previously described.

    Next, we compute the derivatives of $\Delta L_1$ and $\Delta L_2$ with respect to the preference parameter $c$ using the gradient descent method. In this context, the derivation of $c$ is fundamentally similar to the process used in the SOS algorithm. However, this approach can introduce non-stationarity into the system due to the sequential learning of two parameters. Moreover, learning $c$ in this manner may exacerbate the system's dynamics: in scenarios where one agent demonstrates goodwill while the other displays hostility, the learning process can deteriorate over iterations. Specifically, the malevolent agent might exploit the goodwill of their counterpart, leading to increased hostility, while the benevolent agent may attempt to further cultivate a positive relationship.

    In real-life interactions, acts of goodwill often encourage reciprocity, fostering a positive feedback loop in relationships. However, unilateral goodwill that is met with indifference or exploitation can yield unfavorable outcomes for the initiator. This mirrors the complexities of human social dynamics, where individuals are not purely rational and have preferences for reciprocity. To address this complexity, individuals typically adjust their strategies based on the responses to their initial goodwill gestures: indifference or negative responses may reduce future goodwill efforts, whereas positive responses may encourage further acts of kindness. Our study aims to simulate such preference dynamics.

    Therefore, this paper adopts a strategy to model the dynamics of goodwill within a game-theoretic framework. Rather than directly learning the preference parameter $c$, we integrate the shaping of opponent preference changes into the learning process. As agents learn $c$, they concurrently monitor and model changes in their opponents' goodwill adjustments relative to their own. This is represented by $\Delta c_i = g_{-i}(\Delta c_{-i}) + \epsilon_{-i}$, where $-i$ denotes the opponent of agent $i$, and $\epsilon_{-i}$ represents noise. The function $g_i$ is differentiable, particularly suitable for local learning methods, and can be approximated as $\Delta c_i = K_{-i} \Delta c_{-i} + \epsilon_{-i}$, with $K$ estimated using a weighted least squares method.

    The equations for calculating $K$ are as follows:
    \begin{align}
    &S_{i,t} = \gamma S_{i,t-1} + (c_{i,t-1} - c_{i,t-2})^2, \quad i = 1, 2;\nonumber\\
    &r_t = \gamma r_{t-1} + (c_{1,t-1} - c_{1,t-2})(c_{2,t-1} - c_{2,t-2});\nonumber
    \\&K_{i,t} = \frac{r_t}{S_{i,t}}, \quad i = 1, 2.
    \label{calculate-K}
    \end{align}
    Here, $t \in \mathbb{N}$ denotes the learning step, and $\gamma \in [0, 1]$ represents the discount factor. Notably, if $\|S_{1,t} \cdot S_{2,t}\|_2 \leq 0.01$, indicating approximate equality in changes between both agents, $K_{1,t} = K_{2,t} = 1.00$.

    In this study, our target loss functions remain in their original forms, expressed as $L_1+c_1L_2$ and $L_2+c_2L_1$. When differentiating these loss functions, except for the coefficients $c_1$ and $c_2$ present in the variables $\Delta x$ and $\Delta y$, the rest are treated as constants. Additionally, when differentiating with respect to $c_i$, the coefficient $c_{-i}$ is substituted according to the relation $c_{-i}=K_ic_i$. 
    
    Based on these considerations, we derive the following gradient expressions:
    \begin{align*}
        &\nabla_{c_1}(\Delta L_1)=\nabla_1 (L_1 + c_1L_2)\cdot (-\alpha \nabla_1L_2) + \nabla_2 (L_1 + c_1L_2)\cdot(-\alpha\cdot K_1\cdot \nabla_2L_1),
        \\&\nabla_{c_2}(\Delta L_2)=\nabla_1 (L_2 + c_2L_1)\cdot (-\alpha \cdot K_2 \cdot \nabla_1L_2) + \nabla_2 (L_2 + c_2L_1)\cdot(-\alpha \nabla_2L_1).
    \end{align*}
    It is crucial to acknowledge that higher-order terms of $\alpha$ have been omitted in our analysis, with specific justification detailed in Lemma \ref{l1}. The final update formula for the preference parameter $c$ is as follows:
    \begin{equation}
        c_i=c_i-\beta\cdot \nabla_{c_i}(\Delta L_i),i=1,2.
    \end{equation}
    Here, $\beta$ represents the learning rate, which is designed to decrease progressively throghout the learning steps.

    The specific algorithmic procedure of PBOS is outlined in Algorithm \ref{PBOS}.

    In summary, our algorithm, referred to as PBOS, comprises three main components. Initially, we incorporate the preference parameter $c$ into the loss functions, thereby adapting these functions for both agents. Subsequently, the parameter $\theta$ is iteratively updated using the SOS algorithm, which operates on the modified loss function. In the second phase, historical data is utilized to model the dynamics of the opponent's preference parameter $c$. Finally, the preference parameter $c$ undergoes optimization via gradient descent.
\begin{algorithm}
    \caption{PBOS}\label{PBOS}
        Initialize $\theta$ randomly, set the preference parameters $c_1,c_2$ and $K=[1,1]$, and the hyperparameters $a$ and $b$ within the interval (0, 1)\;
        \While{not done}{
        
            Define the modified loss functions as $L_1'=L_1+c_1L_2, L_2'=L_2+c_2L_1$\;
            
            Compute $\xi_0=(I-\alpha H_o)\xi$ and $\chi = \text{diag}(H_o^T\nabla L)$ at $\theta$ based on $L_1',L_2'$\;
    
            \textbf{if} $\langle -\alpha\chi,\xi_0\rangle >0$ \textbf{then} $p_1=1$ \textbf{else} $p_1=\min\{1,\frac{-a\|\xi_0\|^2}{\langle -\alpha\chi,\xi_0\rangle}\}$\;
    
            \textbf{if} $\|\xi\|<b$ \textbf{then} $p_2=\|\xi\|^2$ \textbf{else} $p_2=1$\;
    
            Let $p=\min\{p_1,p_2\}$,compute $\xi_p=\xi_0-p\alpha\chi$ and assign $\theta \leftarrow \theta-\alpha\xi_p$\;
    
            \textbf{Compute} $K_1,K_2$\;
        
            \textbf{Update} and \textbf{Record} $c_1,c_2$\;
    
            \textbf{Update} $\beta$\;
            
            \textbf{Return} losess $L_1,L_2$\; 
        }    
\end{algorithm}
    
    The fundamental difference between PBOS and CPBOS lies in PBOS's adaptive learning capability regarding preference parameters, contrasting with the fixed parameters initially set in CPBOS. PBOS achieves this adaptive learning by integrating opponent shaping techniques, which enable the model to dynamically adjust the preference parameter $c$ in response to the evolving competitive environment.

\section{Theoretical Results}\label{sec4}
In this section, we will give theoretical properties of PBOS.

        \begin{example}
        \emph{In the Tandem Game, selecting an appropriate value for the preference parameter $c$ is advantageous for both agents.}
        \end{example}
        \begin{proof}
            The original loss functions of the Tandem game are defined as follows: \begin{align*}
                &L_1=(x+y)^2-2x,\\&L_2=(x+y)^2-2y.
            \end{align*}
            Applying the SOS algorithm, we derive the conditions for convergence:
            \begin{align*}
                &\nabla_1L_1=2(x+y)-2=0;\\
                &\nabla_2L_2=2(x+y)-2=0.
            \end{align*}
            This leads to $x+y=1$, corresponding to the NE of the game, where both agents minimize their loss jointly at $x=y=0.5$, resulting in $L_1=L_2=0$ \cite{letcher2018stable}.

           Introducing the preference parameter with $c_1=c_2=1$, the revised objective functions become:
            \begin{align*}
                &L_1'= 2(x+y)^2-2(x+y),
                \\&L_2'= 2(x+y)^2-2(x+y).
            \end{align*}
            Applying the SOS algorithm to these recised objectives yields:
            \begin{align*}
                &\nabla_1L_1'=4(x+y)-2=0;\\
                &\nabla_2L_2'=4(x+y)-2=0.
            \end{align*}
            In this scenario, convergence occurs at $x+y = 0.5$. The solution minimizing losses for both agents is $x = y = 0.25$, resulting in $L_1 = L_2 = -0.25$. While this outcome does not constitute a NE, it yields lower losses compared to the NE of the game.
        \end{proof}

        This example illustrates the potential of preference parameters to reduce losses for both agents in the Tandem Game when appropriately chosen, forming the foundational premise of our research approach.
        
        \begin{lemma}
            If $\alpha>0$ is sufficiently small, higher-order term of $\alpha$ in the update of $\theta$ can be neglected when updating the preference parameter $c$.
            \label{l1}
        \end{lemma}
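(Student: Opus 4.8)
The plan is to expand the $\theta$-update in powers of $\alpha$, isolate the piece that is genuinely first order in $\alpha$, and show that everything else feeds only an $O(\alpha^2)$ term into the gradient $\nabla_{c_i}(\Delta L_i)$ that drives the preference update, so that for small $\alpha$ it is negligible against the leading $O(\alpha)$ term. First I would unwind the definition of $\xi_p$: since $\xi_p = \xi_0 - p\alpha\chi$ with $\xi_0 = (I-\alpha H_o)\xi$,
\begin{equation*}
\Delta\theta = -\alpha\xi_p = -\alpha\xi + \alpha^2\big(H_o\xi + p\,\chi\big) =: -\alpha\xi + E(\alpha).
\end{equation*}
Here $\xi$, $H_o$ and $\chi$ are assembled from first and second partial derivatives of the modified losses $L_i' = L_i + c_i L_{-i}$, which are continuous because the losses are twice continuously differentiable, and $p\in[0,1]$ always; moreover $L_i'$ — hence $\xi$, $H_o$, $\chi$ — depends affinely on $(c_1,c_2)$, and the substitution $c_{-i}=K_ic_i$ used in the update is affine in $c_i$ (with $K_i$ frozen at its current least-squares value). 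Consequently, on any bounded region of $(\theta,c_1,c_2)$ there is a constant $M$ with $\|H_o\xi + p\chi\|\le M$ and $\|\nabla_{c_i}(H_o\xi + p\chi)\|\le M$, so that $\|E(\alpha)\|\le M\alpha^2$ and $\|\nabla_{c_i}E(\alpha)\|\le M\alpha^2$.

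Next I would substitute this decomposition into the quantity actually differentiated in the text, namely the first-order surrogate $\Delta L_i \approx \nabla_1 L_i'\cdot\Delta\theta_1 + \nabla_2 L_i'\cdot\Delta\theta_2$ with $c_{-i}$ replaced by $K_ic_i$. Writing $\Delta\theta = -\alpha\xi + E(\alpha)$ and differentiating in $c_i$ gives
\begin{equation*}
\nabla_{c_i}(\Delta L_i) = \alpha\, G_i(\theta,c) + R_i(\alpha,\theta,c),
\end{equation*}
where $\alpha G_i$ is the contribution of the $-\alpha\xi$ part — which, under the differentiation convention adopted in Section~\ref{sec3}, is exactly the expression for $\nabla_{c_i}(\Delta L_i)$ used in Algorithm~\ref{PBOS} — and $R_i$ collects the $E(\alpha)$ contribution. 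By the product rule and the bounds above, $\|R_i\| \le \|\nabla_{c_i}(\nabla L_i')\|\,\|E(\alpha)\| + \|\nabla L_i'\|\,\|\nabla_{c_i}E(\alpha)\| \le M'\alpha^2$ on the bounded region. Hence $\nabla_{c_i}(\Delta L_i) = \alpha G_i + O(\alpha^2)$: wherever $G_i\neq 0$ the relative error $\|R_i\|/\|\alpha G_i\|$ is $O(\alpha)$ and tends to $0$ as $\alpha\to0^+$, so for $\alpha$ small the higher-order-in-$\alpha$ part of the $\theta$-update perturbs the preference-update direction by a vanishing amount (and in magnitude only by $o(\alpha)$, absorbed into the shrinking step size $\beta$). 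This is the assertion.

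The step I expect to be the main obstacle is controlling the regularity of $p$, which is itself $\alpha$- and $c$-dependent through $p=\min\{p_1,p_2\}$ with $p_1$ built from $\xi_0$ and $\langle-\alpha\chi,\xi_0\rangle$: one must verify that $p$ stays bounded and locally Lipschitz in $c$ uniformly for small $\alpha$, so that the bound $\|\nabla_{c_i}(H_o\xi+p\chi)\|\le M$ is legitimate (interpreted via a one-sided or subgradient at the ties of the $\min$). This should be manageable: as $\alpha\to0^+$ one has $\xi_0\to\xi$, and unless $\langle\chi,\xi\rangle=0$ the branch test forces $p_1=1$ for all small $\alpha$ (if $\langle\chi,\xi\rangle<0$ the inner product is eventually positive; if $\langle\chi,\xi\rangle>0$ the defining ratio blows up), so $p$ reduces to either $1$ or the $\alpha$-independent $\min\{1,\|\xi\|^2\}$, each bounded and locally Lipschitz; the residual set $\langle\chi,\xi\rangle=0$ is negligible and handled with a subgradient. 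A milder point — that one might worry a third derivative of the losses is needed to differentiate the remainder of $\Delta L_i$ — does not arise, since it is the first-order surrogate of $\Delta L_i$, not the exact difference, that is differentiated, so that remainder never enters the estimate.
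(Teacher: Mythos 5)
Your proposal is correct and follows essentially the same route as the paper's proof: decompose the $\theta$-update into its first-order part $-\alpha\xi$ plus the $O(\alpha^2)$ remainder generated by $H_o\xi$ and $p\chi$, and observe that this remainder contributes only $O(\alpha^2)$ to $\nabla_{c_i}(\Delta L_i)$, hence is negligible against the leading $O(\alpha)$ term. The paper merely writes out the component-wise expansion of $\Delta\theta_1,\Delta\theta_2$ and asserts the conclusion, whereas you additionally supply the boundedness estimates on a bounded region and the discussion of the regularity of $p$; these are worthwhile details but not a genuinely different argument.
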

        \begin{proof}
        During the update of $\theta$ using the SOS algorithm, the changes at each step are given by:
        \begin{align*}
            &\Delta \theta_1=-\alpha \nabla_1(L_1+c_1L_2)+\alpha^2\nabla_{12}(L_1+c_1L_2)\cdot \nabla_2(L_2+c_2L_1)+\alpha^2 p\nabla_{21}(L_2+c_2L_1)\cdot \nabla_2(L_1+c_1L_2);
            \\&\Delta \theta_2=-\alpha \nabla_2(L_2+c_2L_1)+\alpha^2\nabla_{21}(L_2+c_2L_1)\cdot \nabla_1(L_1+c_1L_2)+\alpha^2 p\nabla_{12}(L_1+c_1L_2)\cdot \nabla_1(L_2+c_2L_1).
        \end{align*}
        Here, $p$ is a constant described in \cite{letcher2018stable}, starting at $1$ and decreasing such that $\frac{p}{\alpha^2} \ll 0$ as the learning progresses.

         The objective of updating the preference parameter $c$ is to maximize the difference in loss, thereby accelerating algorithmic convergence:
        \begin{align*}
            \Delta L_1&=(\nabla_1 L_1 + c_1\nabla_1 L_2)\cdot\Delta \theta_1 + (\nabla_2 L_1 + c_1\nabla_2 L_2)\cdot\Delta \theta_2 ,
            \\\Delta L_2&=(\nabla_1 L_2 + c_2\nabla_1 L_1)\cdot\Delta \theta_1 + (\nabla_2 L_2 + c_2\nabla_2 L_1)\cdot\Delta \theta_2.
        \end{align*}

        Thus, when $\alpha > 0$ is sufficiently small, the higher-order terms involving $\alpha$ in $\Delta \theta_i$ for $i = 1, 2$ can be disregarded in favor of the first-order term of $\alpha$. This simplification assumes that the opponent behaves like a ``naive" agent \cite{foerster2017learning} during the update of the preference parameter $c$, akin to applying direct gradient descent to parameter updates $\theta$. This assumption aligns with established principles in \cite{foerster2017learning}.
        \end{proof}

        \begin{lemma}
            If $\alpha >0$ and $\beta > 0 $ are sufficiently small, then $\cfrac{\Delta c}{\Delta \theta} \ll 0$, where $\Delta c$ represents the change in the preference parameter $c$, and $\Delta \theta$ represents the change in the parameter $\theta$ during the update process.
            \label{beta<<0}
        \end{lemma}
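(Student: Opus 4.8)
The plan is to compare the two increments by their order in the learning rates $\alpha$ and $\beta$: show that a $\theta$-step has size of order $\alpha$ (away from a fixed point), that a $c$-step has size of order $\alpha\beta$, and hence that their ratio is of order $\beta$, which is made arbitrarily small by shrinking $\beta$.

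First I would expand the $\theta$-update. From the SOS step $\Delta\theta=-\alpha\xi_p$ with $\xi_p=(I-\alpha H_o)\xi-p\alpha\chi$ we get $\Delta\theta=-\alpha\xi+\alpha^2(H_o\xi+p\chi)$. Because the losses are twice continuously differentiable, along the (bounded) trajectory the entries of $\xi$, $H_o$ and $\chi$ are uniformly bounded, so there is a constant $C$ with $\big|\,\|\Delta\theta\|-\alpha\|\xi\|\,\big|\le C\alpha^2$; in particular, wherever $\|\xi\|\ge\delta>0$ we have $\|\Delta\theta\|\ge\alpha(\delta-C\alpha)$, i.e. $\|\Delta\theta\|$ is of order $\alpha$.

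Second I would bound the $c$-update. By Lemma \ref{l1} the $O(\alpha^2)$ part of $\Delta\theta$ may be dropped before differentiating in $c$, so (with the substitution $c_{-i}=K_ic_i$ used in the paper)
\[
\nabla_{c_1}(\Delta L_1)=-\alpha\big[\nabla_1(L_1+c_1L_2)\cdot\nabla_1L_2+K_1\,\nabla_2(L_1+c_1L_2)\cdot\nabla_2L_1\big],
\]
and symmetrically for $\nabla_{c_2}(\Delta L_2)$. Each bracket is a fixed bilinear form in the bounded gradient vectors $\nabla_jL_i$ with bounded coefficients ($c_i$, and $K_i$ bounded by the weighted least-squares recursion~\eqref{calculate-K}), so $\|\nabla_{c_i}(\Delta L_i)\|\le M\alpha$ for some constant $M$; since $\Delta c_i=-\beta\,\nabla_{c_i}(\Delta L_i)$ this gives $\|\Delta c\|\le M\alpha\beta$. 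Combining the two bounds, at any step with $\|\xi\|\ge\delta$,
\[
\frac{\|\Delta c\|}{\|\Delta\theta\|}\le\frac{M\alpha\beta}{\alpha(\delta-C\alpha)}=\frac{M\beta}{\delta-C\alpha},
\]
which for $\alpha\le\delta/(2C)$ is at most $2M\beta/\delta$, i.e. of order $\beta$; letting $\beta\to0^+$ (with $\alpha$ already fixed small) makes it negligible, which is the asserted $\Delta c/\Delta\theta\ll0$ (read as: the $c$-update is of vanishing order relative to the $\theta$-update).

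The step I expect to be the main obstacle is the behaviour of the denominator near a fixed point: as $\|\xi\|\to0$ the bound $M\beta/(\delta-C\alpha)$ degenerates, so the clean statement really needs $\|\xi\|$ bounded below along the not-yet-converged trajectory — near convergence both increments vanish and the honest reading is a two-timescale one, $c$ on the slow scale and $\theta$ on the fast scale (one can note that at a fixed point of the modified game $\nabla_i(L_i+c_iL_{-i})=0$, which kills the first bracket term and further suppresses the $c$-update). Establishing the uniform bounds $C$ and $M$ — i.e. the implicit standing assumption that the iterates stay in a bounded set, as is customary for such local methods — is the other point to pin down.
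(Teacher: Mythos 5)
Your proof is correct and follows essentially the same route as the paper's: both cancel the common factor of $\alpha$ between $\Delta c_i=-\beta\,\nabla_{c_i}(\Delta L_i)=O(\alpha\beta)$ and $\Delta\theta_i=-\alpha\xi_p=\Theta(\alpha)$ to conclude that the ratio is of order $\beta$ and hence negligible for small $\beta$. Your extra care with uniform bounds and with the degeneracy of the denominator as $\|\xi\|\to 0$ tightens a point the paper's proof silently glosses over, but the underlying argument is the same.
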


        \begin{proof}
        Based on  Lemma \ref{l1} and the given formula \ref{calculate-K}, the gradients of the loss difference with respect to the preference parameters $c_1$ and $c_2$ are given by:
        \begin{align*}
            &\nabla_{c_1}(\Delta L_1)=\nabla_1(L_1 + c_1L_2)\cdot(-\alpha \cdot \nabla_1L_2) + \nabla_2 (L_1 + c_1L_2)\cdot(-\alpha\cdot K_1\cdot \nabla_2L_1),
            \\&\nabla_{c_2}(\Delta L_2)=\nabla_1 (L_2 + c_2L_1)\cdot(-\alpha \cdot K_2 \cdot \nabla_1L_2) + \nabla_2 (L_2 + c_2L_1)\cdot(-\alpha \cdot \nabla_2L_1).
        \end{align*}
            
            The change in the preference parameter $c$ at each step is then:
            \begin{align*}
                &\Delta c_1=-\beta \cdot \nabla_{c_1}(\Delta L_1),\\
                &\Delta c_2 = -\beta \cdot \nabla_{c_2}(\Delta L_2).
            \end{align*}

            Given that $\alpha >0,\beta>0$ are sufficiently small, we compare the magnitude of the change in the preference parameter $c$ with the parameter $\theta$ at each step. The ratio of $\Delta c_1$ to $\Delta \theta_1$ can be expressed as:
            
            \begin{align*}
                \cfrac{\Delta c_1}{\Delta \theta_1}&=\cfrac{-\beta[\nabla_1 (L_1 + c_1L_2)\cdot(-\alpha\nabla_1L_2) + \nabla_2(L_1 + c_1L_2)\cdot(-\alpha K_1\nabla_2L_1)] }{-\alpha\nabla_1(L_1+c_1L_2)+\alpha^2\nabla_{12}(L_1+c_1L_2)\cdot \nabla_2(L_2+c_2L_1)+\alpha^2p\nabla_{21}(L_2+c_2L_1)\cdot \nabla_2(L_1+c_1L_2)}
                \\&=\cfrac{\beta[\nabla_1 (L_1 + c_1L_2)\cdot\nabla_1L_2 + \nabla_2 (L_1 + c_1L_2)\cdot(K_1\nabla_2L_1)] }{-\nabla_1(L_1+c_1L_2)+\alpha\nabla_{12}(L_1+c_1L_2)\cdot \nabla_2(L_2+c_2L_1)+\alpha p\nabla_{21}(L_2+c_2L_1)\cdot \nabla_2(L_1+c_1L_2)}
                \\&=o(\beta).
            \end{align*}
            Thus, if $\alpha >0,\beta>0$ are sufficiently small, $\cfrac{\Delta c_1}{\Delta \theta_1}=o(\beta)\ll 0$. Similarly, $\cfrac{\Delta c_2}{\Delta \theta_2}=o(\beta)\ll 0$ under the same conditions. This completes the proof.
        \end{proof}
        \begin{corollary}
            Given Lemma \ref{beta<<0}, if $\alpha >0$ and $\beta>0$ are sufficiently small, the parameter $\theta$ can be considered to be approximately converged during the update of the preference parameter $c$.
            \label{ignore c}
        \end{corollary}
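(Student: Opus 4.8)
The plan is to derive Corollary \ref{ignore c} as an immediate consequence of Lemma \ref{beta<<0}, treating it essentially as an interpretive restatement of the magnitude comparison already established. First I would recall the conclusion of Lemma \ref{beta<<0}: under the assumption that $\alpha>0$ and $\beta>0$ are sufficiently small, we have $\Delta c_i/\Delta \theta_i = o(\beta) \ll 0$ for $i=1,2$, where $\Delta c_i$ and $\Delta \theta_i$ denote the per-step changes in the preference parameter and strategy parameter respectively. The key observation is that this ratio being negligible means that, over any window of iterations during which $c$ is being meaningfully updated, the accumulated change in $\theta$ dwarfs the accumulated change in $c$ — equivalently, on the timescale over which $c$ evolves appreciably, $\theta$ has already settled. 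I would phrase this as a two-timescale separation argument: the $\theta$-update operates on a fast timescale (step size effectively $\alpha$) and the $c$-update on a slow timescale (step size effectively $\alpha\beta$ after the chain-rule factor through $\Delta\theta$), so that from the viewpoint of the slow $c$-dynamics the fast variable $\theta$ may be regarded as having converged to its (quasi-)stationary value.

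The steps, in order, would be: (i) invoke Lemma \ref{beta<<0} to write $\Delta c_i = o(\beta)\,\Delta\theta_i$; (ii) sum over a block of $N$ consecutive steps to get $\sum_{t} \Delta c_{i,t} = o(\beta)\sum_t \Delta\theta_{i,t}$, so that the total drift in $c$ over that block is a $o(\beta)$-fraction of the total drift in $\theta$; (iii) choose the block length $N$ so that $c$ changes by an $O(1)$ amount — this forces the corresponding $\theta$-drift to be $O(1/\beta)$ times larger, i.e. $\theta$ has had time to traverse its update trajectory and, under the SOS convergence guarantees invoked earlier for the modified losses, to reach a neighborhood of a fixed point; (iv) conclude that when one analyzes the $c$-update one may freeze $\theta$ at its converged value, since its residual motion during the relevant $c$-window is negligible. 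I would keep this brief, noting that it is the standard singular-perturbation / two-timescale stochastic approximation heuristic (as in Borkar's framework) specialized to the deterministic setting here.

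The main obstacle — and the place where the argument is genuinely a heuristic rather than a theorem — is justifying that $\theta$ actually \emph{converges} (as opposed to merely moving slowly relative to $c$) during the $c$-window. Lemma \ref{beta<<0} only gives the \emph{relative} rates; to assert $\theta$ is "approximately converged" one additionally needs that the $\theta$-dynamics under the SOS update on the modified losses $L_1', L_2'$ are themselves convergent near the relevant operating point, which is exactly the SFP-preservation property of SOS established in \cite{letcher2018stable}. I would therefore state the corollary's conclusion as holding \emph{under the additional standing assumption} (inherited from the SOS analysis) that the modified game's SOS dynamics admit a stable fixed point that $\theta$ is approaching, and flag that without such a premise the conclusion is only that $\theta$ moves on a strictly faster timescale than $c$. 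A secondary, more minor subtlety is that the ratio bound in Lemma \ref{beta<<0} is pointwise in the denominator $\nabla_1(L_1+c_1L_2)$, which degenerates exactly at a fixed point; I would remark that this is consistent with the picture — near convergence both $\Delta\theta$ and $\Delta c$ shrink, and the claim is about the regime before that, where the ratio is controlled by $\beta$ and the bounded Hessian/gradient terms.
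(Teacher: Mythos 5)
Your proposal is correct and matches the paper's intent: the paper supplies no explicit proof for this corollary, treating it as an immediate interpretive consequence of Lemma \ref{beta<<0}, and only later (in the proof of the main convergence theorem) combines it with the SOS local-convergence result of \cite{letcher2018stable} exactly as you do in your step (iii). Your caveat that the relative-rate bound alone does not yield convergence of $\theta$ without the SOS stable-fixed-point guarantee is apt and, if anything, makes the argument more honest than the paper's unproved statement.
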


        \begin{lemma}
            If $\theta$ can converge, then the preference parameter $c$ also converges. \label{c converge}
        \end{lemma}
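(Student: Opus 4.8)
The plan is to read the dynamics as a two-time-scale system and to show that the slow variable $c=(c_1,c_2)$ has absolutely summable increments, hence converges. Recall from Lemma \ref{l1} that the update of $c$ is $c_{i,t+1}=c_{i,t}-\beta_t\,\nabla_{c_i}(\Delta L_i)$ with
$$\nabla_{c_1}(\Delta L_1)=-\alpha\,\nabla_1(L_1+c_1L_2)\cdot\nabla_1L_2\;-\;\alpha K_{1}\,\nabla_2(L_1+c_1L_2)\cdot\nabla_2L_1,$$
and symmetrically for $\nabla_{c_2}(\Delta L_2)$, all gradients evaluated at the current $\theta_t$. By hypothesis $\theta_t\to\theta^\ast$, so the simultaneous gradient of the modified game vanishes in the limit, i.e.\ $\nabla_1(L_1+c_1L_2)(\theta_t)\to0$ and $\nabla_2(L_2+c_2L_1)(\theta_t)\to0$; hence the first summand above tends to $0$. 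For the second summand, note that the remaining factors $\nabla_2L_1$, $\nabla_2L_2$ and the multiplier $K_1$ are bounded along the trajectory: the loss gradients because each $L_i$ is twice continuously differentiable and $\theta_t$ lies in a compact set, and $K_1$ because the recursion \ref{calculate-K} clamps $K$ to $1$ whenever $\|S_{1,t}S_{2,t}\|_2\le0.01$ and is otherwise a ratio of discounted sums of $(\Delta c)^2$ and $(\Delta c_1)(\Delta c_2)$, which is bounded once the $\Delta c$'s are.

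Next I would establish that $c$ itself stays bounded. Since $\nabla_{c_i}(\Delta L_i)$ is affine in $c_i$ (through $\nabla(L_i+c_iL_{-i})=\nabla L_i+c_i\nabla L_{-i}$), the step obeys $|\Delta c_{i,t}|\le\beta_t(A+B|c_{i,t}|)$ for constants $A,B$ depending only on the bounded loss gradients near $\theta^\ast$ and on the bound for $K$. Interpreting the paper's progressively decreasing learning rate as a diminishing schedule with $\sum_t\beta_t<\infty$ (e.g.\ $\beta_t=O(t^{-1-\delta})$), a discrete Gronwall inequality yields $|c_{i,t}|\le\big(|c_{i,0}|+A\sum_s\beta_s\big)\prod_s(1+B\beta_s)=:M<\infty$ for all $t$. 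Plugging this back, $|\Delta c_{i,t}|\le\beta_t(A+BM)$, so $\sum_t|\Delta c_{i,t}|\le(A+BM)\sum_t\beta_t<\infty$; therefore $c_{i,t}=c_{i,0}+\sum_{s<t}\Delta c_{i,s}$ is an absolutely convergent series and $\{c_{i,t}\}$ is Cauchy, so $c_{i,t}\to c_i^\ast$. As a by-product $\Delta c_t\to0$, hence $S_{i,t}\to0$ by its recursion, the clamp eventually activates, and $K_t\to(1,1)$, which is consistent with the bound used above.

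The step I expect to be the main obstacle is the boundedness argument, because the per-step control $|\Delta c_{i,t}|\lesssim\beta_t(1+|c_{i,t}|)$ involves the very quantity we are bounding, so it only closes if $\beta_t$ decays fast enough --- this is exactly where the decreasing-learning-rate hypothesis must be made quantitative. A secondary subtlety is that the stable fixed point $\theta^\ast$ to which $\theta$ converges depends on the running value of $c$, so ``$\theta$ converges'' and ``$c$ drifts'' are not a priori independent; I would handle this by invoking Corollary \ref{ignore c} (equivalently Lemma \ref{beta<<0}), which lets us treat $\theta$ as sitting at the $c$-dependent stable fixed point while $c$ evolves on the slow time scale, so that the limiting gradient expressions above are the ones that matter. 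If one instead only assumes $\beta_t\to0$ with $\sum_t\beta_t=\infty$, the argument must be upgraded: one then has to show that the limiting slow field $\dot c_i\propto K_i\,\nabla_2(L_i+c_iL_{-i})\cdot\nabla_2L_i$ drives $c$ to a rest point, e.g.\ by exhibiting it as a descent on a function bounded below, which I would attempt through the structure of the modified losses.
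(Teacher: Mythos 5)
Your argument is internally coherent only under the extra hypothesis $\sum_t\beta_t<\infty$, and that hypothesis is doing essentially all of the work: with summable step sizes and bounded gradients, \emph{any} iteration of the form $c_{t+1}=c_t-\beta_t G_t$ has absolutely summable increments and converges, regardless of the structure of $G_t$ and regardless of whether $\theta$ converges. So the hypothesis of the lemma (``$\theta$ converges'') is not actually used except to bound gradients, and the conclusion says nothing about \emph{where} $c$ converges --- which is the substantive content here, since the subsequent corollary on Maximization of Cooperation needs $c_1c_2\to1$. The paper only states that $\beta$ ``decreases progressively,'' and for the preference parameter to actually reach a meaningful value one would normally want $\sum_t\beta_t=\infty$; in that regime your main argument collapses, as you yourself note, and what you offer in its place (``exhibit the slow field as a descent on a function bounded below'') is exactly the missing step, not a proof of it. That is the genuine gap.

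For contrast, the paper's proof works directly with the reduced $c$-dynamics at the converged $\theta$. Substituting the first-order conditions $\nabla_1(L_1+c_1L_2)=0$ and $\nabla_2(L_2+c_2L_1)=0$ into $\nabla_{c_i}(\Delta L_i)$ collapses the update to
\begin{equation*}
\Delta c_1=\alpha\beta\,(1-c_1c_2)\,K_1\,(\nabla_2L_1)^2,\qquad \Delta c_2=\alpha\beta\,(1-c_1c_2)\,K_2\,(\nabla_1L_2)^2 .
\end{equation*}
From $\Delta c_1\Delta c_2>0$ and the recursion for $K$ one gets $K_i>0$ after enough iterations, so both increments share the sign of $1-c_1c_2$; a case analysis on the signs of $c_1,c_2$ then shows the dynamics push $1-c_1c_2$ monotonically toward $0$ (or stall when all cross-gradients vanish), which gives convergence \emph{and} identifies the limit set $c_1c_2=1$. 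If you want to salvage your two-time-scale framing, the piece you still need to supply is precisely this: a Lyapunov-type argument for the slow field --- for instance that $(1-c_1c_2)^2$ is nonincreasing along the reduced dynamics --- rather than an appeal to summable step sizes.
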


        \begin{proof}
            According to Lemma \ref{beta<<0}, $\theta$ converges before $c$. When $\theta$ approaches to convergence, $\nabla_1(L_1+c_1L_2)=0$ and $\nabla_2(L_2+c_2L_1)=0$ can be approximated.

            Initially, if $\nabla_1L_1=0,\nabla_1L_2=0,\nabla_2L_2=0$ and $\nabla_2L_1=0$, $\theta$ has completely converged, resulting in $\Delta c_1=0$ and $\Delta c_2=0$, indicating algorithm convergence.
            
            Next, consider the scenario where not all gradients are 0. As $\theta$ converges to a stable point, the change in preference parameter $c$ per step is:
            \begin{align*}
                \Delta c_1 &= -\beta \cdot \nabla_{c_1}(\Delta L_1) 
                \\&= -\beta \cdot [\nabla_1(L_1 + c_1L_2)\cdot(-\alpha \cdot \nabla_1L_2) + \nabla_2 (L_1 + c_1L_2)\cdot(-\alpha\cdot K_1\cdot \nabla_2L_1)]
                \\ &=\alpha\beta\cdot \nabla_2(L_1+c_1L_2)\cdot (K_1\cdot \nabla_2L_1)
                \\ &=\alpha\beta\cdot (1-c_1c_2)\cdot K_1 \cdot (\nabla_2L_1)^2.
            \end{align*}
            \begin{align*}
                \Delta c_2 &= -\beta \cdot \nabla_{c_2}(\Delta L_2) 
                \\&= -\beta \cdot [\nabla_1(L_2 + c_2L_1)\cdot(-\alpha \cdot K_2\cdot  \nabla_1L_2) + \nabla_2 (L_2 + c_2L_1)\cdot(-\alpha\cdot \nabla_2L_1)]
                \\ &=\alpha\beta\cdot \nabla_1(L_2+c_2L_1)\cdot (K_2\cdot \nabla_1L_2)
                \\ &=\alpha\beta\cdot (1-c_1c_2)\cdot K_2 \cdot (\nabla_1L_2)^2.
            \end{align*}

            From formula (\ref{calculate-K}), $K_1K_2 = \frac{r^2}{S_1S_2} > 0$. Then, we have:
            \begin{equation}
                \Delta c_1 \Delta c_2=\alpha^2\beta^2\cdot (1-c_1c_2)^2\cdot K_1\cdot K_2 \cdot (\nabla_2L_1)^2\cdot (\nabla_1L_2)^2>0.
                \label{c1c2}
            \end{equation}
            \begin{equation}
                r=\sum\limits_{n=1}^T \gamma^{T-n}\Delta c_{1,n}\Delta c_{2,n}.
                \label{r}
            \end{equation}
            where $\gamma\in [0,1]$ represents the discount factor and $T$ is the learning step.

            From the above formula (\ref{c1c2}) and (\ref{r}), when $T$ is large enough, we have $r >0$. Thus, after a sufficient numbers of iterations, $K_i>0$ for $i=1,2$. Consequently, $\Delta c_1$ and $\Delta c_2$ will have the same sign, leading to either
            \begin{equation}
               \Delta c_1>0,\Delta c_2>0, 1- c_1c_2 >0 \quad \text{or} \quad  \Delta c_1<0,\Delta c_2<0, 1- c_1c_2 <0.
                \label{same sign} 
            \end{equation}

            There are four distinct cases to consider for the convergence of the preference parameters $c_1$ and $c_2$:

            There are four cases to consider.

            (a). $c_i>0$ for $i=1,2$.
            
            If $1-c_1c_2>0$, then $c_1,c_2$ will increase until $1-c_1c_2\leq 0$. If $1 - c_1c_2 > 0$, then $c_1$ and $c_2$ will incrementally increase until $1 - c_1c_2 \leq 0$. Conversely, if $1 - c_1c_2 < 0$, they will decrease until $1 - c_1c_2 \geq 0$. This process ensures convergence when $1 - c_1c_2 = 0$.

            (b). $c_1<0<c_2$ or $c_2<0<c_1$.
            
            $c_1c_2<0\Rightarrow1 - c_1c_2 > 0$, so both $c_1$ and $c_2$ will increase until they become positive, essentially converging towards case (a).

            (c). $c_1<0,c_2<0, 1-c_1c_2>0$.
            In this scenario, increasing both $c_1$ and $c_2$ (where $\Delta c_1 > 0$ for $i=1,2$) from an initial state where $1 - c_1c_2 > 0$ will lead them to conditions resembling either case (a) or (b).

            (d). $c_1<0,c_2<0, 1-c_1c_2<0$.

            Initially, $1 - c_1c_2$ starts positive, indicating that this case evolves from scenario (c). As the number of iterations increases, scenario (c) naturally transitions into either case (a) or (b), precluding the persistence of $c_1 < 0, c_2 < 0, 1 - c_1c_2 < 0$.

            In all scenarios, when $\theta$ reaches a stable point, $c$ converges to a suitable value. Thus, if $\theta$ converges, $c$ also converges.
        \end{proof}

        \begin{theorem}
            SOS converges locally to stable fixed points for $\alpha>0$ sufficiently small. 
        \label{sos converges}
        \end{theorem}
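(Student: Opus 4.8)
The plan is to treat one SOS iteration as the map $F(\theta)=\theta-\alpha\xi_p(\theta)$, with $\xi_p=(I-\alpha H_o)\xi-p\alpha\chi$ and $p=\min\{p_1,p_2\}$ chosen as in Algorithm~\ref{CPBOS}, and to show that a stable fixed point $\theta^{\ast}$ --- a point with $\xi(\theta^{\ast})=0$ whose Hessian $H^{\ast}=H(\theta^{\ast})$ has positive definite symmetric part $S^{\ast}=\tfrac12\big(H^{\ast}+(H^{\ast})^{T}\big)\succ 0$ --- is a locally asymptotically stable fixed point of $F$ for all sufficiently small $\alpha>0$. Local convergence of the SOS trajectory then follows from the standard linearization criterion for discrete dynamical systems: if $F$ is differentiable at a fixed point and the spectral radius of its derivative there is less than $1$, the fixed point is a local attractor. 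This is the route of \cite{letcher2018stable}; the reduced losses $L_i'=L_i+c_iL_{-i}$ are merely one differentiable game to which it applies.

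First I would verify that $\theta^{\ast}$ is a fixed point of $F$, which is exactly where SOS repairs LOLA's failure to preserve fixed points: at $\theta^{\ast}$ we have $\xi=0$, hence $\|\xi\|<b$ and $p_2=\|\xi\|^{2}=0$, so $p=\min\{p_1,p_2\}=0$ because $p_1\ge 0$, and therefore $\xi_p(\theta^{\ast})=(I-\alpha H_o^{\ast})\cdot 0-0=0$, i.e.\ $F(\theta^{\ast})=\theta^{\ast}$. Next I would compute $\nabla F(\theta^{\ast})$. Since the losses are twice continuously differentiable, near $\theta^{\ast}$ one has $\|\xi(\theta)\|=O(\|\theta-\theta^{\ast}\|)$ while $\chi$ is merely bounded; the selection rule still forces $p\le p_2=\|\xi\|^{2}=O(\|\theta-\theta^{\ast}\|^{2})$, so $\alpha p\chi=O(\|\theta-\theta^{\ast}\|^{2})$ and this (possibly non-smooth) term drops out of the first-order expansion of $F$ at $\theta^{\ast}$. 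Differentiating the remaining part $(I-\alpha H_o)\xi$ and using $\xi(\theta^{\ast})=0$ to kill the $(\nabla H_o)[\xi]$ contribution yields $\nabla F(\theta^{\ast})=I-\alpha\,(I-\alpha H_o^{\ast})H^{\ast}$; informally, near a stable fixed point SOS agrees to first order with the LookAhead map.

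The crux is the spectral estimate. Writing $M(\alpha)=(I-\alpha H_o^{\ast})H^{\ast}=H^{\ast}+O(\alpha)$, the eigenvalues of $\nabla F(\theta^{\ast})$ are $1-\alpha\mu$ with $\mu\in\operatorname{spec}(M(\alpha))$. Because $S^{\ast}\succ 0$, every eigenvalue of $H^{\ast}=M(0)$ has real part at least some $m>0$ (if $H^{\ast}v=\lambda v$ then $\mathrm{Re}(\lambda)\,\|v\|^{2}=v^{\ast}S^{\ast}v\ge m\|v\|^{2}$), and since the spectrum depends continuously on the matrix there exist $\alpha_1>0$ and $R>0$ with $\mathrm{Re}(\mu)\ge m/2$ and $|\mu|\le R$ for every $\mu\in\operatorname{spec}(M(\alpha))$ whenever $0<\alpha<\alpha_1$. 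Then $|1-\alpha\mu|^{2}\le 1-\alpha\big(m-\alpha R^{2}\big)<1$ for $0<\alpha<\alpha_0:=\min\{\alpha_1,\,m/R^{2}\}$, so $\rho(\nabla F(\theta^{\ast}))<1$ on that range. To conclude, I would pick a vector norm whose induced matrix norm makes $\|\nabla F(\theta^{\ast})\|\le q$ for some $q<1$ and combine it with $F(\theta)=\theta^{\ast}+\nabla F(\theta^{\ast})(\theta-\theta^{\ast})+o(\|\theta-\theta^{\ast}\|)$ to get that $F$ is a local contraction towards $\theta^{\ast}$; the SOS iterates started near $\theta^{\ast}$ then converge geometrically to $\theta^{\ast}$. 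I expect the main obstacle to be this spectral step together with the Jacobian bookkeeping --- certifying that the bounded but generically nonzero $\chi$, weighted by the state-dependent and non-differentiable $p$, decays like $\|\theta-\theta^{\ast}\|^{2}$ so that it does not disturb the linearization, and then controlling the spectrum of the non-symmetric product $(I-\alpha H_o^{\ast})H^{\ast}$ by perturbing off $S^{\ast}\succ 0$; that perturbation is exactly what "$\alpha$ sufficiently small" buys.
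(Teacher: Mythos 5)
The paper does not actually prove this theorem---its ``proof'' is a one-line citation to Letcher et al.\ \cite{letcher2018stable}---so what you have done is reconstruct the argument of the cited work, and your reconstruction has the right architecture: fixed-point preservation via $p\le p_2=\|\xi\|^2$ so that the $-p\alpha\chi$ term is $O(\|\theta-\theta^*\|^2)$ and drops out of the linearization, the identification $\nabla F(\theta^*)=I-\alpha(I-\alpha H_o^*)H^*$ (i.e.\ SOS agrees with LookAhead to first order at an SFP), a spectral-radius bound, and Ostrowski-style local contraction in an adapted norm. All of those steps are sound as written.

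The one substantive caveat concerns your standing hypothesis $S^*=\tfrac12(H^*+(H^*)^T)\succ 0$. In \cite{letcher2018stable} a stable fixed point is defined by $S^*\succeq 0$ together with invertibility of $H^*$, and the theorem is proved under that weaker hypothesis. Your spectral step genuinely needs strictness: you bound $\mathrm{Re}(\mu)\ge m>0$ for eigenvalues of $H^*$ and then treat $-\alpha H_o^*H^*$ as a small perturbation, which fails when $m=0$. The semi-definite boundary case is not exotic here---at the mixed equilibrium of Matching Pennies (one of the games in this paper) the Hessian is purely antisymmetric, so $S^*=0$ and your argument gives no conclusion, yet the theorem is invoked there. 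The missing ingredient is the lemma in \cite{letcher2018stable} that for $H$ invertible with $S\succeq 0$, the matrix $(I-\alpha H_o)H$ is positive stable for all sufficiently small $\alpha>0$: in the degenerate directions the $-\alpha H_oH$ correction is not a nuisance perturbation but the mechanism that pushes imaginary-axis eigenvalues into the open right half-plane (e.g.\ for $H=\bigl(\begin{smallmatrix}0&1\\-1&0\end{smallmatrix}\bigr)$ one gets $(I-\alpha H_o)H=\alpha I+H$ with eigenvalues $\alpha\pm i$). If you add that lemma, or explicitly restrict the statement to fixed points with $S^*\succ 0$, your proof is complete.
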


        \begin{proof}
            This theorem has already been proved in reference \cite{letcher2018stable}.
        \end{proof}

        \begin{theorem}
            The PBOS algorithm converges for sufficiently small $\alpha>0$ and $\beta >0$.
        \end{theorem}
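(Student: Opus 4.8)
The plan is to combine the two convergence facts already established in the excerpt into a two-timescale argument. By Lemma \ref{beta<<0} and Corollary \ref{ignore c}, the ratio $\Delta c / \Delta \theta$ is $o(\beta)$, so when $\alpha>0$ and $\beta>0$ are both sufficiently small the $\theta$-update runs on a fast timescale while the $c$-update runs on a slow one. From the point of view of the $\theta$-dynamics, the preference parameters $c_1,c_2$ are essentially frozen; for each fixed $(c_1,c_2)$ the modified game with losses $L_1'=L_1+c_1L_2$, $L_2'=L_2+c_2L_1$ is still a differentiable game in the sense of Definition \ref{dm}, and the $\theta$-update in Algorithm \ref{PBOS} is exactly the SOS update on that game. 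Hence Theorem \ref{sos converges} applies: for $\alpha$ small enough, $\theta$ converges locally to a stable fixed point of the $(c_1,c_2)$-modified game, i.e.\ a point where $\nabla_1(L_1+c_1L_2)=0$ and $\nabla_2(L_2+c_2L_1)=0$.

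With $\theta$ (approximately) converged on the fast timescale, I would then invoke Lemma \ref{c converge}: once $\nabla_1(L_1'+)\approx 0$ and $\nabla_2(L_2'+)\approx 0$, the analysis there shows the $c$-update reduces to $\Delta c_i = \alpha\beta(1-c_1c_2)K_i(\nabla_{-i}L_i)^2$, that $\Delta c_1$ and $\Delta c_2$ share a sign (because $K_1K_2=r^2/(S_1S_2)>0$ and eventually $r>0$), and that the four sign cases for $(c_1,c_2)$ all funnel into the invariant relation $1-c_1c_2=0$, at which point $\Delta c_1=\Delta c_2=0$. So $c$ converges. Putting the two pieces together: pick $\alpha$ small enough for the SOS conclusion of Theorem \ref{sos converges}, then pick $\beta$ small enough relative to $\alpha$ so that the $o(\beta)$ bound of Lemma \ref{beta<<0} makes the separation of timescales valid; under these choices both $\theta$ and $c$ converge, hence the coupled PBOS iterate $(\theta,c_1,c_2)$ converges.

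I would present this as three steps in order: (i) for fixed $c$, identify the $\theta$-update as SOS on the modified game and apply Theorem \ref{sos converges} to get local convergence of $\theta$; (ii) use Lemma \ref{beta<<0}/Corollary \ref{ignore c} to justify treating $\theta$ as converged while $c$ updates, so that the hypothesis "$\theta$ can converge" of Lemma \ref{c converge} is met; (iii) apply Lemma \ref{c converge} to conclude $c$ converges, and note that once $c$ has settled the modified game stops changing, so $\theta$ remains at its stable fixed point — no re-perturbation occurs. Finally I would remark that the fixed point reached is a stable fixed point of the limiting modified game, which is the desired convergence statement.

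The main obstacle is rigor in the two-timescale coupling: the excerpt's Lemmas give a heuristic $\Delta c/\Delta\theta = o(\beta) \ll 0$ rather than a clean stochastic-approximation or singular-perturbation statement, and there is circularity to manage — Lemma \ref{c converge} assumes "$\theta$ can converge," while the $\theta$-convergence via Theorem \ref{sos converges} implicitly assumes $c$ is (nearly) constant. I would handle this by making the dependence explicit: first fix an arbitrary $c$ in a neighborhood and get uniform-in-$c$ local convergence of $\theta$ from Theorem \ref{sos converges} (a stable fixed point of the modified game varies smoothly with $c$ under a mild nondegeneracy assumption on the Hessian), then feed that back into Lemma \ref{c converge}; the $o(\beta)$ estimate guarantees $\theta$ tracks its (slowly moving) target. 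I would flag the smooth-dependence / nondegeneracy point as the one assumption that strictly speaking needs to be added, and otherwise lean on the already-proved Lemmas \ref{l1}, \ref{beta<<0}, \ref{c converge} and Theorem \ref{sos converges} to keep the argument short.
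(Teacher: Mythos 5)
Your proposal follows essentially the same route as the paper's own proof: invoke Lemma \ref{beta<<0} and Corollary \ref{ignore c} to separate timescales, apply Theorem \ref{sos converges} to get convergence of $\theta$ on the modified game, and then apply Lemma \ref{c converge} to conclude that $c$ converges. Your additional remarks on the circularity of the timescale coupling and the need for smooth dependence of the stable fixed point on $c$ are more careful than the paper's two-sentence argument, which simply chains the cited results without addressing those points.
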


        \begin{proof}
            Given that $\alpha>0$ and $\beta > 0$ are sufficiently small, Lemma \ref{beta<<0} establishes that $\frac{\Delta c}{\Delta \theta} \ll 0$, indicating a favorable condition for convergence. Building on this, Corollary \ref{ignore c} and Theorem \ref{sos converges} confirm that for small $\alpha > 0$, the parameter $\theta$ converges due to the local stability properties of the SOS algorithm, as detailed in \cite{letcher2018stable}.
            
            By Lemma \ref{c converge}, which asserts that $c$ converges when $\theta$ converges, we conclude that both $\theta$ and $c$ converge simultaneously. This collective convergence implies the overall convergence of the PBOS algorithm.
        \end{proof}

        \begin{definition}
            (Maximization of Cooperation). 
            Two agents are said to be in a state of Maximization of Cooperation if their objective is to minimize the same loss function, which may differ by a positive constant factor.
            \label{max cooperation}
        \end{definition}
        \begin{example}
            \emph{Consider two agents with loss functions $L_1$ and $L_2$. If these loss functions satisfy the relationship $L_1=c\cdot L_2$, where $c>0$ and $c$ is a real number $c\in \mathbb{R}$, then the two agents have achieved a state of} Maximization of Cooperation.

            \emph{Each agent's goal is to minimize their respective loss function. Since these loss functions are proportional to each other, optimizing one agent's objective inherently optimizes the other's as well, leading to a cooperative outcome.}
        \end{example}

        \begin{corollary}
            If the PBOS algorithm converges to $c_1c_2=1$, then both agents achieve a state of Maximization of Cooperation.
        \end{corollary}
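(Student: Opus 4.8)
The plan is to show directly that when $c_1c_2=1$ the two modified loss functions $L_1'=L_1+c_1L_2$ and $L_2'=L_2+c_2L_1$ — the objectives the agents actually optimize under PBOS — are scalar multiples of one another by a \emph{positive} constant, so that Definition \ref{max cooperation} applies verbatim. First I would use that $c_1c_2=1$ forces $c_2\neq 0$ and substitute $c_1=1/c_2$, then factor $c_2$ out of $L_2'$:
\begin{align*}
L_2' = L_2 + c_2 L_1 = c_2\Big(L_1 + \tfrac{1}{c_2}L_2\Big) = c_2\big(L_1 + c_1 L_2\big) = c_2 L_1'.
\end{align*}
Hence the two effective objectives coincide up to the multiplicative factor $c_2$, and it only remains to pin down the sign of that factor.

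It then remains to verify $c_2>0$ (equivalently $c_1>0$), since $c_1c_2=1$ is also satisfied by pairs such as $c_1=c_2=-1$, which would instead give $L_2'=-L_1'$, i.e.\ complete competition rather than cooperation. Here I would invoke the convergence analysis of Lemma \ref{c converge}: its case breakdown shows that the only configuration in which the preference parameters can settle with $1-c_1c_2=0$ is case (a), where $c_1,c_2>0$, while cases (b), (c) and (d) are shown there to migrate toward case (a) and cannot persist. Therefore a converged state satisfying $c_1c_2=1$ necessarily has $c_1,c_2>0$, and in particular $c_2>0$.

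Combining the two steps, at such a fixed point both agents are minimizing the same loss function up to the positive constant factor $c_2$, which is precisely the condition in Definition \ref{max cooperation}; hence both agents are in a state of Maximization of Cooperation. The genuinely delicate point is the sign argument: the algebraic factorization is immediate, but without the case analysis of Lemma \ref{c converge} one cannot exclude the spurious solution $c_1,c_2<0$ with $c_1c_2=1$, so the main work is in citing — and, if a referee pushes, tightening — that lemma to guarantee positivity of the converged preference parameters.
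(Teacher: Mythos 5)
Your factorization $L_2' = c_2 L_1'$ is exactly the computation the paper uses (the paper writes it as $c_2L_1'=c_2L_1+c_1c_2L_2=L_2+c_2L_1=L_2'$, modulo a typo in the middle term), so the core of your argument matches the published proof. The difference is your second step: the paper stops at ``equivalent up to a constant factor'' and never checks that this factor is \emph{positive}, even though Definition \ref{max cooperation} explicitly requires a positive constant --- and, as you note, $c_1=c_2=-1$ satisfies $c_1c_2=1$ while yielding $L_2'=-L_1'$, which is full competition, not cooperation. Your appeal to the case analysis in Lemma \ref{c converge} (only case (a), with $c_1,c_2>0$, can persist at $1-c_1c_2=0$) is the right way to close this, and it makes your proof strictly more complete than the paper's. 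The only caveat is that the burden then rests on the rigor of Lemma \ref{c converge}'s case analysis, which is itself somewhat informal; but given that the corollary is stated as a consequence of PBOS \emph{converging} to $c_1c_2=1$, citing that lemma for the sign is a legitimate and arguably necessary step that the paper should have included.
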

        \begin{proof}
            Suppose the PBOS algorithm converges such that $c_1c_2=1$. The modified objectives for the two agents are given by:
            \begin{align*}
                &L_1'=L_1+c_1L_2,\\
                &L_2'=L_2+c_2L_1.
            \end{align*}
            Given that $1-c_1c_2=0$, we can derive the following relationship:
            \begin{equation}
                c_2L_1'=c_2L_1+c_1c_2L_2=L_2+c_1L_1=L_2'.
            \end{equation}
            This equation indicates that the objectives of both agents are equivalent up to a constant factor, which aligns with the definition of \emph{Maximization of Cooperation}. 
        \end{proof}
        \begin{remark}
        \emph{The convergence of the PBOS algorithm is not limited to the scenario where $c_1=c_2=1$. The algorithm can converge to values where $c_i \ne 1$ for $i=1,2$ (Fig. \ref{C of IPD-lc}, \ref{C of Ultimatum-lc}, \ref{C of Matching Pennies-lc}, \ref{C of Stackelberg Leader-lc}, \ref{C of Stag Hunt-lc}). Consequently, the final outcome is not merely a straightforward summation of the individual loss functions of the two agents.}
        \end{remark}
        \begin{remark}
        \emph{If the PBOS algorithm converges to a state where $c_1c_1\ne 1$, this indicates that the cooperation between the two agents is not maximized. This situation can arise even when $\theta$ has converged and the gradients $\nabla_1L_1=0,\nabla_1L_2=0,\nabla_2L_2=0$ and $\nabla_2L_1=0$. In such case, the preference parameters $c$ may still converge even if $c_1c_1\ne 1$. This suggests that even in the absence of} Maximization of Cooperation,\emph{ the final outcome of the algorithm can be considered satisfactory.}
        \end{remark}

\section{Experiments}\label{sec5}
  We conducted an empirical evaluation to assess the efficacy of the PBOS algorithm across a spectrum of six distinct differentiable games: Tandem Game, Iterated Prisoner's Dilemma (IPD), Matching Pennies, Ultimatum Game, Stackelberg Leader Game, and Stag Hunt. This comparative analysis juxtaposed PBOS against established baseline algorithms, including LOLA, SOS, and CGD, to evaluate its relative performance in these strategic contexts.
        
        Furthermore, we implemented PBOS alongside the aforementioned baseline algorithms to govern individual agents. Subsequently, these agents were engaged in a quartet of games characterized by symmetrical loss functions. This experimental setup aimed to scrutinize PBOS's effectiveness in fostering both cooperative and competitive outcomes.
        
	\subsection{Tandem Game}

        The Tandem game, introduced by  \cite{letcher2018stable}, is a polynomial game defined on the continuous space $\mathbb{R}^2$. The loss functions for agents 1 and 2 are specified as $L_1(x,y)=(x+y)^2-2x$ and $L_2(x,y)=(x+y)^2-2y$, respectively. This game structure yields symmetric SFPs at ${x+y=1}$, as identified in \cite{letcher2018stable}.
        
        It has been noted in previous research \cite{letcher2018stable, willi2022cola} that LOLA does not sustain these SFPs. Instead, LOLA converges to Pareto-dominated solutions, attributed to a phenomenon where both agents adopt an ``arrogant" strategy.

	\subsection{IPD}

        The Iterated Prisoner's Dilemma (IPD) serves as a paradigmatic model for elucidating the emergence of cooperation within complex dynamical systems \cite{May1981TheEO,Harper2017ReinforcementLP}. In the IPD, the payoffs are typically subject to a discount factor $\gamma\in [0,1]$, reflecting the reduced value of future payoffs relative to immediate ones. In this study, we have selected a discounted factor $\gamma=0.96$ to capture the temporal dynamics of the game. 

        The traditional payoff matrix for the IPD is presented in Table \ref{pm-ipd}, illustrating outcomes for each combination of strategies chosen by the two agents. In our formulation, each agent $i$ is characterized by five parameters, including the probability $P^i(C|\text{state})$ of cooperating given the initial state $s_0=\emptyset$ or any subsequent state $s_t=(a_{t-1}^1,a_{t-1}^2)$ for $t>0$ \cite{letcher2018stable}. These probabilities, contingent on the current game state, play a crucial role in shaping the strategic decisions of the agents throughout the IPD.

        The game features two NEs. One is the always-defect strategy (DD), resulting in a loss of $2$ for both agents. The other NE is the \emph{tit-for-tat} (TFT) strategy, where agents initially cooperate and subsequently mirror the opponent's previous action \cite{letcher2018stable}. TFT incurs a loss of $1$ for each agent, recognized as a simple yet effective strategy \cite{Axelrod1980EffectiveCI}.

	\subsection{Matching Pennies}

        The Matching Pennies game, introduced in \cite{Lee1967TheAO}, is a quintessential example of a zero-sum game. The game's structure is characterized by a payoff matrix that is displayed in Table \ref{pm-mp}.

    In this game, each agent's policy is encapsulated by a single paremeter, which is the probability of choosing the ``heads" option \cite{willi2022cola}. It is a well-established result that the unique NE for this game involves each player choosing their strategy with equal probability, specifically, a probability of 0.5 for each \cite{Budinich2011RepeatedMP}.

	\subsection{Ultimatum Game}
	The single-shot Ultimatum game, extensively studied in various literature  \cite{guth1982experimental,sanfey2003neural,oosterbeek2004cultural,smith2006foundations,Sandoval2015ReciprocityIH,willi2022cola}, exists in numerous variants. In this study, we focus on a version where two players are tasked with dividing ten dollars. Player 1 proposes a split to Player 2, who then decides to accept or reject the offer. Rejection leads to a null outcome for both players, whereas acceptance results in the proposed division of funds. 

    The principle of backward induction suggests that Player 2 will accept any positive offer, as it is preferable to receiving nothing \cite{oosterbeek2004cultural}. Anticipating this, Player 1 may offer the minimal possible amount. However, this study considers two notable solutions: an equitable split of five dollars each or an inequitable division of eight dollars for Player 1 and two dollars for Player 2 \cite{Falk1999OnTN}.

    In this context, Player 1's strategy is parameterized by the log-odds of proposing a fair split, denoted as $p_{fair} = \sigma(\theta_1)$. Similarly, Player 2’s strategy is captured by the log-odds of accepting an inequitable split, given that equitable split is always accepted, represented as  $p_{accept} = \sigma(\theta_2)$ \cite{willi2022cola}. The loss functions of the two players are definited as follows:
    \begin{align*}
        &L_1=-(5P_{fair}+8(1-P_{fair})P_{accept}),
        \\&L_2=-(5P_{fair}+2(1-P_{fair})P_{accept}).
    \end{align*}

    \subsection{Stakelberg Leader Game}
    
    The Stackelberg Leader game, as discussed in \cite{VANDAMME1999105}, is characterized by a payoff matrix depicted in Table \ref{pm-Stackelberg Leader}. This game is known for having a unique NE, which is the strategy profile (D, L). However, this equilibrium does not correspond to the most optimal outcome within the game's framework.

    Traditional analyses often assume heterogeneity among the players in the Stackelberg Leader game: one player acts as the leader, making the initial move, while the other serves as the follower, responding subsequently \cite{hu2023modeling}. Under this assumption, a more advantageous outcome, such as (3, 2), can be achieved.

    In this study, leveraging our algorithm, we demonstrate convergence towards the superior outcome of (3, 2) without relying on the assumption of player heterogeneity, as illustrated in Fig. \ref{Stackelberg Leader-lc}. This discovery indicates that our algorithm effectively navigates the strategic complexities inherent in the Stackelberg Leader game, potentially uncovering cooperative or efficient solution paths that may not be readily apparent under traditional game-theoretic assumptions.
  
    \subsection{Stag Hunt}
    The Stag Hunt game, originally introduced in \cite{rousseau1984inequality}, serves as a classic illustration of cooperation challenges in game theory. It depicts a scenario where two hunters must decide between pursuing a large stag quietly or switching to hunt a hare that suddenly appears.

    The payoff matrix for the Stag Hunt game is presented in Table \ref{pm-Stag Hunt}. This game features two pure strategy NEs: the Stag NE, where both hunters cooperate to hunt the stag, and the Hare NE, where both opt to hunt the hare.
    
    The Stag NE, characterized by mutual cooperation in pursuing the stag, yields a higher collective payoff of 4 and is often viewed as a``risky" strategy \cite{Tang2021Discovering}. This risk stems from the possibility that if one hunter defects and hunts the hare instead, they can still secure a lower but positive payoff of 3, while the other, who continues pursuing the stag, incurs a significant loss of -10.

    In contrast, the Hare NE represents a ``safe" non-cooperative equilibrium where both hunters choose the less lucrative but certain payoff from hunting the hare, resulting in a payoff of 1 for each participant \cite{Tang2021Discovering}. This equilibrium ensures a positive outcome for each hunter individually, irrespective of the other's decision, but at the expense of forgoing the higher collective payoff achievable through mutual cooperation.

        \begin{table}[h]
        \begin{minipage}[t]{0.45\textwidth}
            \caption{IPD}\label{pm-ipd}
	   \begin{tabular*}{0.3\textheight}{@{\extracolsep\fill}ccc}
            \toprule
            & C  & D \\
            \midrule
            C    & (-1,-1)  & (-3,0)  \\
            D    & (0, -3)   & (-2,-2) \\
            \midrule
            \end{tabular*}
        \end{minipage}
        \hfill
        \begin{minipage}[t]{0.45\textwidth}
        \caption{Mathcing Pennies}\label{pm-mp}
        \begin{tabular*}{0.3\textheight}{@{\extracolsep\fill}ccc}
            \toprule
            & H  & T \\
            \midrule
            H    & (1,-1)  & (-1,1)  \\
            T    & (-1,1)  & (1,-1)  \\
            \midrule
            \end{tabular*}
        \end{minipage}

        \begin{minipage}[t]{0.45\textwidth}
            \caption{Stackelberg Leader}\label{pm-Stackelberg Leader}
            \begin{tabular*}{0.3\textheight}{@{\extracolsep\fill}ccc}
            \toprule
            & L  & R \\
            \midrule
            U    & (1,0)  & (3,2)  \\
            D    & (2,1)  & (4,0)  \\
            \midrule
            \end{tabular*}
        \end{minipage}
        \hfill
        \begin{minipage}[t]{0.45\textwidth}
        \caption{Stag Hunt}\label{pm-Stag Hunt}
        \begin{tabular*}{0.3\textheight}{@{\extracolsep\fill}ccc}
            \toprule
            & Stag  & Hare \\
            \midrule
            Stag    & (4,4)  & (-10,3)  \\
            Hare    & (3,-10)  & (1,1)  \\
            \midrule
            \end{tabular*}
        \end{minipage}
    \end{table}

    \section{Results}  \label{sec6} 
    \subsection{Fixed $c$} 
    We have developed an experimental framework to assess the efficacy of the CPBOS algorithm in conjunction with three established baseline algorithms. Our primary objective is to empirically validate the integration of preference parameters into the learning process. The experimental results are depicted in Fig. \ref{tandem}-\ref{Stag Hunt}, presenting a visual comparison of algorithm performance.
        \begin{figure}[h]
            \centering
            \begin{minipage}[b]{0.45\textwidth}
                \centering
                \includegraphics[width=\textwidth]{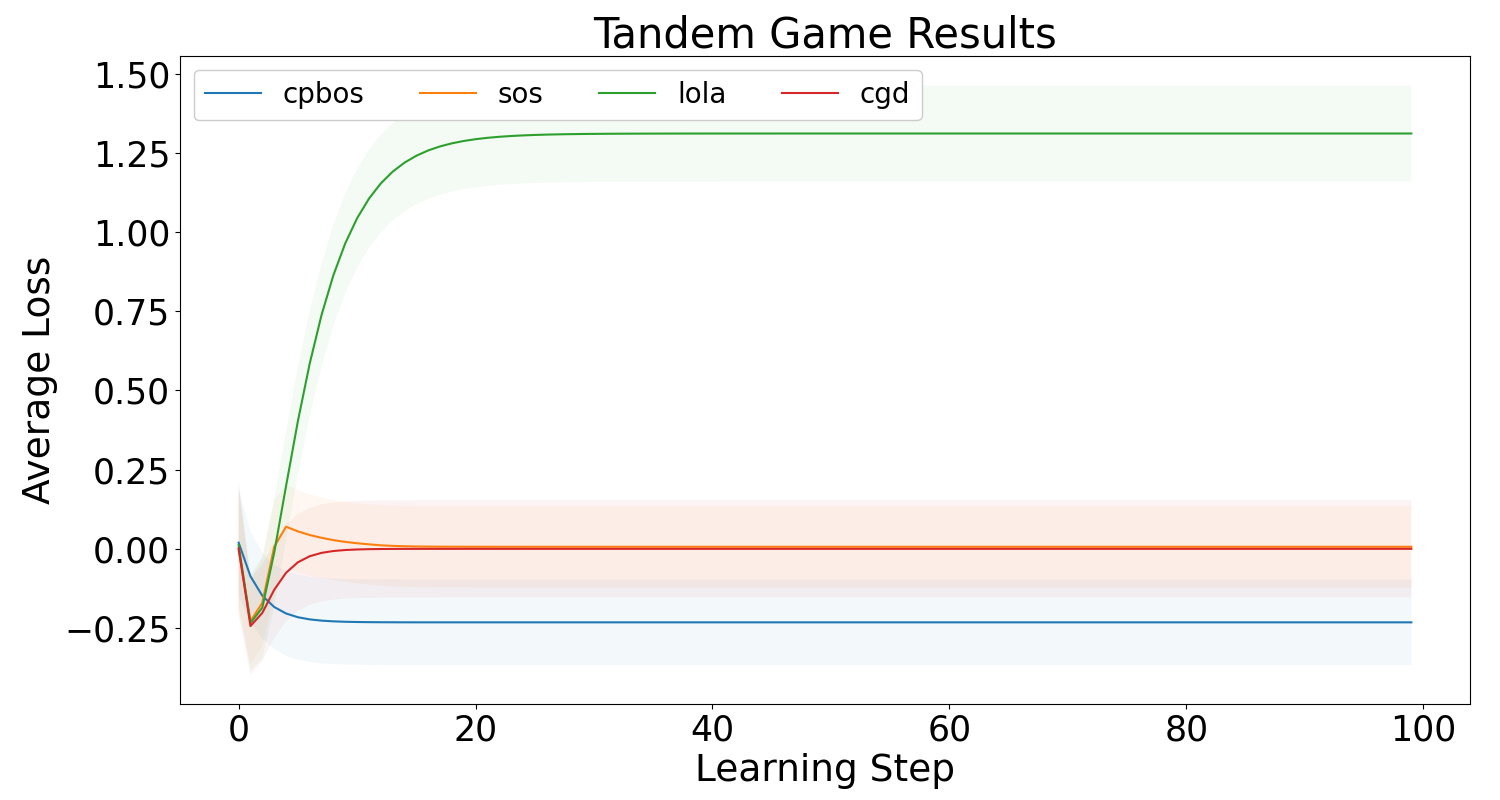}
                \caption{Tandem}
                \label{tandem}
            \end{minipage}
            \hfill
            \begin{minipage}[b]{0.45\textwidth}
                \centering
                \includegraphics[width=\textwidth]{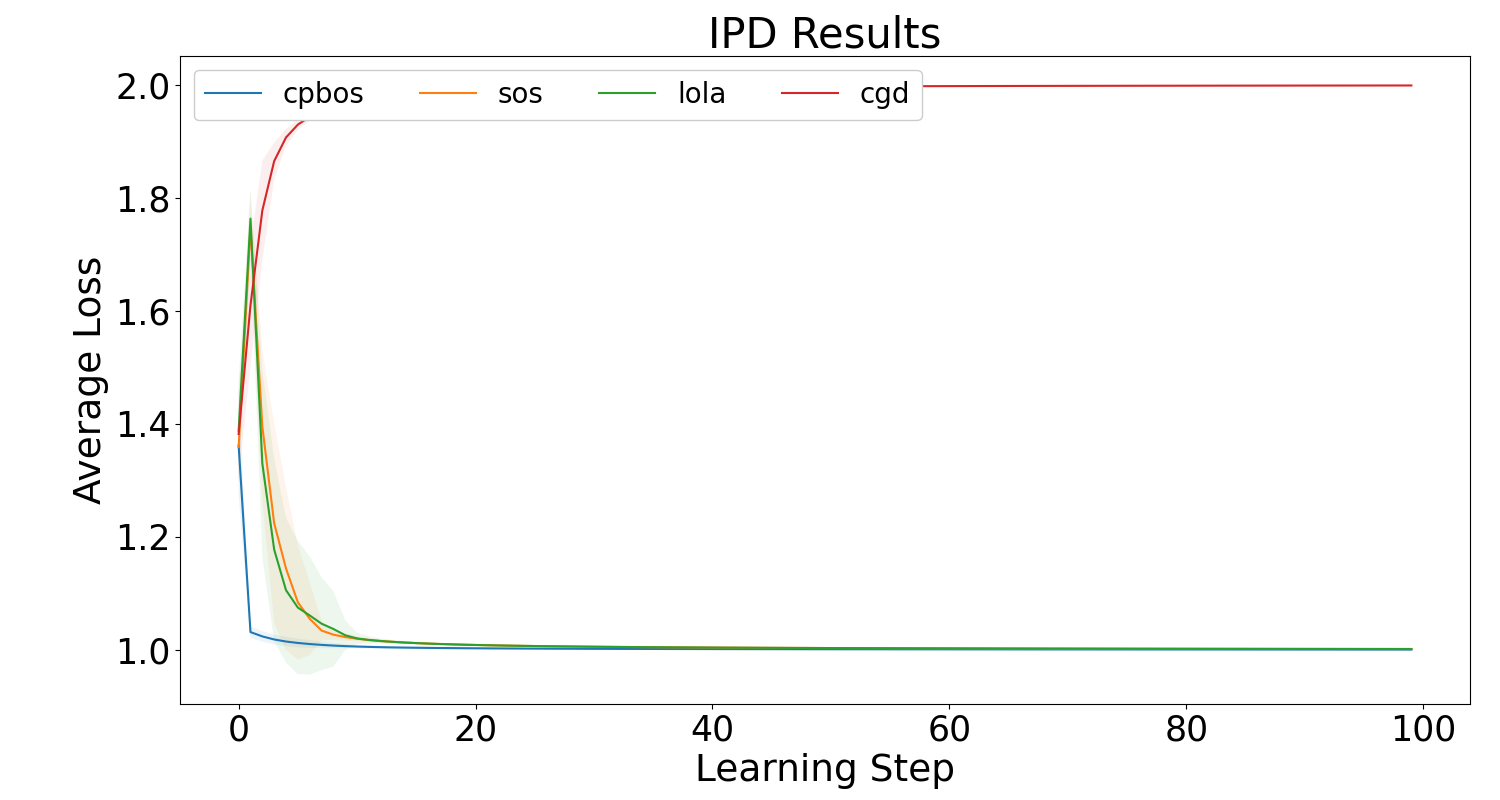}
                \caption{IPD}
                \label{ipd}
            \end{minipage}
            \hfill
            \begin{minipage}[b]{0.45\textwidth}
                \centering
                \includegraphics[width=\textwidth]{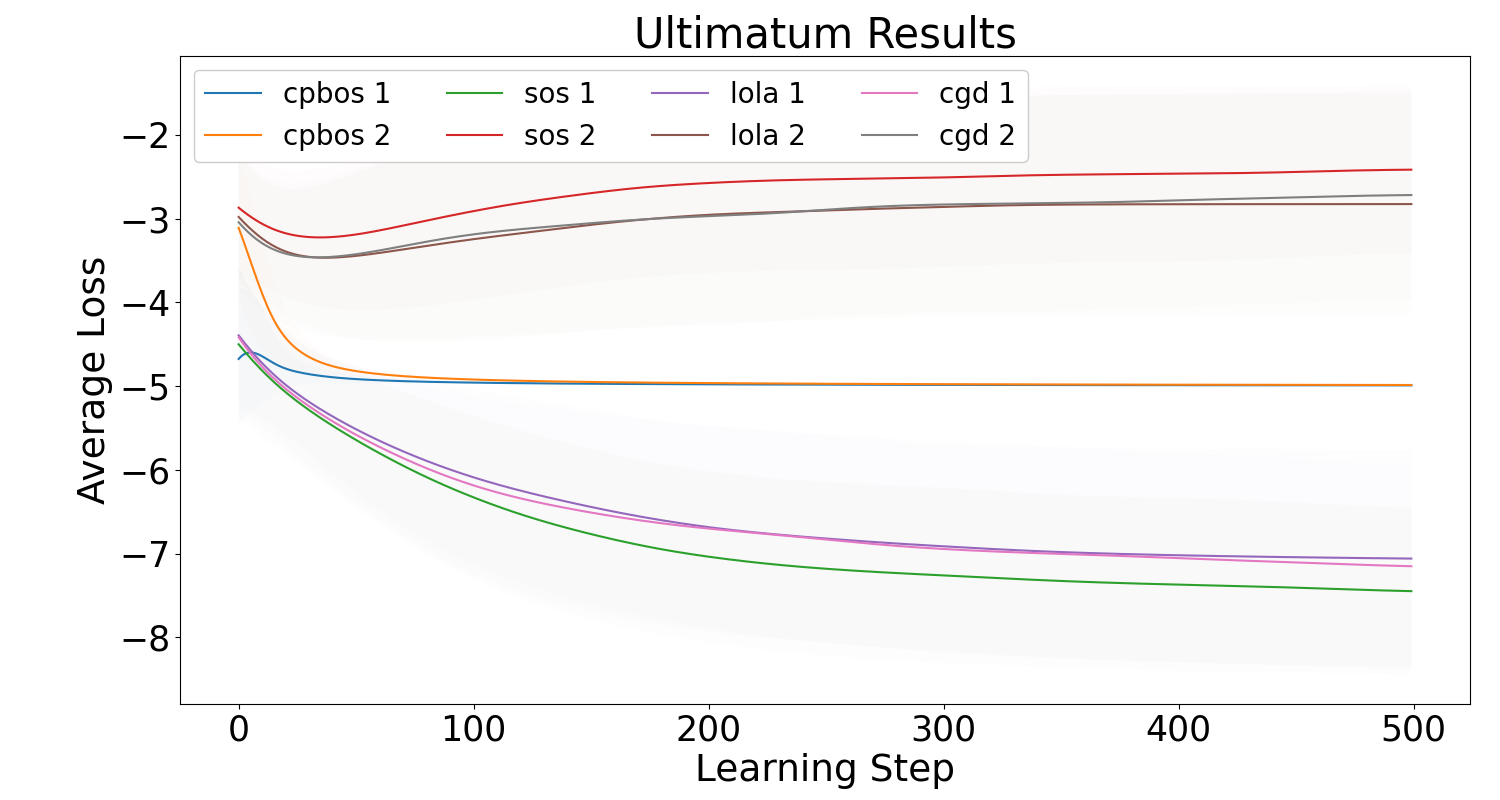}
                \caption{Ultimatum}
                \label{Ultimatum}
            \end{minipage}
            \hfill
            \begin{minipage}[b]{0.45\textwidth}
                \centering
                \includegraphics[width=\textwidth]{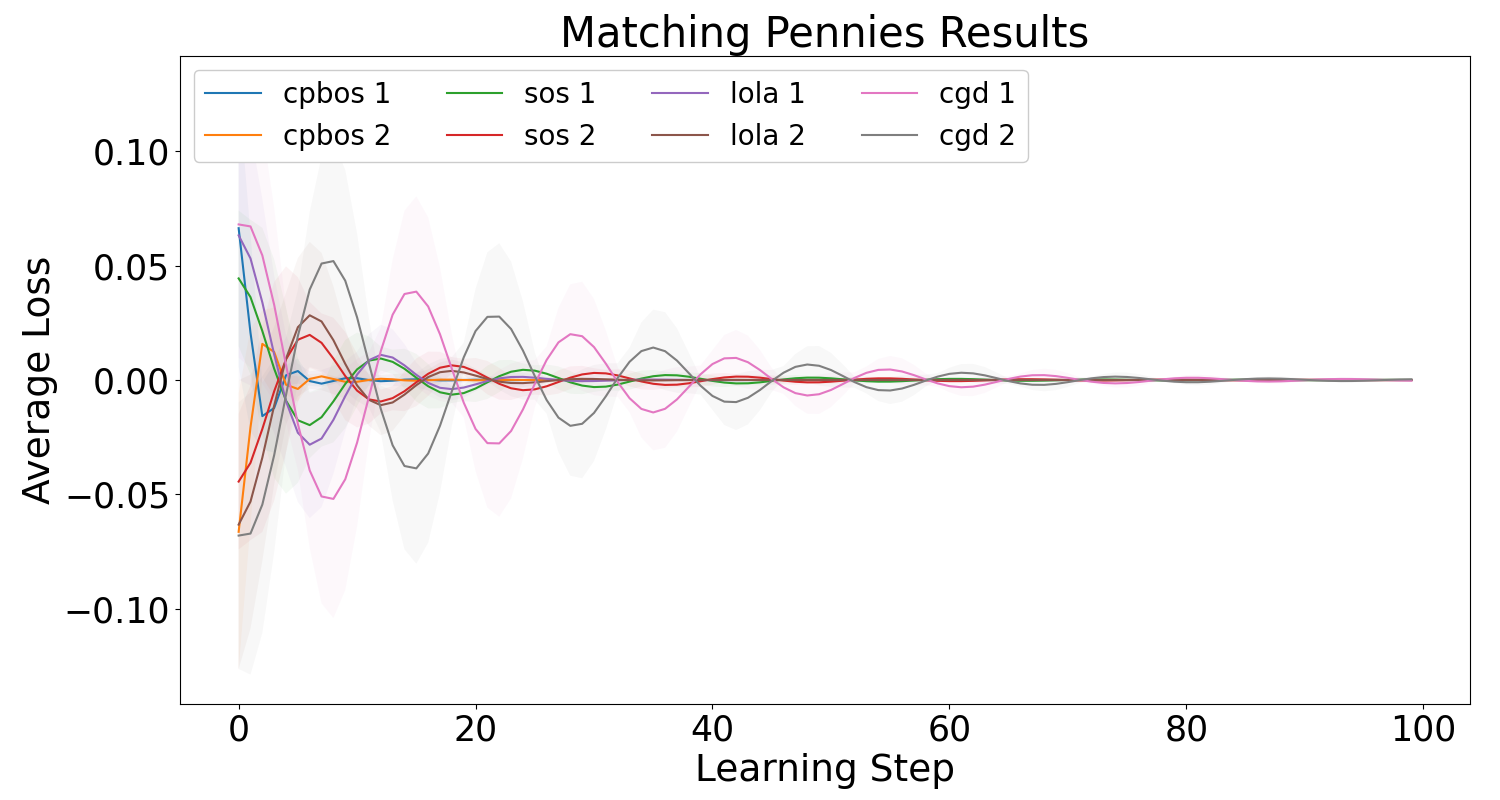}
                \caption{Matching Pennies}
                \label{mp}
            \end{minipage}
            \hfill
            \begin{minipage}[b]{0.45\textwidth}
                \centering
                \includegraphics[width=\textwidth]{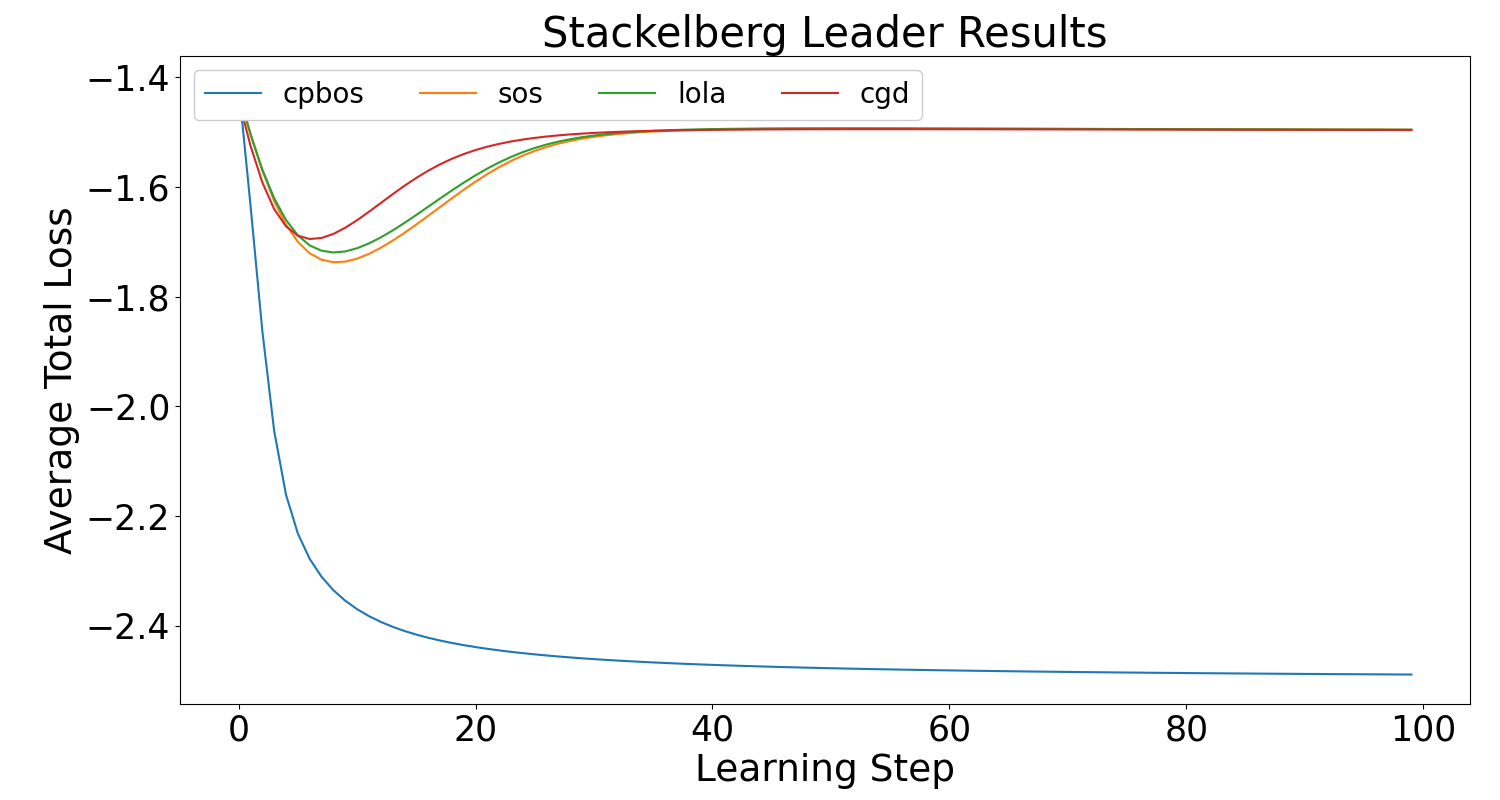}
                \caption{Stackelberg Leader}
                \label{Stackelberg Leader}
            \end{minipage}
            \hfill
            \begin{minipage}[b]{0.45\textwidth}
                \centering
                \includegraphics[width=\textwidth]{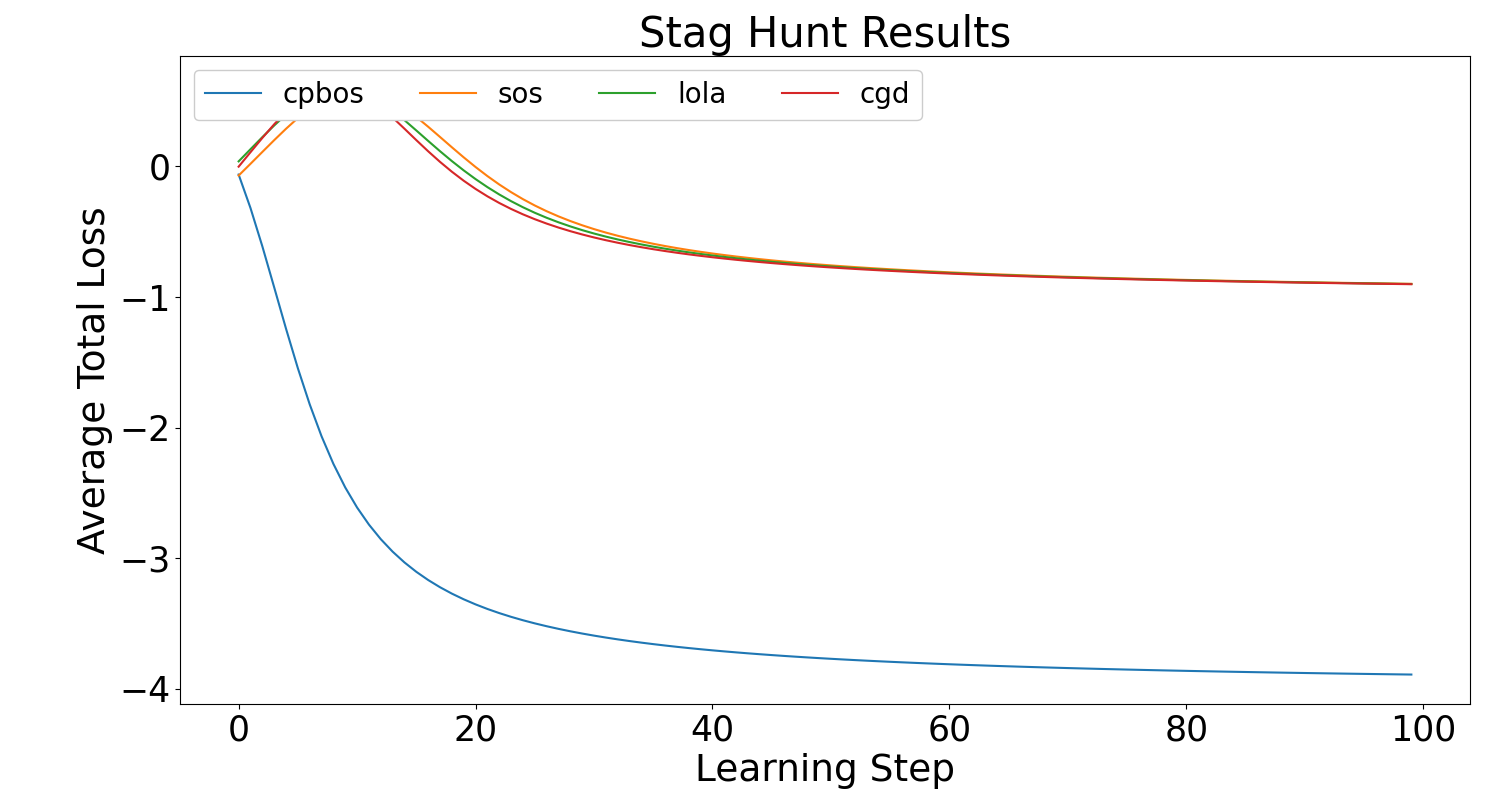}
                \caption{Stag Hunt}
                \label{Stag Hunt}
            \end{minipage}
        \end{figure}

        In the context of the Tandem Game and IPD, our analysis centers exclusively on the loss function of the first agent. Conversely, when studying the Ultimatum Game and Matching Pennies game, we depict the loss functions for both participating agents. For the Stackelberg Leader Game and Stag Hunt Game, our representation involves plotting the average combined loss incurred by both players. From Fig. \ref{tandem}-\ref{Stag Hunt}, we derive the following conclusions.

        (a). In the Tandem Game, IPD, Stackelberg Leader Game, and Stag Hunt Game, we set $c_1=c_2=1.00$, reflecting a cooperative stance adopted by both agents towards their counterparts. This mutual goodwill leads to favorable outcomes for both parties. Specifically, in the Tandem Game, it results in reduced losses for both agents (Fig. \ref{tandem}). In the IPD, the CPBOS algorithm exhibits quicker convergence towards cooperative behavior compared to the three baseline algorithms (Fig. \ref{ipd}). In the Stackelberg Leader Game and Stag Hunt Game, agents utilizing CPBOS identify reward structures that are more advantageous than the traditional NE (Fig. \ref{Stackelberg Leader} and \ref{Stag Hunt}).

        (b). In the Ultimatum Game, the parameters are set as $c_1=1.00$ and $c_2=-1.00$, indicating a cooperative stance by agent 1 and a confrontational stance by agent 2 towards their counterparts. This dynamic results in an equitable division of the ten-dollar pool between the two agents (Fig. \ref{Ultimatum}).

        (c). In the Matching Pennies, the scenario is characterized by $c_1=c_1=-1.00$, with both agents exhibiting hostility towards their opponents. This antagonistic interaction aligns with the zero-sum nature of the game, yet both agents achieve a mixed strategy NE (Fig. \ref{mp}).

        The detailed numerical results of our experiments are systematically presented in Table \ref{table-cc}, providing a comprehensive view of the performance metrics across various games and algorithms.
        \begin{table}[h]
            \caption{Results of CPBOS and three baseline algorithms across six different games.}\label{table-cc}
            \setlength\tabcolsep{1pt}
            \begin{tabular*}{\textwidth}{@{\extracolsep\fill}ccccccc}
            \toprule%
            & Tandem & IPD & Ultimatum & Matching Pennies & Stackelberg Leader & Stag Hunt  \\
            \midrule
           CPBOS \footnotemark[1] & \textbf{(-0.24,-0.26)} & \textbf{(1.00,1.00)} & \textbf{(-5.00,-5.00)} & \textbf{(0.00,0.00)} & \textbf{(-3.00, -2.00)} & \textbf{(-3.89,-3.89)}  \\
           LOLA & (1.31,1.31) & \textbf{(1.00,1.00)} & (-7.06,-2.83) & \textbf{(0.00,0.00)} & (-2.02,-0.97) & (-0.90,-0.90)  \\
           SOS & (-0.02,0.02) & \textbf{(1.00,1.00)} & (-7.45,-2.41) & \textbf{(0.00,0.00)} & (-2.02,-0.97) & (-0.90,-0.90)  \\
           CGD & (-0.02,0.02) & (2.00,2.00) & (-7.15,-2.72) & \textbf{(0.00,0.00)} & (-2.02,-0.98) & (-0.90,-0.90) \\
            \midrule
            \end{tabular*}
        \end{table}
         \footnotetext[1]{Note: In the Fig. \ref{tandem},\ref{ipd},\ref{Stackelberg Leader} and \ref{Stag Hunt}, $c_1=c_2=1.00$. In the Fig. \ref{Ultimatum}, $c_1=1.00,c_2=-1.00$. In the Fig. \ref{mp}, $c_1=c_2=-1.00$. Numbers in brackets denote losses for each agent, with bolded figures indicating optimal algorithmic results.}
        
        Based on these experiments, we conclude that incorporating appropriate preference parameters can enable agents to discover more advantageous outcomes than traditional NE. This finding underscores the potential of preference parameters to enhance cooperation in game-theoretic interactions, leading to more optimal and mutually beneficial solutions, especially in games characterized by complex dynamics and interdependencies.

        \subsection{Learning $c$}
        In this section, we apply opponent shaping to learn the preference parameters $c$. The outcomes are depicted in Fig. \ref{tandem-lc}-\ref{C of Stag Hunt-lc}. This approach allows for adaptive tuning of preference parameters in response to the opponent's actions, thereby enhancing the sophistication and responsiveness of strategy selection.  Fig. \ref{tandem-lc}-\ref{C of Stag Hunt-lc} visually illustrate the efficacy of this learning methodology within the examined game contexts.

        \begin{figure}[htbp]
            \centering
		\begin{minipage}{0.45\linewidth}
			\centering
			\includegraphics[width=1.0\linewidth]{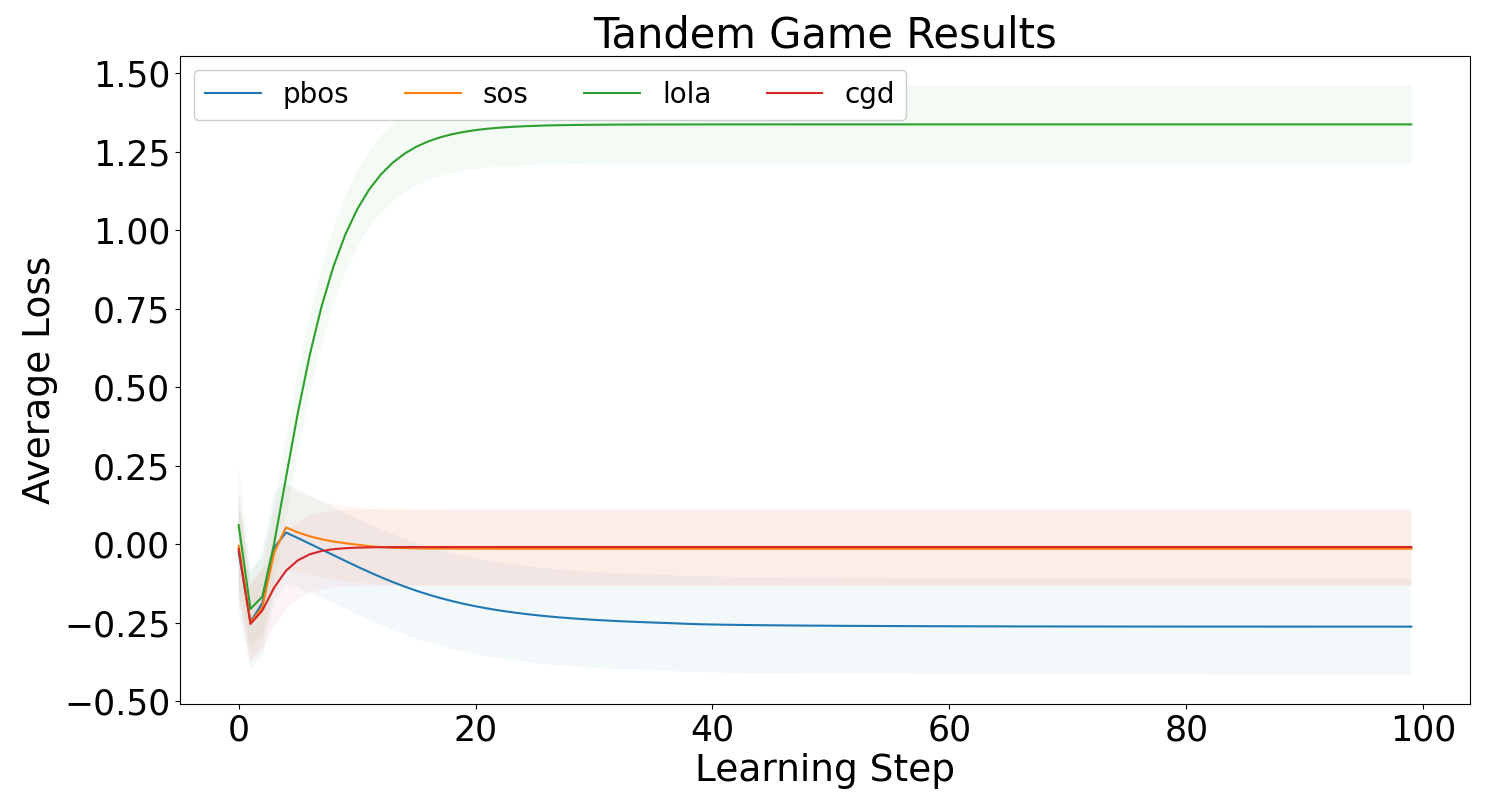}
			\caption{Tandem Game}
			\label{tandem-lc}
		\end{minipage}
		\hfill
		\begin{minipage}{0.45\linewidth}
			\centering
			\includegraphics[width=1.0\linewidth]{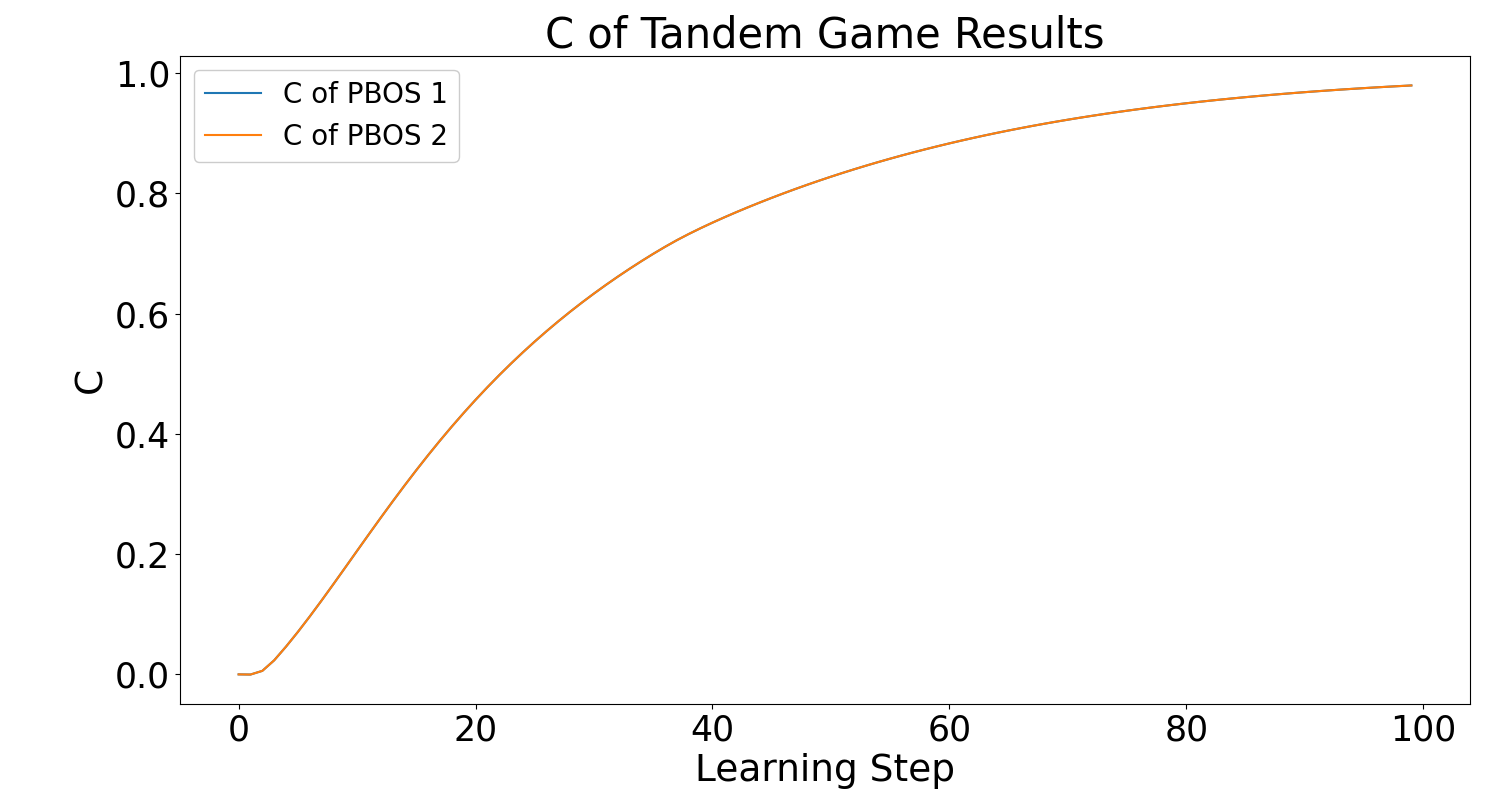}
			\caption{C of Tandem Game}
			\label{C of Tandem-lc}
		\end{minipage}
            \hfill
		\begin{minipage}{0.45\linewidth}
			\centering
			\includegraphics[width=1.0\linewidth]{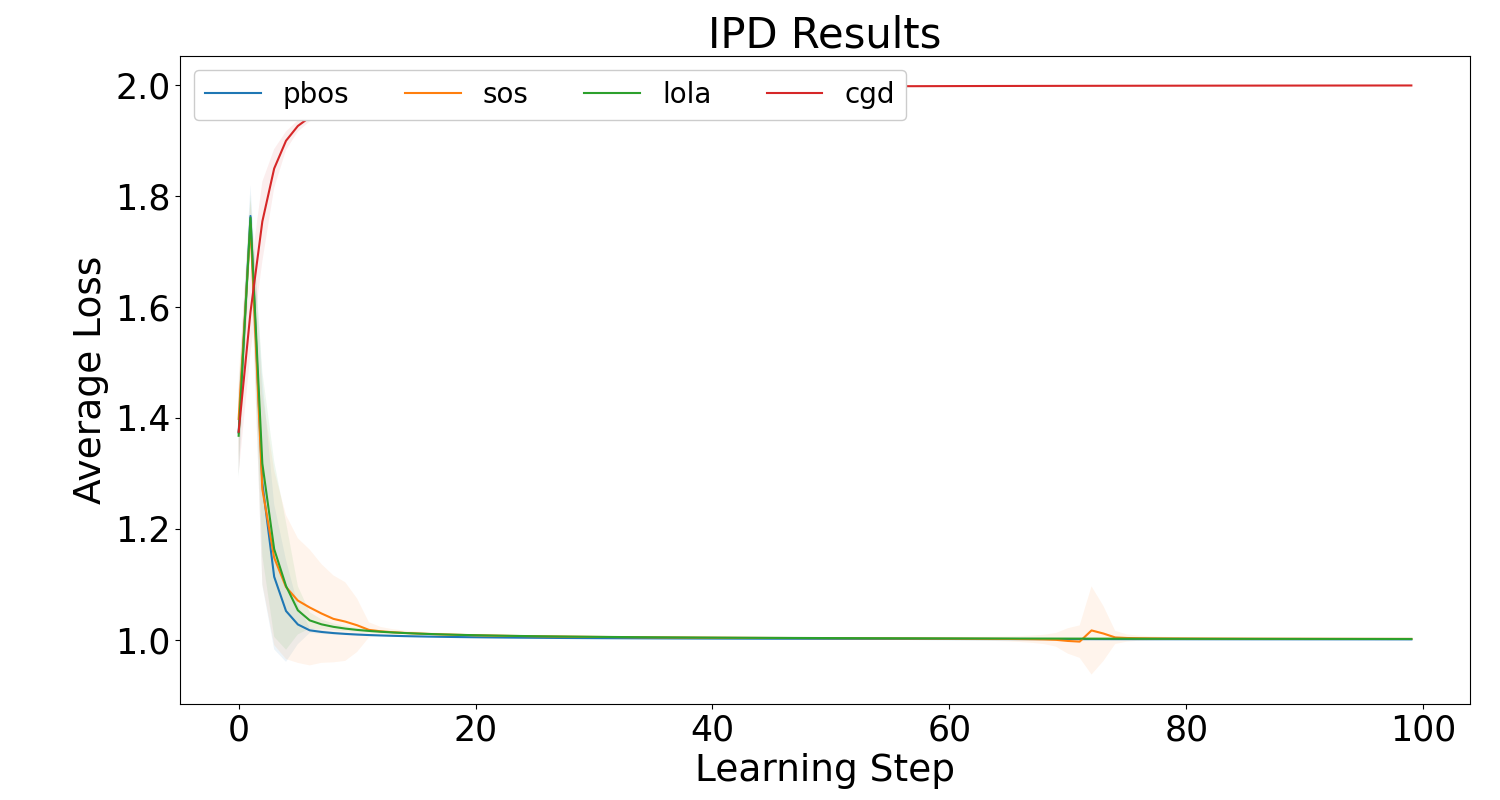}
			\caption{IPD}
			\label{IPD-lc}
		\end{minipage}
		\hfill
		\begin{minipage}{0.45\linewidth}
			\centering
			\includegraphics[width=1.0\linewidth]{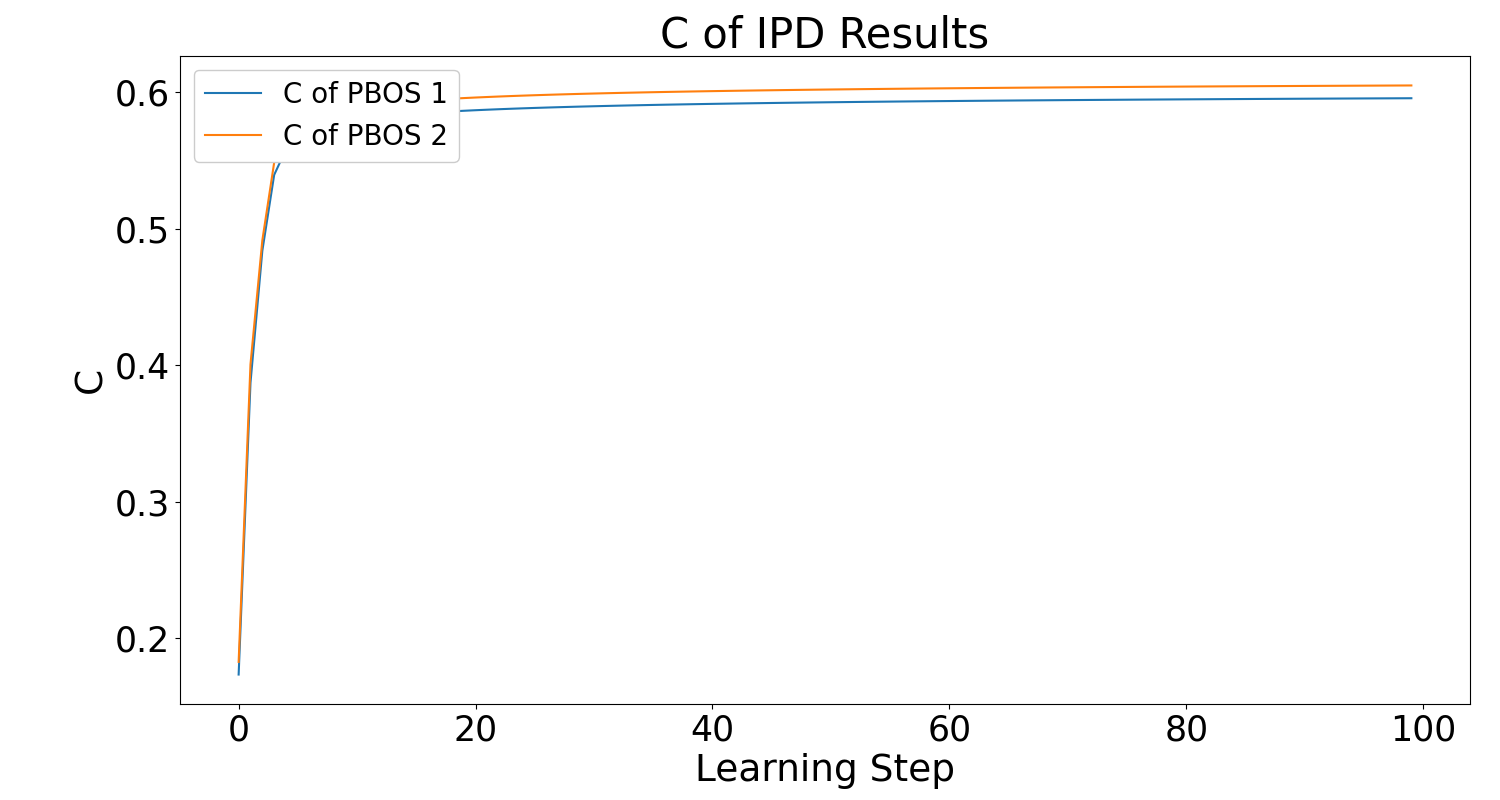}
			\caption{C of IPD}
			\label{C of IPD-lc}
		\end{minipage}
            \hfill
		\begin{minipage}{0.45\linewidth}
			\centering
			\includegraphics[width=1.0\linewidth]{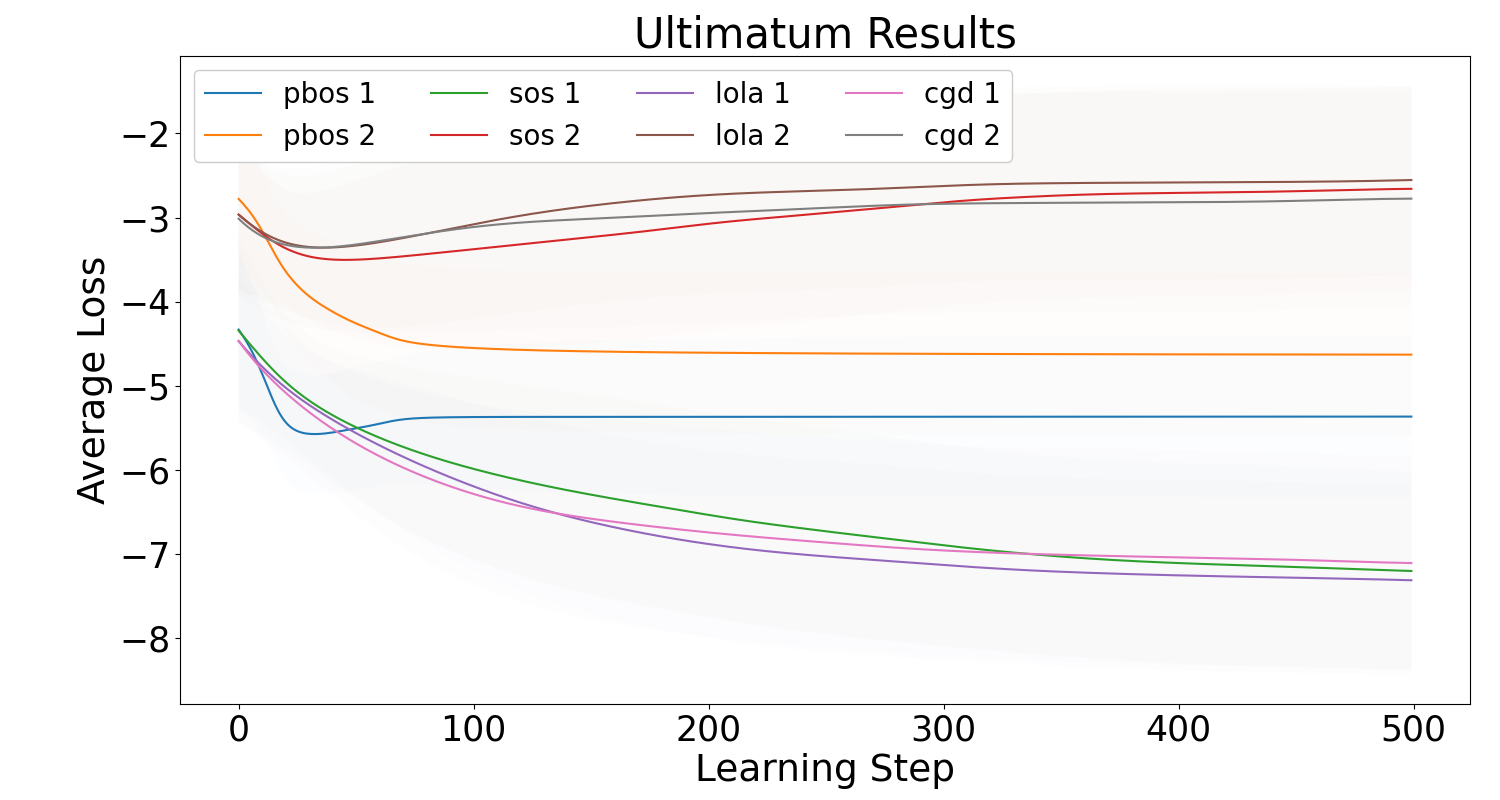}
			\caption{Ultimatum}
			\label{Ultimatum-lc}
		\end{minipage}
		\hfill
		\begin{minipage}{0.45\linewidth}
			\centering
			\includegraphics[width=1.0\linewidth]{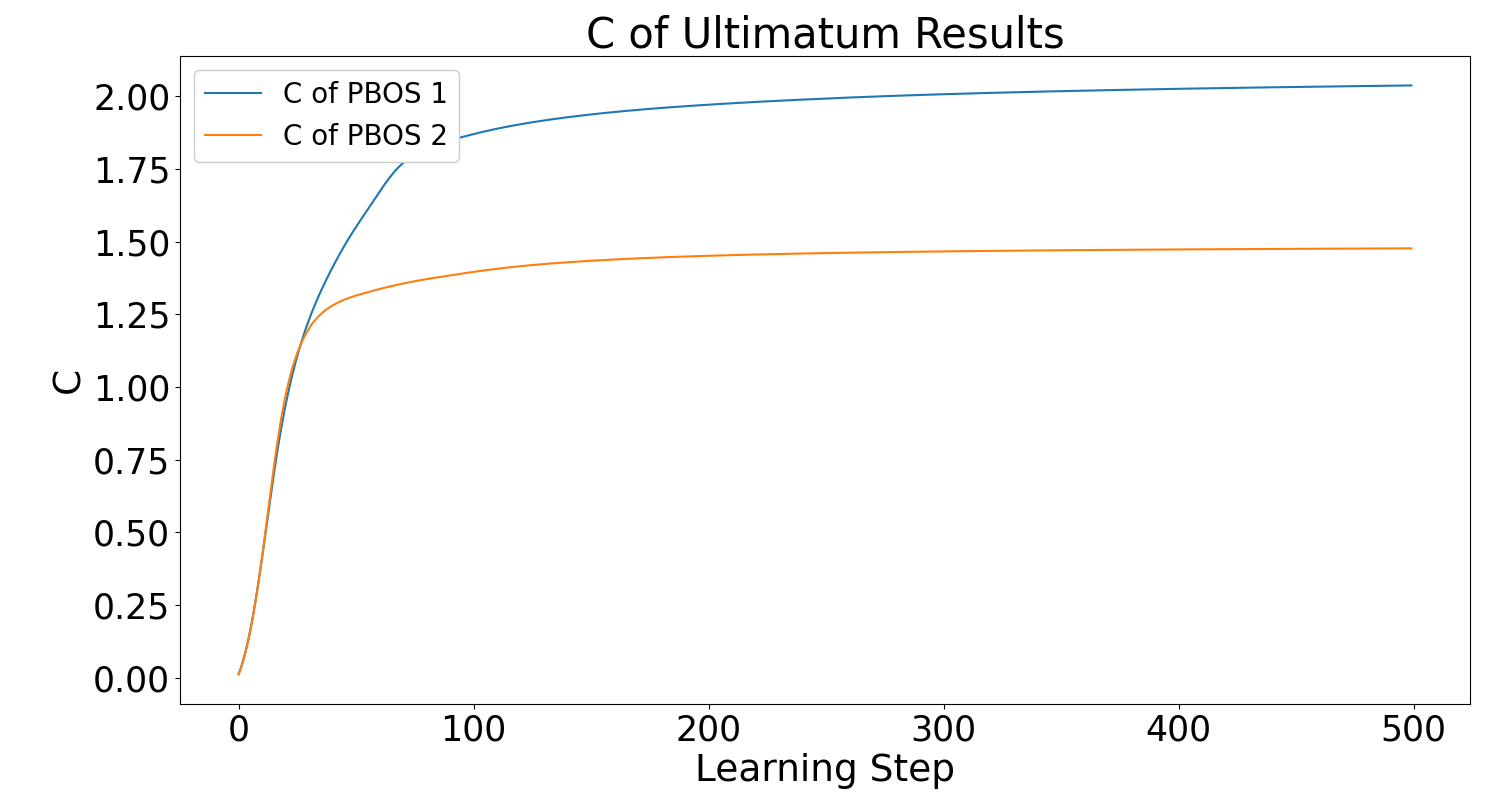}
			\caption{C of Ultimatum}
			\label{C of Ultimatum-lc}
		\end{minipage}
        \end{figure}
        \begin{figure}[htbp]
		\begin{minipage}{0.45\linewidth}
			\centering
			\includegraphics[width=1.0\linewidth]{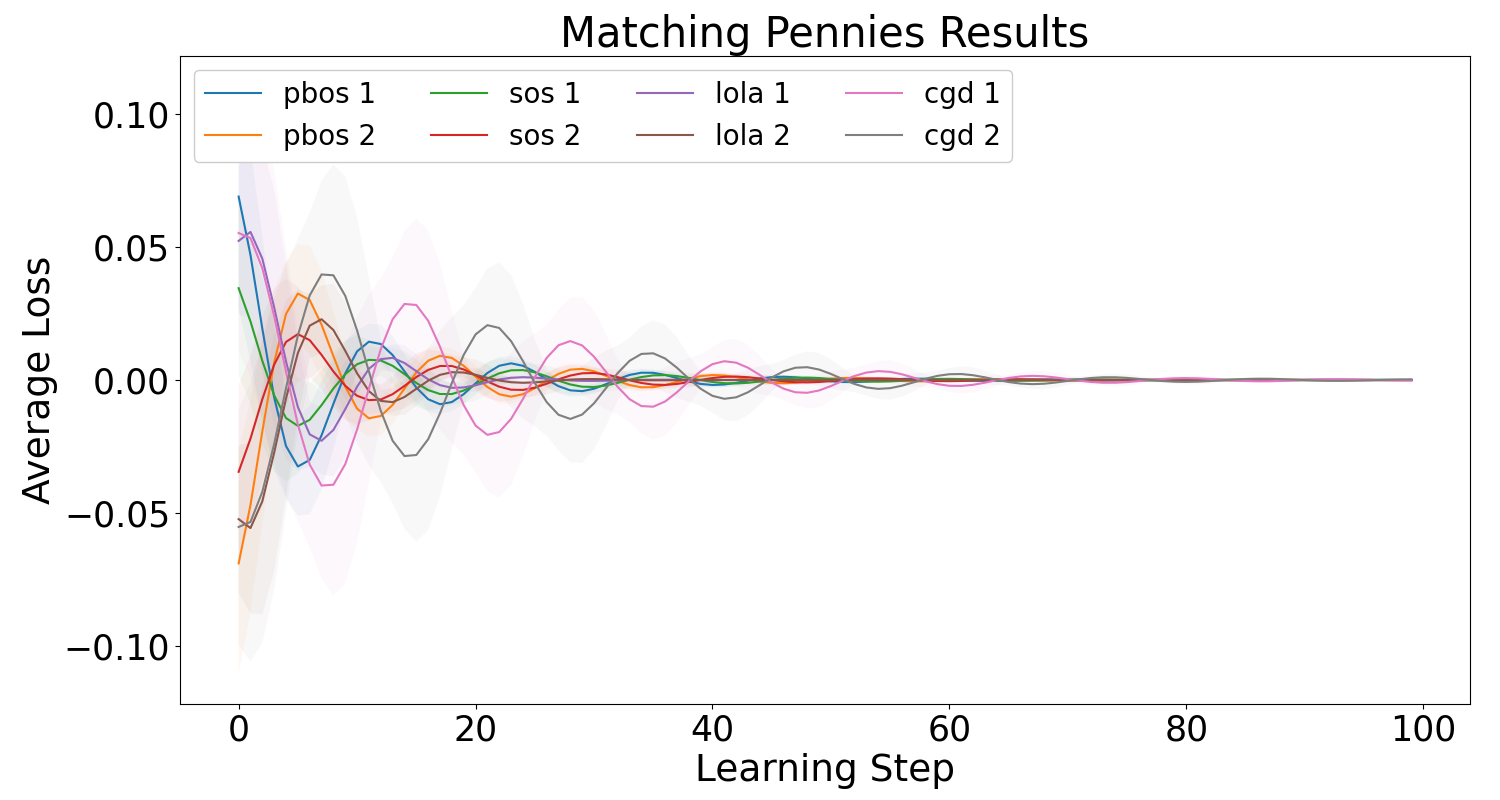}
			\caption{Matching Pennies}
			\label{Matching Pennies-lc}
		\end{minipage}
		\hfill
		\begin{minipage}{0.45\linewidth}
			\centering
			\includegraphics[width=1.0\linewidth]{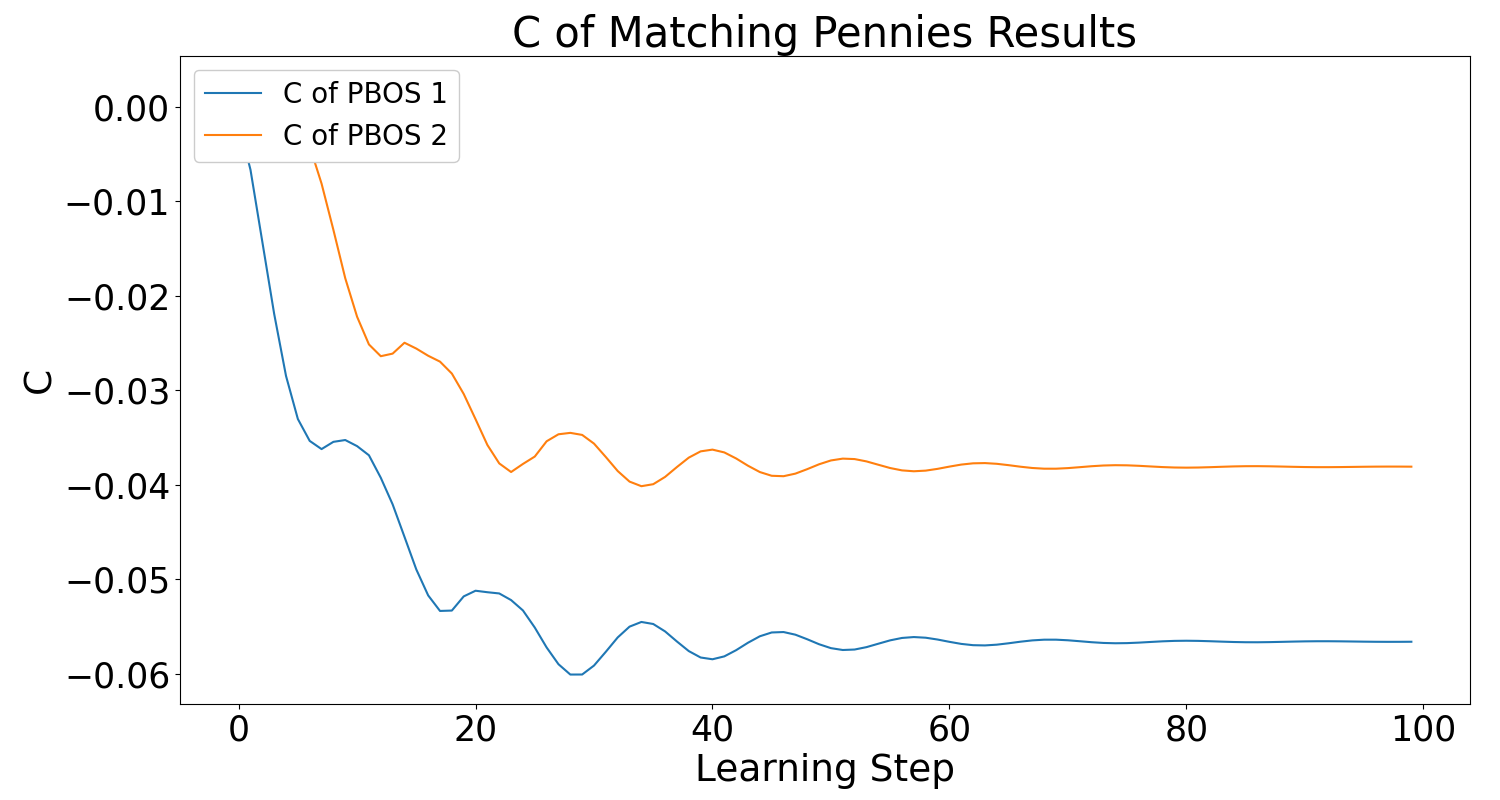}
			\caption{C of Matching Pennies}
			\label{C of Matching Pennies-lc}
		\end{minipage}
            \hfill
		\begin{minipage}{0.45\linewidth}
			\centering
			\includegraphics[width=1.0\linewidth]{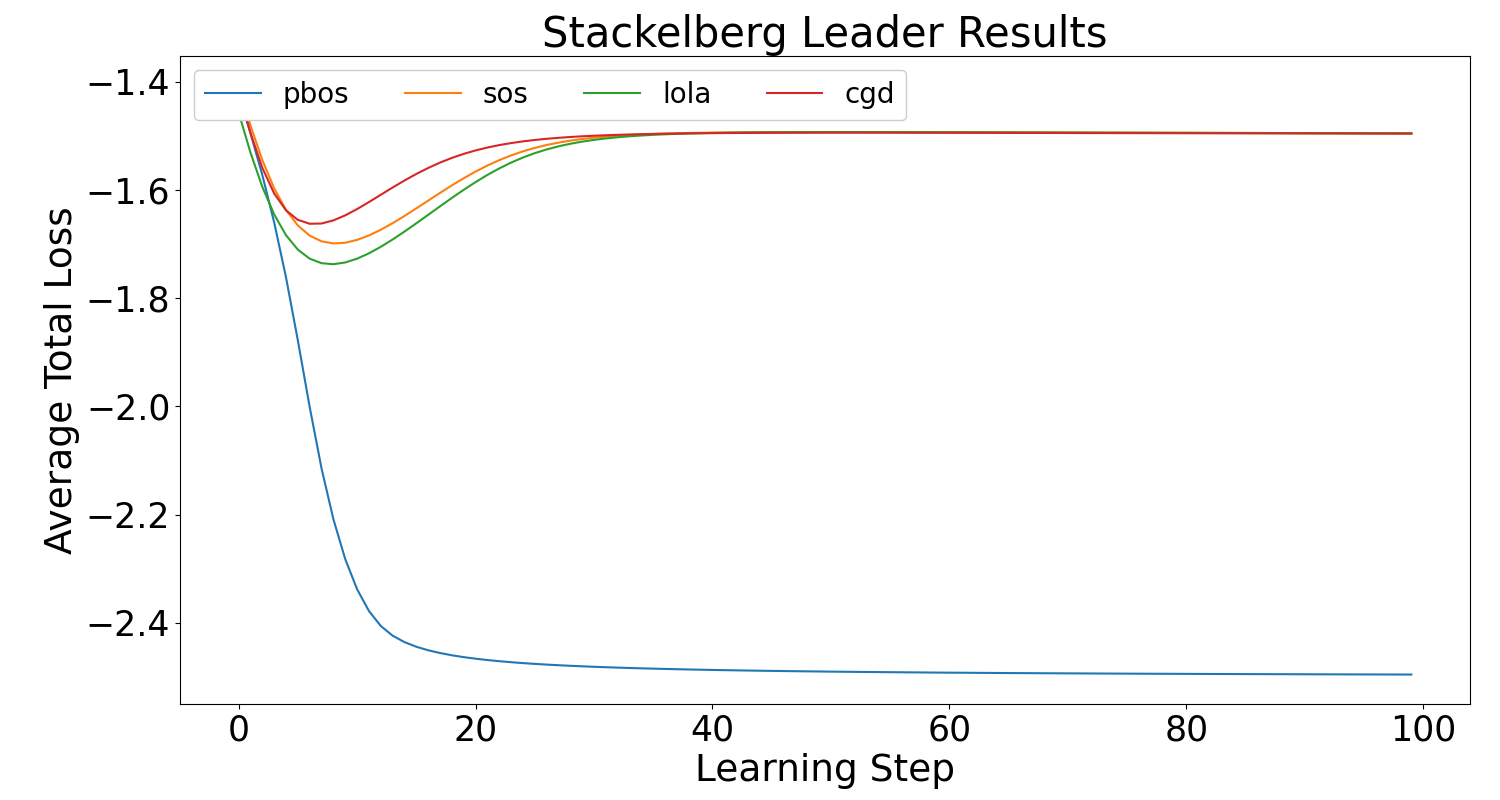}
			\caption{Stackelberg Leader}
			\label{Stackelberg Leader-lc}
		\end{minipage}
		\hfill
		\begin{minipage}{0.45\linewidth}
			\centering
			\includegraphics[width=1.0\linewidth]{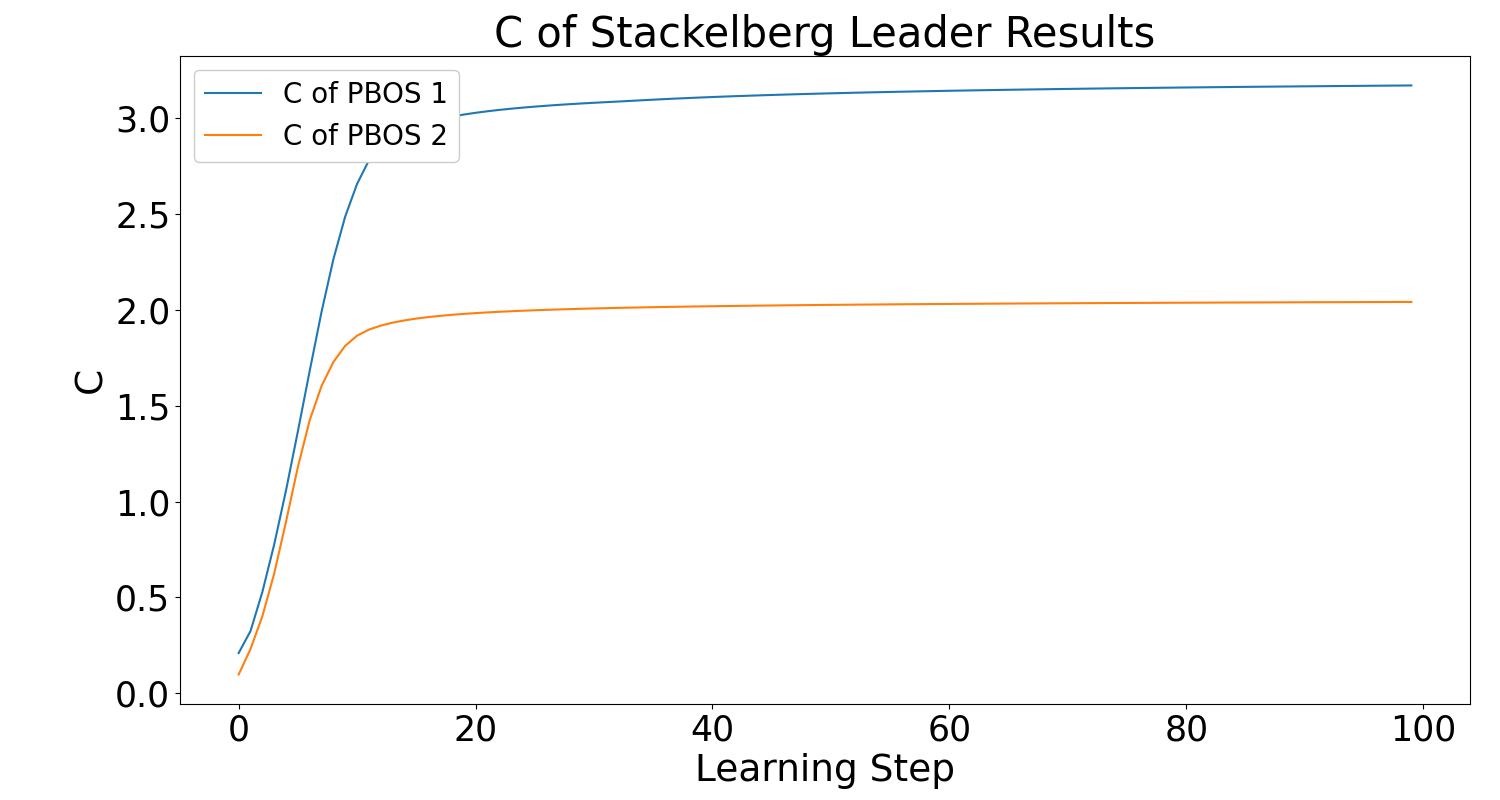}
			\caption{C of Stackelberg Leader}
			\label{C of Stackelberg Leader-lc}
		\end{minipage}
            \hfill
		\begin{minipage}{0.45\linewidth}
			\centering
			\includegraphics[width=1.0\linewidth]{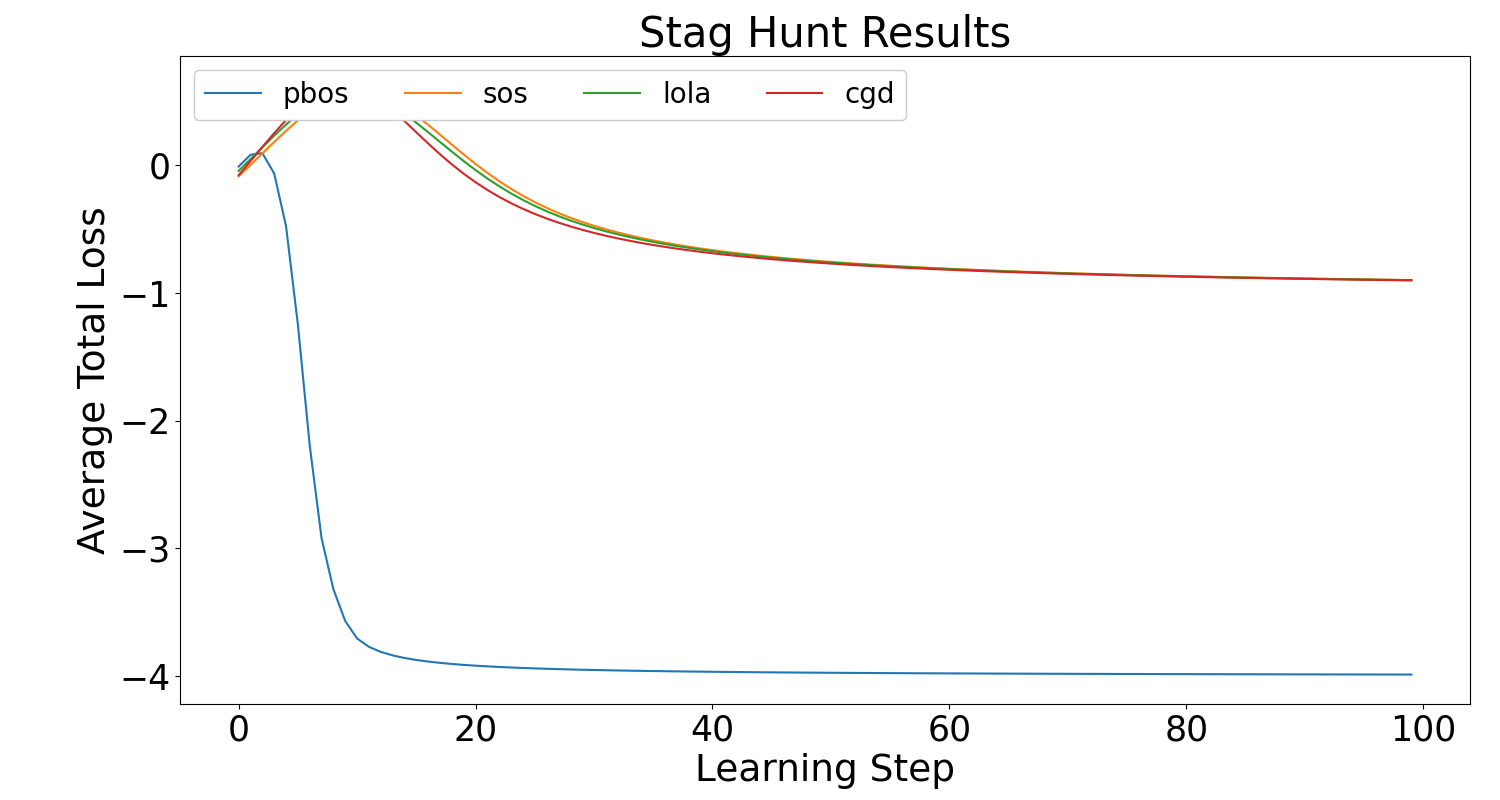}
			\caption{Stag Hunt}
			\label{Stag Hunt-lc}
		\end{minipage}
		\hfill
		\begin{minipage}{0.45\linewidth}
			\centering
			\includegraphics[width=1.0\linewidth]{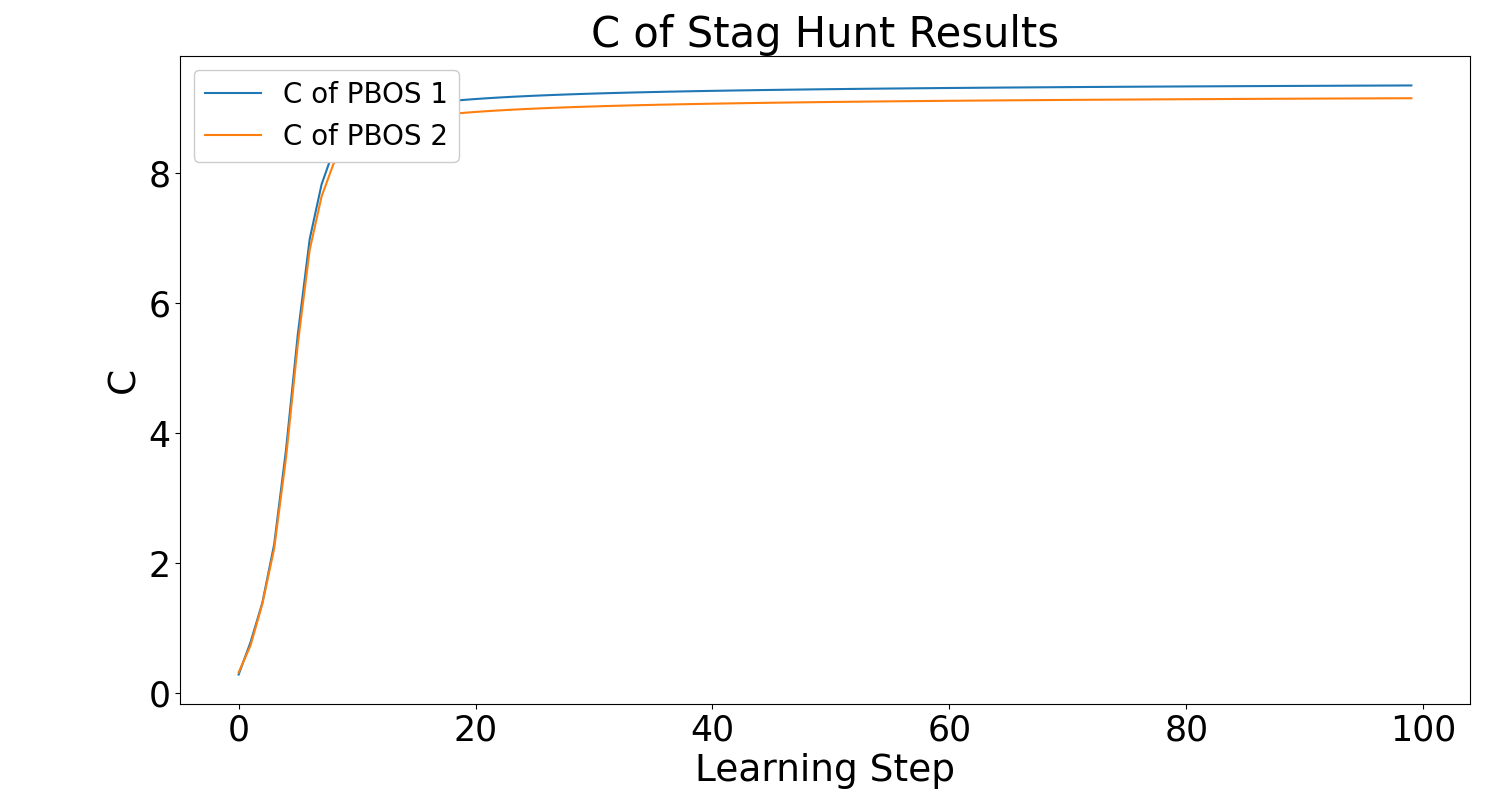}
			\caption{C of Stag Hunt}
			\label{C of Stag Hunt-lc}
		\end{minipage}
	\end{figure}

        The outcomes are summarized as follows:
        
        (a). Tandem Game: We observed that the preference parameters converge to $c_1\approx1.00$ and $c_2\approx1.00$, resulting in a loss of approximately $-0.25$ for both agents, corresponding to the optimal outcome of the game (Fig. \ref{tandem-lc}, \ref{C of Tandem-lc}).
        
        (b). IPD: Learned preference parameters are $c_1\approx0.60$ and $c_2\approx0.60$, with a loss of $1.00$ for both agents, indicative of the best NE in IPD (Fig. \ref{IPD-lc}, \ref{C of IPD-lc}).
        
        (c). Ultimatum Game: Preference parameters $c_1\approx2.04$ and $c_2\approx1.48$ resulted in losses of $-5.19$ and $-4.81$ for agents respectively, indicating near attainment of a fair division of the ten-dollar pool (Fig. \ref{Ultimatum-lc}, \ref{C of Ultimatum-lc}).

        with the losses for both agents being $-5.19$ and $-4.81$, respectively (Fig. \ref{Ultimatum-lc},\ref{C of Ultimatum-lc}). This suggests that the agents nearly achieved a fair division of the ten-dollar pool.
        
        (d). Matching Pennies: Parameters $c_1\approx-0.06$ and $c_2\approx-0.04$ yielded near-zero losses for both agents, consistent with the mixed strategy NE characteristic of a zero-sum game (Fig. \ref{Matching Pennies-lc}, \ref{C of Matching Pennies-lc}).
        
        (e). Stackelberg Leader Game: Convergence to $c_1\approx3.17$ and $c_2\approx2.04$, with the average total loss being approximately $-2.50$, represents the optimal outcome for both agents in this game (Fig. \ref{Stackelberg Leader-lc}, \ref{C of Stackelberg Leader-lc}).
        
        (f). Stag Hunt: Learned preference parameters $c_1\approx9.35$ and $c_2\approx9.15$ resulted in an average total loss of approximately $-4.00$, indicating the best possible outcome for the game (Fig. \ref{Stag Hunt-lc}, \ref{C of Stag Hunt-lc}).

        The detailed numerical results of these experiments are presented in Table \ref{table-lc}, providing a comprehensive overview of how opponent shaping influences the learning of preference parameters and subsequently affects game outcomes.

        \begin{table}[h]
            \caption{Results of PBOS and three baseline algorithms across six different games.}\label{table-lc}
            \setlength\tabcolsep{1pt}
            \begin{tabular*}{\textwidth}{@{\extracolsep\fill}ccccccc}
            \toprule%
            & Tandem & IPD & Ultimatum & Matching Pennies & Stackelberg Leader & Stag Hunt  \\
            \midrule
            \multirow{2}{*}{PBOS}& (1.00,1.00)\footnotemark[2] & (0.60,0.60)\footnotemark[2] & (2.04,1.48)\footnotemark[2] & (-0.06,-0.04)\footnotemark[2] & (3.17,2.04)\footnotemark[2] & (9.35,9.15)\footnotemark[2]  \\ 
           & \textbf{(-0.26,-0.24)} & \textbf{(1.00,1.00)} & \textbf{(-5.36,-4.63)} & \textbf{(0.00,0.00)} & \textbf{(-3.00, -2.00)} & \textbf{(-4.00,-4.00)}  \\
           \midrule
            LOLA & (1.34,1.29) & \textbf{(1.00,1.00)} & (-7.31,-2.55) & \textbf{(0.00,0.00)} & (-2.02,-0.97) & (-0.90,-0.90)  \\ 
           SOS & (-0.01,0.01) & \textbf{(1.00,1.00)} & (-7.20,-2.66) & \textbf{(0.00,0.00)} & (-2.02,-0.97) & (-0.90,-0.90)\\
            CGD & (-0.01,0.01) & (2.00,2.00) & (-7.10,-2.76) & \textbf{(0.00,0.00)} & (-2.02,-0.98) & (-0.90,-0.90)\\
            \midrule
            \end{tabular*}
        \end{table}
        \footnotetext{Note: Numbers in brackets indicate losses incurred by the respective agents, with bolded figures denoting optimal outcomes achieved by the four algorithms.}
        \footnotetext[2]{These numbers are the learned preference parameters $(c_1,c_2)$ by the two agents using PBOS.}

        From Table \ref{table-lc}, it is evident that except for the Matching Pennies game, which is zero-sum, the PBOS algorithm consistently converges to positive preference parameters $c_1>0$ for $i=1,2$. This signifies that PBOS facilitates cooperative strategy learning in each non-zero-sum game mentioned. The inclination towards cooperation underpins the agents' ability to adopt strategies that yield higher joint rewards than those achievable under traditional NE. Notably, in the Tandem Game, the agents learn such that $c_1c_2=1$, achieving a state of \emph{Maximization of Cooperation} as per Definition \ref{max cooperation}. 

        Fig. \ref{tandem-d}-\ref{Stag Hunt-g} provide a comprehensive visualization of strategic dynamics across five distinct games using the PBOS algorithm. Directional and field diagrams depict the evolutionary trajectory of strategies adopted by both agents throughout the gaming process.

        To assess PBOS's generalizability, extensive experiments were conducted with randomly generated bimatrix games. We created 2000, 5000, and 10000 games, where each entry in the bimatrix was an integer randomly selected from the interval $[-7,7]$. The objective was to optimize the average joint reward of the two agents. The results, summarized in Fig. \ref{2000 Stochastic Game-lc}-\ref{10000 Stochastic Game-lc}, demonstrate PBOS's effectiveness in optimizing joint rewards across these stochastic games. This figure underscores PBOS's robustness and adaptability in diverse stochastic game environments, showcasing its capacity to converge towards strategies that maximize collective payoff across varied game configurations.

        \begin{figure}[htbp]
		\begin{minipage}{0.32\linewidth}
			\centering
			\includegraphics[width=1.0\linewidth]{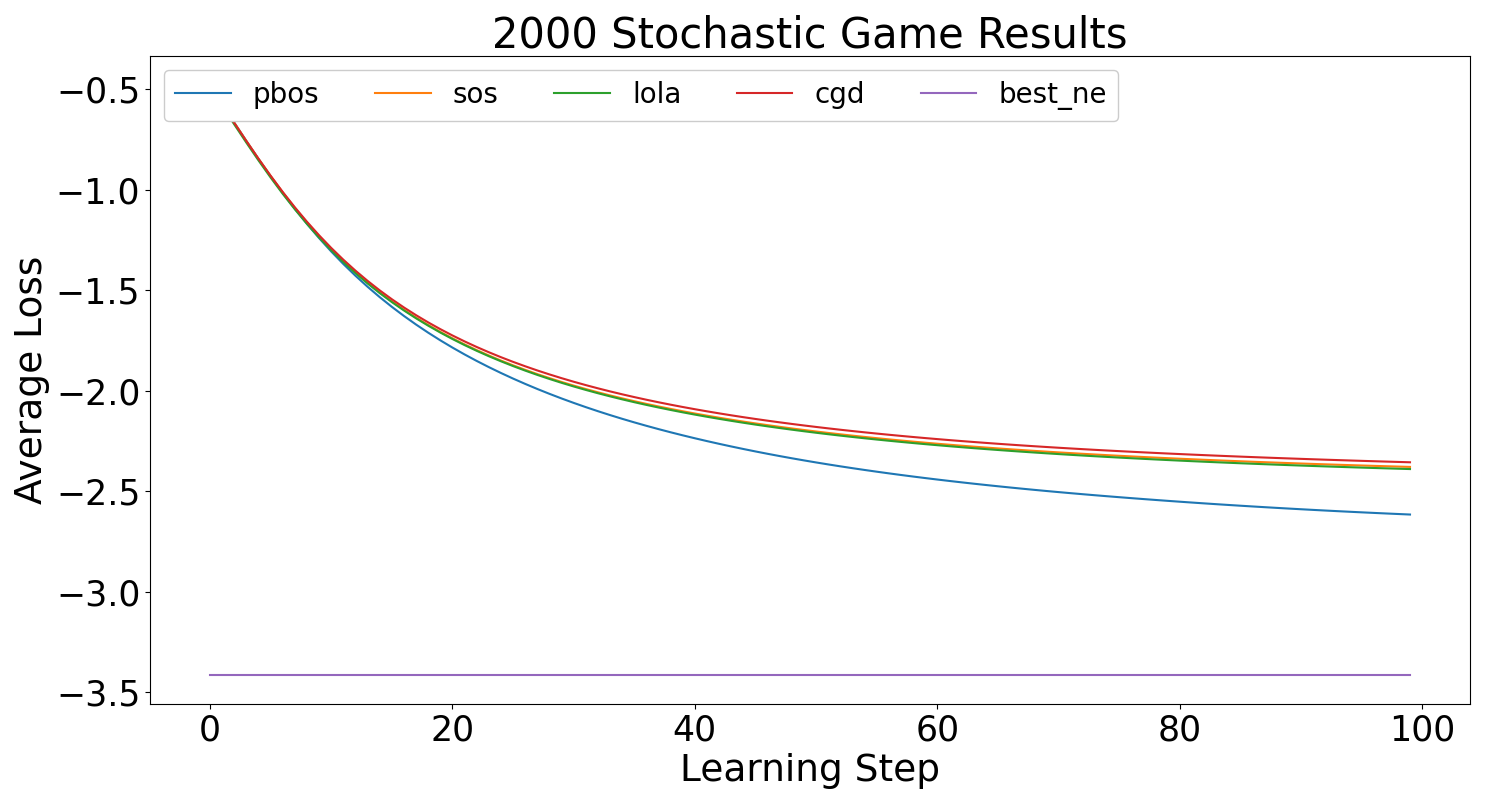}
			\caption{2000 Stochastic Game Results}
			\label{2000 Stochastic Game-lc}
		\end{minipage}
            \hfill
            \begin{minipage}{0.32\linewidth}
			\centering
			\includegraphics[width=1.0\linewidth]{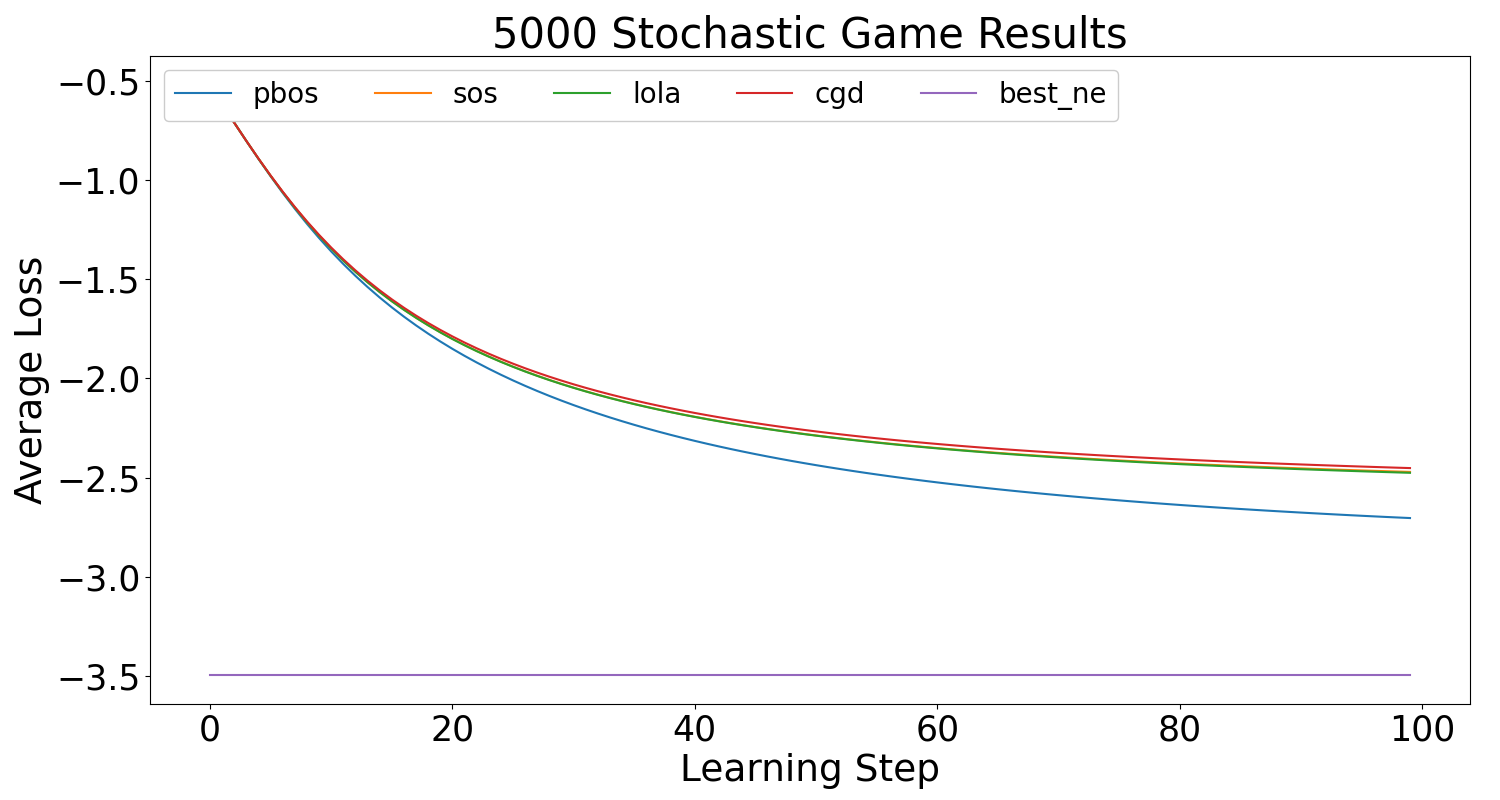}
			\caption{5000 Stochastic Game Results}
			\label{5000 Stochastic Game-lc}
		\end{minipage}
            \hfill
            \begin{minipage}{0.32\linewidth}
			\centering
			\includegraphics[width=1.0\linewidth]{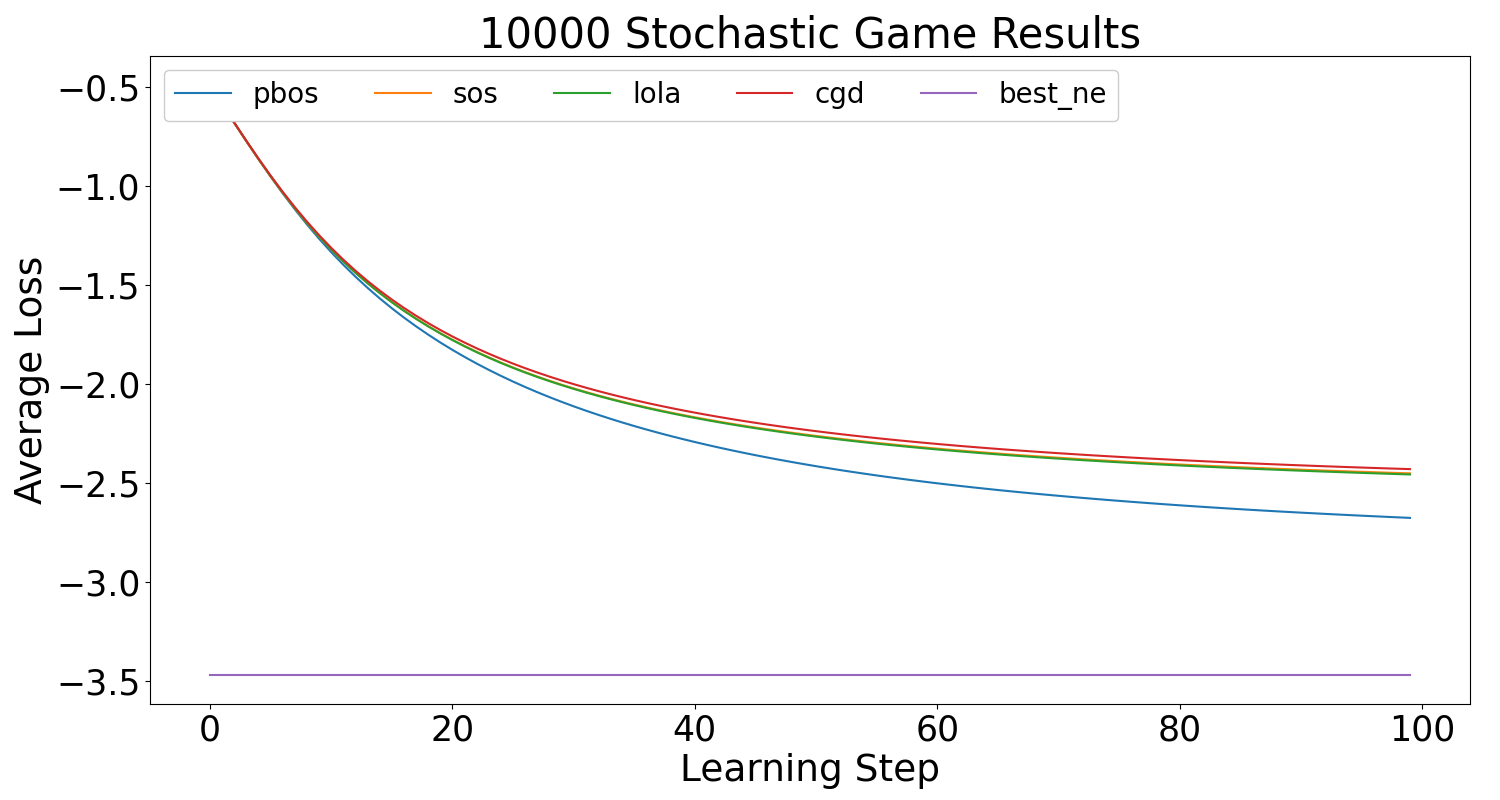}
			\caption{10000 Stochastic Game Results}
			\label{10000 Stochastic Game-lc}
		\end{minipage}
	\end{figure}

        As indicated in Table \ref{table-ss}, none of the four algorithms consistently achieves the optimal outcome across multiple stochastic games. However, when compared to the other three algorithms, PBOS demonstrates a significant improvement of approximately 22\% in approximating the optimal value. This comparative analysis underscores PBOS's superior performance in converging towards near-optimal solutions across a diverse range of game scenarios.

            \begin{table}[h]
            \caption{Average total losses of PBOS and three baseline algorithms in randomly generated bimatrix games of 2000, 5000, and 10000.}\label{table-ss}
            \begin{tabular*}{\textwidth}{@{\extracolsep\fill}ccccccc}
            \toprule%
            & BEST\_NE\footnotemark[3] & PBOS & SOS & LOLA & CGD & Proximity Improvement  \\ 
            \midrule
           2000 & -3.41 & \textbf{-2.62} & -2.38 & -2.39 & -2.36 & 21.90\%\\
           5000 & -3.49 & \textbf{-2.70} & -2.47 & -2.48 & -2.45 & 21.78\% \\
           10000 & -3.47 & \textbf{-2.68} & -2.45 & -2.46 & -2.43 & 21.78\% \\
            \midrule
            \end{tabular*}
        \end{table}
        \footnotetext{Note: Bolded numbers indicate optimal results achieved by the four algorithms.}
        \footnotetext[3]{BEST\_NE denotes the average of the sum of the minimum losses of the two agents in each game.}

        \subsection{Adversarial testing}
        In the context of four game environments with symmetrical loss functions, PBOS and three baseline algorithms have participated in gameplay. The comparative results of these interactions are depicted in Fig. \ref{Tandem Competitive-lc}-\ref{C of Stag Hunt Competitive-lc}.
        \begin{figure}[htbp]
            \centering
            \begin{minipage}[b]{0.45\textwidth}
                \centering
                \includegraphics[width=1.0\textwidth]{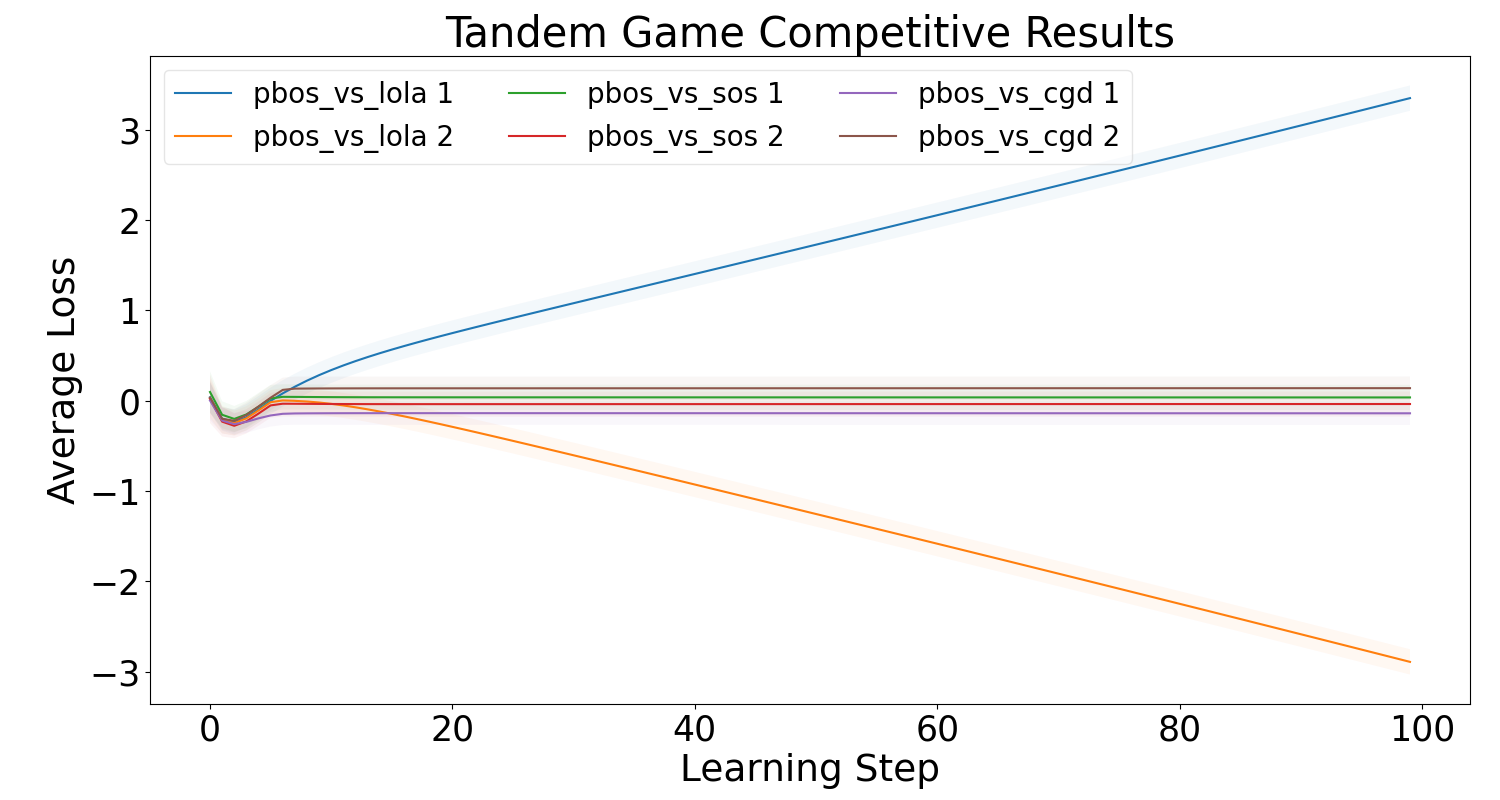}
                \caption{Tandem Game}
                \label{Tandem Competitive-lc}
            \end{minipage}
            \hfill
            \begin{minipage}[b]{0.45\textwidth}
                \centering
                \includegraphics[width=1.0\textwidth]{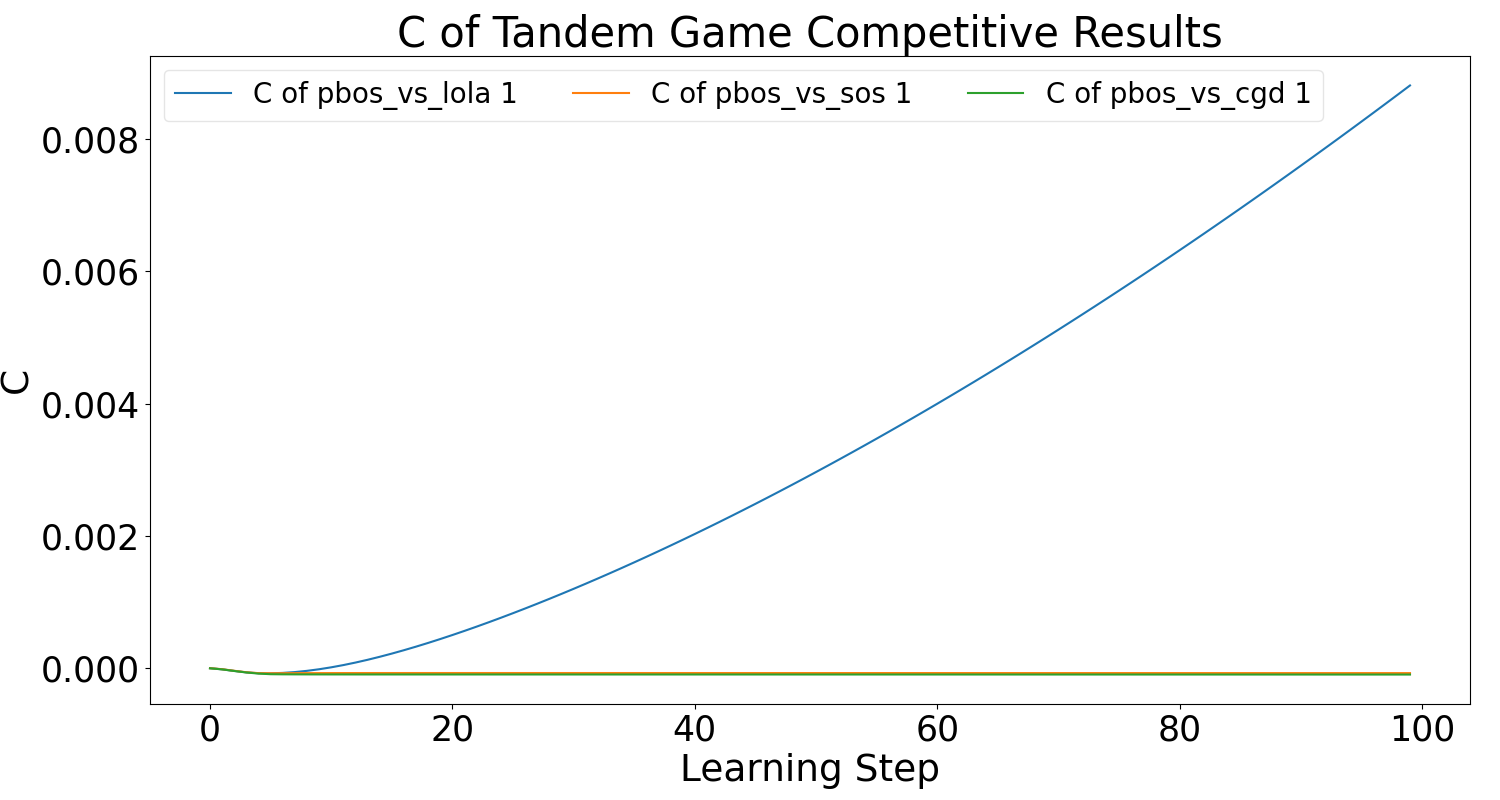}
                \caption{C of Tandem Competitive}
                \label{C of Tandem Competitive-lc}
            \end{minipage}
            \hfill
            \begin{minipage}[b]{0.45\textwidth}
                \centering
                \includegraphics[width=1.0\textwidth]{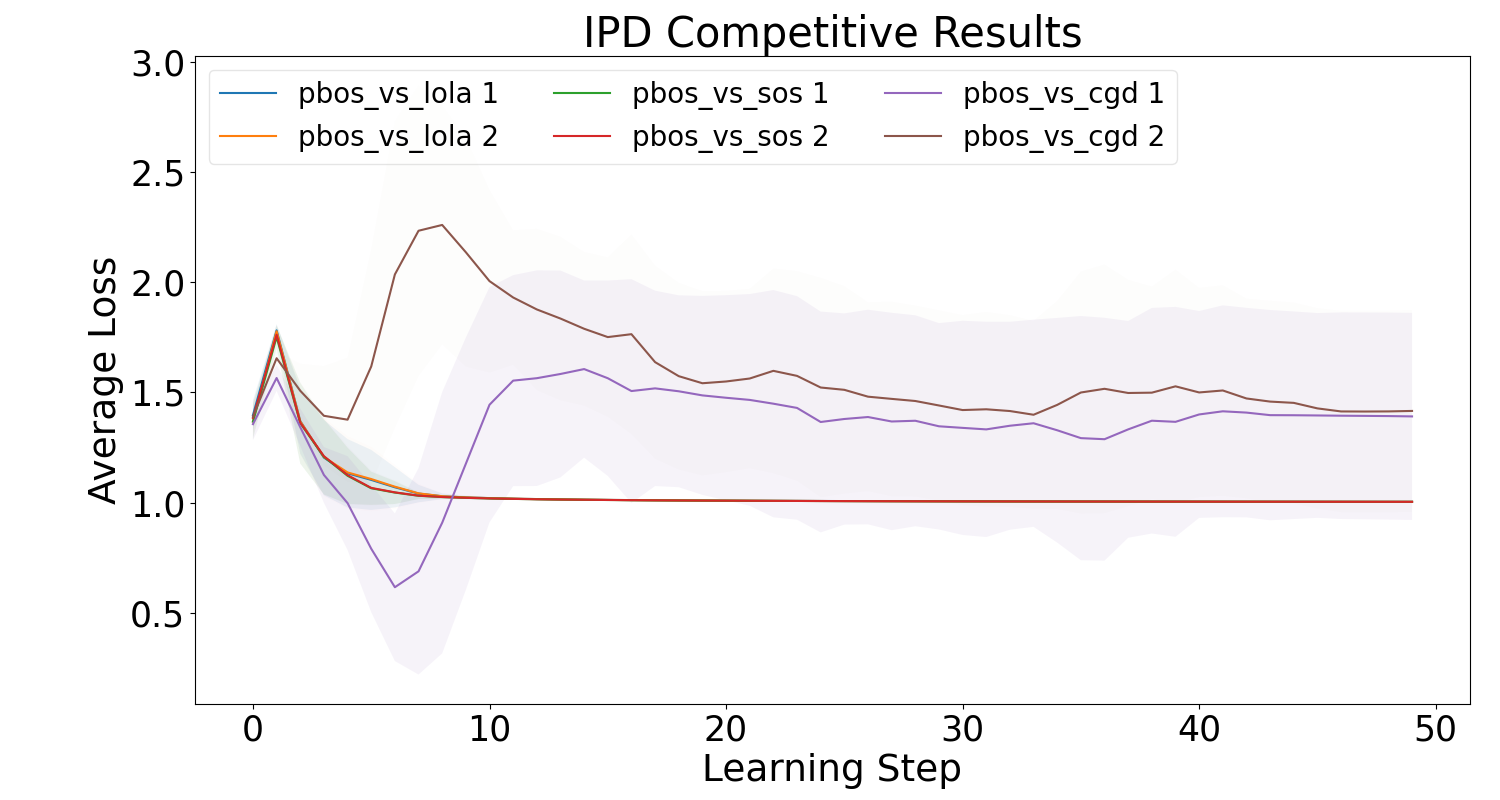}
                \caption{IPD}
                \label{IPD Competitive-lc}
            \end{minipage}
            \hfill
            \begin{minipage}[b]{0.45\textwidth}
                \centering
                \includegraphics[width=1.0\textwidth]{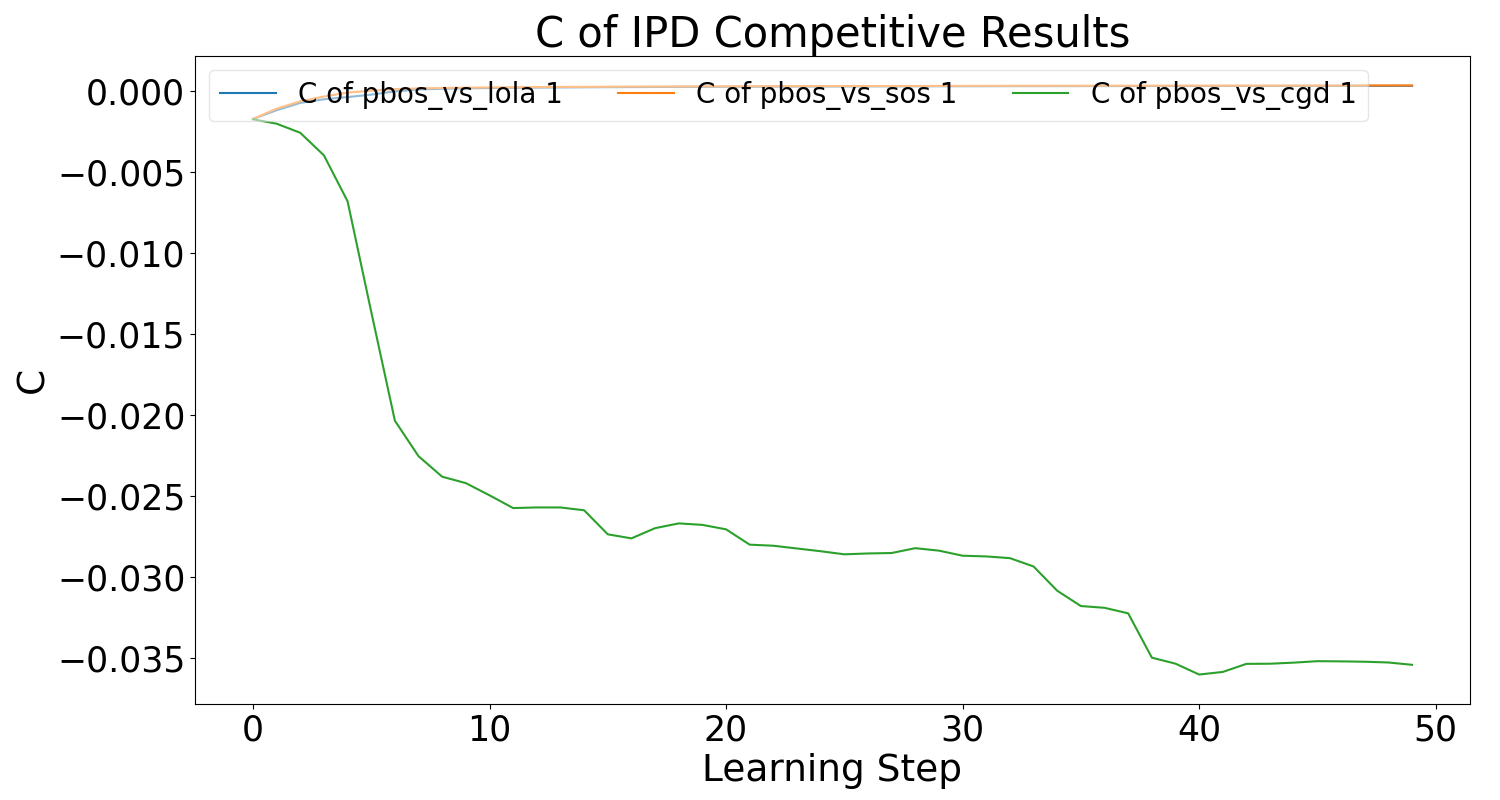}
                \caption{C of IPD}
                \label{C of IPD Competitive-lc}
            \end{minipage}
            \hfill
            \begin{minipage}[b]{0.45\textwidth}
                \centering
                \includegraphics[width=1.0\textwidth]{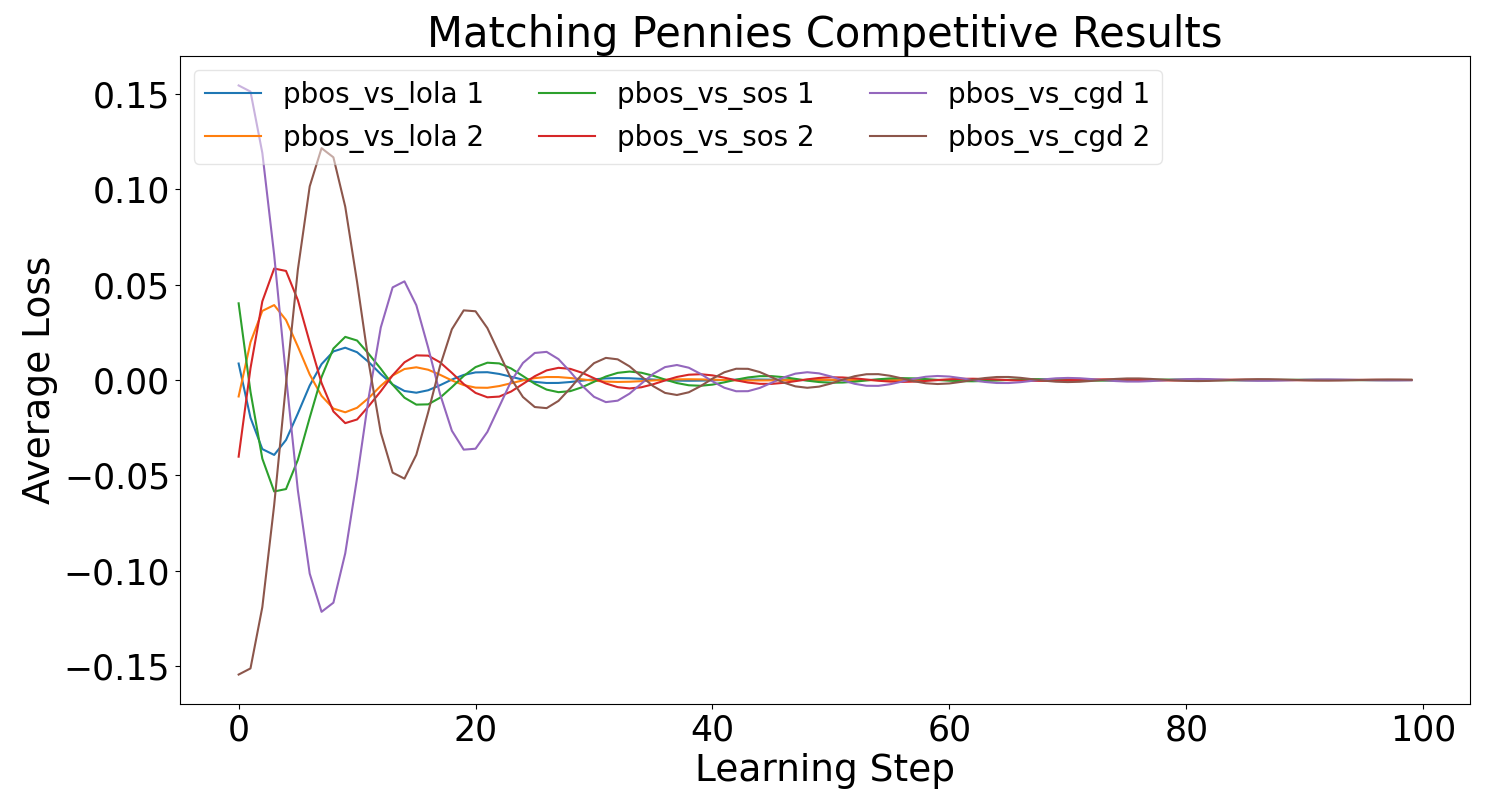}
                \caption{Matching Pennies}
                \label{Matching Pennies Competitive-lc}
            \end{minipage}
            \hfill
            \begin{minipage}[b]{0.45\textwidth}
                \centering
                \includegraphics[width=1.0\textwidth]{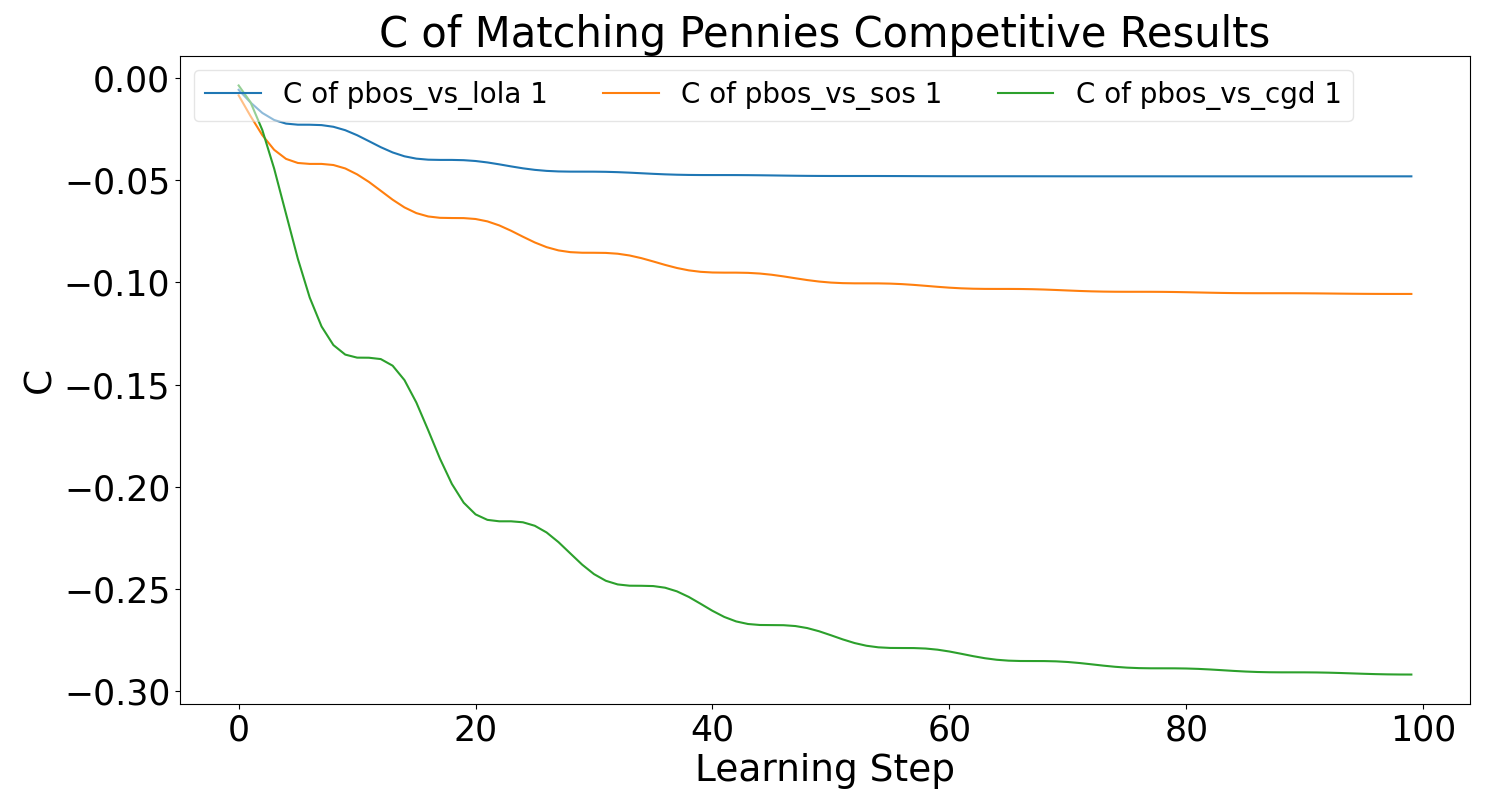}
                \caption{C of Matching Pennies}
                \label{C of Matching Pennies Competitive-lc}
            \end{minipage}
            \hfill
            \begin{minipage}[b]{0.45\textwidth}
                \centering
                \includegraphics[width=1.0\textwidth]{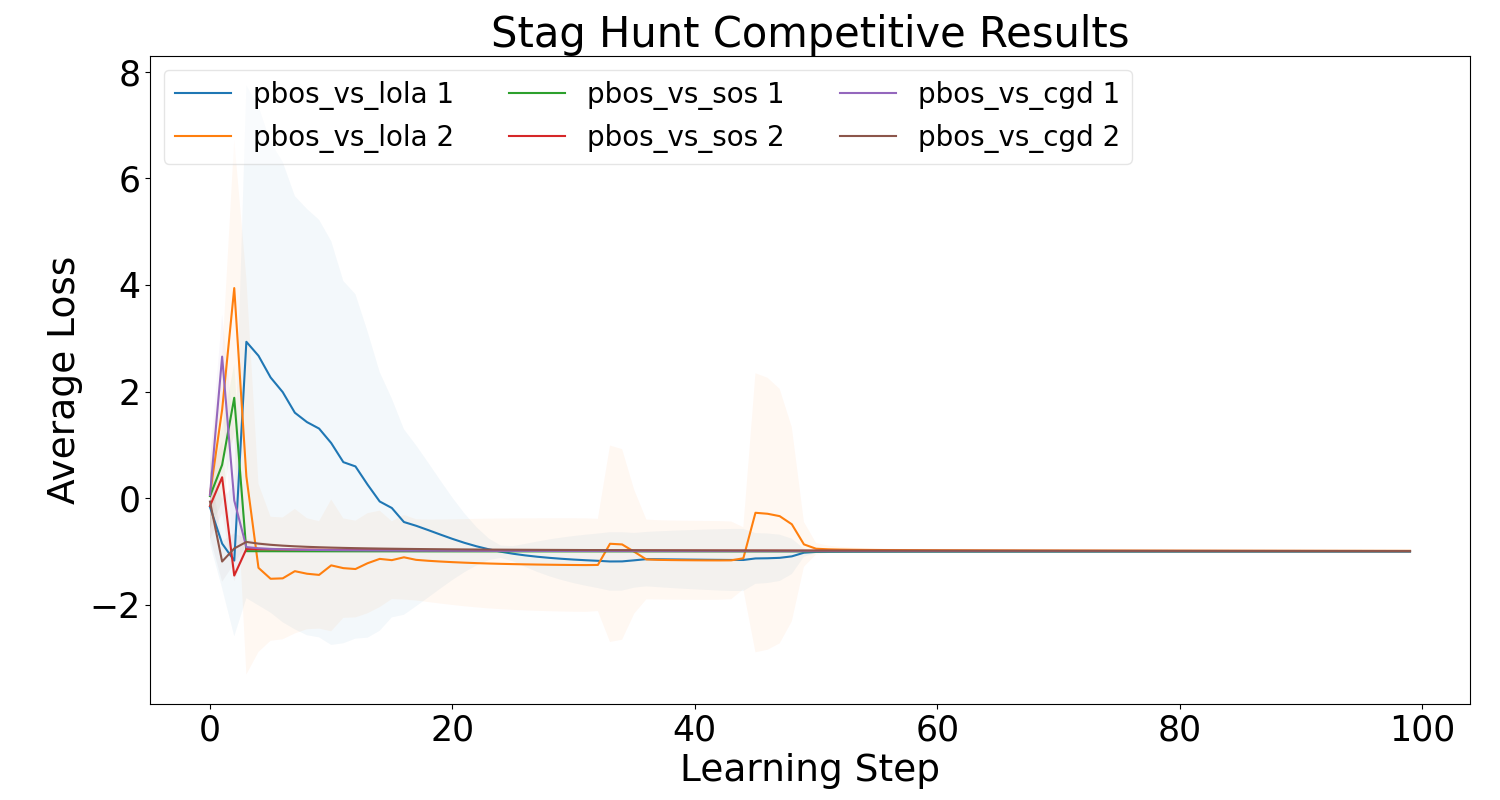}
                \caption{Stag Hunt}
                \label{Stag Hunt Competitive-lc}
            \end{minipage}
            \hfill
            \begin{minipage}[b]{0.45\textwidth}
                \centering
                \includegraphics[width=1.0\textwidth]{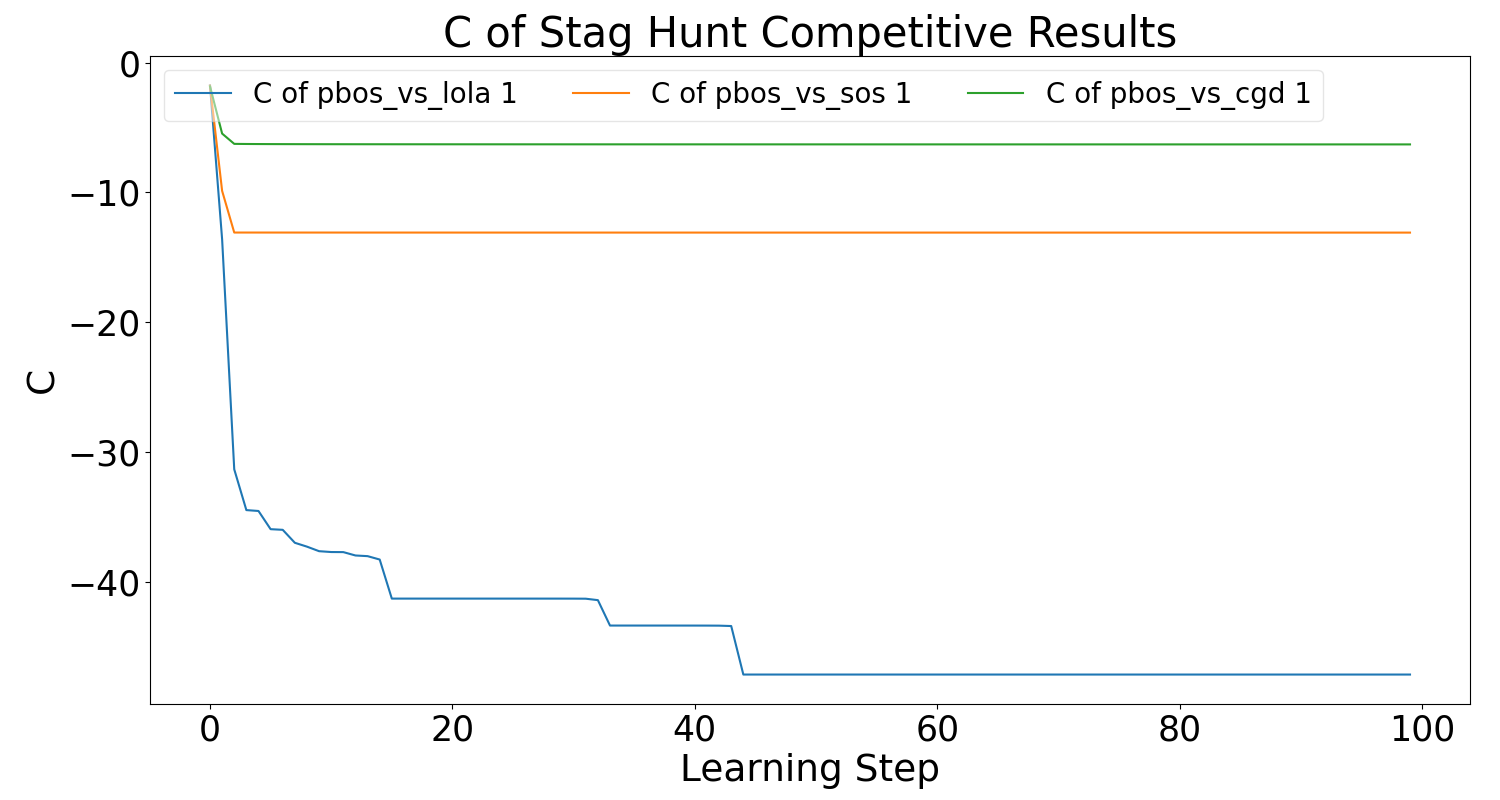}
                \caption{C of Stag Hunt}
                \label{C of Stag Hunt Competitive-lc}
            \end{minipage}
        \end{figure}

        Based on Fig. \ref{Tandem Competitive-lc}-\ref{C of Stag Hunt Competitive-lc}, we derive the following insights regarding the performance of the PBOS algorithm compared to three baseline algorithms across four game environments with symmetrical loss functions:

        (a). Tandem Game: PBOS achieves NE when competing against agents using the SOS and CGD algorithms (Fig. \ref{Tandem Competitive-lc}). However, it falls short against LOLA due to the misalignment in learning a cooperative strategy $c_1>0$, which is exploited by the opponent (Fig. \ref{C of Tandem Competitive-lc}). 

        (b). IPD: PBOS converges to the NE when competing against agents using LOLA or SOS algorithms (Fig. \ref{IPD Competitive-lc}). Conversely, when confronting an agent using the CGD algorithm, PBOS adopts an antagonistic strategy ($c_1<0$) and ultimately converges to a less favorable outcome (\ref{C of IPD Competitive-lc}).

        (c). Matching Pennes: In this zero-sum game, PBOS learns a confrontational strategy ($c_1<0$) and achieves a zero loss when competing against agents using LOLA, SOS, or CGD (\ref{Matching Pennies Competitive-lc}, \ref{C of Matching Pennies Competitive-lc}).  This result aligns with the nature of Matching Pennies, where the mixed strategy NE is equal probabilities of each strategy.

        (d). Stag Hunt: PBOS agents adopt a confrontational strategy ($c_1<0$) and converge to a loss of $-1$, representing the ``safe" NE, when competing against agents employing LOLA, SOS, or CGD (\ref{Stag Hunt Competitive-lc}, \ref{C of Stag Hunt Competitive-lc}).

        From these observations, we can infer that, in the majority of cases, the PBOS algorithm is capable of reaching NE when competing with baseline algorithms.  However, there are specific scenarios where PBOS may underperform.  The primary reason for this suboptimal performance is attributed to the foundational premise of PBOS, which relies on modeling the opponent's preference changes.  If the opponent's preference remains constant ($c_2=0$), the interaction between PBOS and the baseline algorithms may yield less satisfactory results.  This underscores the importance of accurate modeling of opponent behavior in the efficacy of learning algorithms within game-theoretic contexts.

	\section{Conclusion}\label{sec7}
        In this paper, we propose a novel preference-based adversary shaping (PBOS) method to improve the strategy learning process by using the opponent's objectives as preferences in the loss function. We introduce the preference parameter to avoid the limitation of agents considering only their own loss functions. By employing a method for shaping changes in opponent preference parameters, PBOS achieves higher reward in cooparative and competive game environments. Theoretical analysis shows that PBOS has good convergence properties and can obtain Nash equilibrium in games where other opponent shaping algorithms fail. We also conduct a series of experiments to demonstrate the effectiveness of PBOS. In many classic game environments, PBOS improves both the convergence and rewards of strategy learning. Furthermore, the PBOS algorithm exhibits strong generalization capabilities in randomly generated games and yields a 22\% improvement over baseline algorithms with respect to proximity to the NE. Future research will focus on enhancing the PBOS algorithm to better navigate complex game environments and refine opponent modeling, as well as exploring methods to detect and adapt to the dynamics of opponent preferences. We hope that this work can promote the research of opponent modeling in game environments.

\section{Acknowledgments}
This paper is supported by National Key R$\&$D Program of China (2021YFA1000403)
and the National Natural Science Foundation of China (Nos. 11991022, U23B2012).

\newpage
\renewcommand\refname{Reference}

\bibliographystyle{plain}
\bibliography{ref}

\begin{thebibliography}{10}

\bibitem{albrecht2019reasoning}
Stefano~V Albrecht and Peter Stone.
\newblock Reasoning about hypothetical agent behaviours and their parameters.
\newblock {\em arXiv preprint arXiv:1906.11064}, 2019.

\bibitem{Axelrod1980EffectiveCI}
Robert Axelrod.
\newblock Effective choice in the prisoner's dilemma.
\newblock {\em Journal of Conflict Resolution}, 24:25 -- 3, 1980.

\bibitem{robert1981cooperation}
Robert Axelrod and William~D Hamilton.
\newblock The evolution of cooperation.
\newblock {\em science 211, 4489 (1981), 1390–1396}, 1981.

\bibitem{Azizian2020ATA}
Wa{\"i}ss Azizian, Ioannis Mitliagkas, Simon Lacoste-Julien, and Gauthier
  Gidel.
\newblock A tight and unified analysis of gradient-based methods for a whole
  spectrum of differentiable games.
\newblock In {\em International Conference on Artificial Intelligence and
  Statistics}, 2020.

\bibitem{balduzzi2018mechanics}
David Balduzzi, Sebastien Racaniere, James Martens, Jakob Foerster, Karl Tuyls,
  and Thore Graepel.
\newblock The mechanics of n-player differentiable games.
\newblock In {\em International Conference on Machine Learning}, pages
  354--363. PMLR, 2018.

\bibitem{Budinich2011RepeatedMP}
Michele Budinich and Lance Fortnow.
\newblock Repeated matching pennies with limited randomness.
\newblock {\em ArXiv}, abs/1102.1096, 2011.

\bibitem{bucsoniu2010multi}
Lucian Bu{\c{s}}oniu, Robert Babu{\v{s}}ka, and Bart De~Schutter.
\newblock Multi-agent reinforcement learning: An overview.
\newblock {\em Innovations in multi-agent systems and applications-1}, pages
  183--221, 2010.

\bibitem{curry2023learning}
Michael Curry, Alexander Trott, Soham Phade, Yu~Bai, and Stephan Zheng.
\newblock Learning solutions in large economic networks using deep multi-agent
  reinforcement learning.
\newblock In {\em AAMAS}, pages 2760--2762, 2023.

\bibitem{Falk1999OnTN}
Armin Falk, Ernst Fehr, and Urs Fischbacher.
\newblock On the nature of fair behavior.
\newblock {\em Behavioral \& Experimental Economics}, 1999.

\bibitem{foerster2017learning}
Jakob~N Foerster, Richard~Y Chen, Maruan Al-Shedivat, Shimon Whiteson, Pieter
  Abbeel, and Igor Mordatch.
\newblock Learning with opponent-learning awareness.
\newblock {\em arXiv preprint arXiv:1709.04326}, 2017.

\bibitem{fung2024analysing}
Kitty Fung, Qizhen Zhang, Chris Lu, Jia Wan, Timon Willi, and Jakob Foerster.
\newblock Analysing the sample complexity of opponent shaping.
\newblock {\em arXiv preprint arXiv:2402.05782}, 2024.

\bibitem{gupta2017cooperative}
Jayesh~K Gupta, Maxim Egorov, and Mykel Kochenderfer.
\newblock Cooperative multi-agent control using deep reinforcement learning.
\newblock In {\em Autonomous Agents and Multiagent Systems: AAMAS 2017
  Workshops, Best Papers, S{\~a}o Paulo, Brazil, May 8-12, 2017, Revised
  Selected Papers 16}, pages 66--83. Springer, 2017.

\bibitem{guth1982experimental}
Werner G{\"u}th, Rolf Schmittberger, and Bernd Schwarze.
\newblock An experimental analysis of ultimatum bargaining.
\newblock {\em Journal of economic behavior \& organization}, 3(4):367--388,
  1982.

\bibitem{Harper2017ReinforcementLP}
Marc Harper, Vincent~A. Knight, Martin Jones, Georgios~D. Koutsovoulos,
  Nikoleta~E. Glynatsi, and Owen Campbell.
\newblock Reinforcement learning produces dominant strategies for the iterated
  prisoner’s dilemma.
\newblock {\em PLoS ONE}, 12, 2017.

\bibitem{he2016opponent}
He~He, Jordan Boyd-Graber, Kevin Kwok, and Hal Daum{\'e}~III.
\newblock Opponent modeling in deep reinforcement learning.
\newblock In {\em International conference on machine learning}, pages
  1804--1813. PMLR, 2016.

\bibitem{hu2003nash}
Junling Hu and Michael~P Wellman.
\newblock Nash q-learning for general-sum stochastic games.
\newblock {\em Journal of machine learning research}, 4(Nov):1039--1069, 2003.

\bibitem{hu2023modeling}
Yudong Hu, Congying Han, Haoran Li, and Tiande Guo.
\newblock Modeling opponent learning in multiagent repeated games.
\newblock {\em Applied Intelligence}, 53(13):17194--17210, 2023.

\bibitem{kim2021policy}
Dong~Ki Kim, Miao Liu, Matthew~D Riemer, Chuangchuang Sun, Marwa Abdulhai,
  Golnaz Habibi, Sebastian Lopez-Cot, Gerald Tesauro, and Jonathan How.
\newblock A policy gradient algorithm for learning to learn in multiagent
  reinforcement learning.
\newblock In {\em International Conference on Machine Learning}, pages
  5541--5550. PMLR, 2021.

\bibitem{Lee1967TheAO}
King-Tan Lee and K.~A. Louis.
\newblock The application of decision theory and dynamic programming to
  adaptive control systems.
\newblock 1967.

\bibitem{letcher2018stable}
Alistair Letcher, Jakob Foerster, David Balduzzi, Tim Rockt{\"a}schel, and
  Shimon Whiteson.
\newblock Stable opponent shaping in differentiable games.
\newblock {\em arXiv preprint arXiv:1811.08469}, 2018.

\bibitem{lu2022model}
Christopher Lu, Timon Willi, Christian A~Schroeder De~Witt, and Jakob Foerster.
\newblock Model-free opponent shaping.
\newblock In {\em International Conference on Machine Learning}, pages
  14398--14411. PMLR, 2022.

\bibitem{May1981TheEO}
Robert~M. May.
\newblock The evolution of cooperation.
\newblock {\em Nature}, 292:291--292, 1981.

\bibitem{mealing2015opponent}
Richard Mealing and Jonathan~L Shapiro.
\newblock Opponent modeling by expectation--maximization and sequence
  prediction in simplified poker.
\newblock {\em IEEE Transactions on Computational Intelligence and AI in
  Games}, 9(1):11--24, 2015.

\bibitem{oosterbeek2004cultural}
Hessel Oosterbeek, Randolph Sloof, and Gijs Van De~Kuilen.
\newblock Cultural differences in ultimatum game experiments: Evidence from a
  meta-analysis.
\newblock {\em Experimental economics}, 7:171--188, 2004.

\bibitem{rousseau1984inequality}
Jean-Jacques Rousseau.
\newblock A discourse on inequality.
\newblock {\em Penguin}, 1984.

\bibitem{Sandoval2015ReciprocityIH}
Eduardo~Ben{\'i}tez Sandoval, J{\"u}rgen Brandstetter, Mohammad Obaid, and
  Christoph Bartneck.
\newblock Reciprocity in human-robot interaction: A quantitative approach
  through the prisoner’s dilemma and the ultimatum game.
\newblock {\em International Journal of Social Robotics}, 8:303 -- 317, 2015.

\bibitem{sanfey2003neural}
Alan~G Sanfey, James~K Rilling, Jessica~A Aronson, Leigh~E Nystrom, and
  Jonathan~D Cohen.
\newblock The neural basis of economic decision-making in the ultimatum game.
\newblock {\em Science}, 300(5626):1755--1758, 2003.

\bibitem{schafer2019competitive}
Florian Sch{\"a}fer and Anima Anandkumar.
\newblock Competitive gradient descent.
\newblock {\em Advances in Neural Information Processing Systems}, 32, 2019.

\bibitem{smith2006foundations}
Eric~Alden Smith.
\newblock Foundations of human sociality: Economic experiments and ethnographic
  evidence from fifteen small-scale societies, 2006.

\bibitem{synnaeve2011bayesian}
Gabriel Synnaeve and Pierre Bessiere.
\newblock A bayesian model for opening prediction in rts games with application
  to starcraft.
\newblock In {\em 2011 IEEE Conference on Computational Intelligence and Games
  (CIG'11)}, pages 281--288. IEEE, 2011.

\bibitem{Tang2021Discovering}
Zhenggang Tang, Chao Yu, Boyuan Chen, Huazhe Xu, Xiaolong Wang, Fei Fang, Simon
  Du, Yu~Wang, and Yi~Wu.
\newblock Discovering diverse multi-agent strategic behavior via reward
  randomization.
\newblock {\em arXiv preprint arXiv:2103.04564}, 2021.

\bibitem{VANDAMME1999105}
Eric {van Damme} and Sjaak Hurkens.
\newblock Endogenous stackelberg leadership.
\newblock {\em Games and Economic Behavior}, 28(1):105--129, 1999.

\bibitem{weber2009data}
Ben~G Weber and Michael Mateas.
\newblock A data mining approach to strategy prediction.
\newblock In {\em 2009 IEEE Symposium on Computational Intelligence and Games},
  pages 140--147. IEEE, 2009.

\bibitem{wen2022multi}
Muning Wen, Jakub Kuba, Runji Lin, Weinan Zhang, Ying Wen, Jun Wang, and
  Yaodong Yang.
\newblock Multi-agent reinforcement learning is a sequence modeling problem.
\newblock {\em Advances in Neural Information Processing Systems},
  35:16509--16521, 2022.

\bibitem{wen2019probabilistic}
Ying Wen, Yaodong Yang, Rui Luo, Jun Wang, and Wei Pan.
\newblock Probabilistic recursive reasoning for multi-agent reinforcement
  learning.
\newblock {\em arXiv preprint arXiv:1901.09207}, 2019.

\bibitem{willi2022cola}
Timon Willi, Alistair~Hp Letcher, Johannes Treutlein, and Jakob Foerster.
\newblock Cola: consistent learning with opponent-learning awareness.
\newblock In {\em International Conference on Machine Learning}, pages
  23804--23831. PMLR, 2022.

\bibitem{yang2020overview}
Yaodong Yang and Jun Wang.
\newblock An overview of multi-agent reinforcement learning from game
  theoretical perspective.
\newblock {\em arXiv preprint arXiv:2011.00583}, 2020.

\bibitem{yu2022surprising}
Chao Yu, Akash Velu, Eugene Vinitsky, Jiaxuan Gao, Yu~Wang, Alexandre Bayen,
  and Yi~Wu.
\newblock The surprising effectiveness of ppo in cooperative multi-agent games.
\newblock {\em Advances in Neural Information Processing Systems},
  35:24611--24624, 2022.

\bibitem{zhang2010multi}
Chongjie Zhang and Victor Lesser.
\newblock Multi-agent learning with policy prediction.
\newblock In {\em Proceedings of the AAAI Conference on Artificial
  Intelligence}, volume~24, pages 927--934, 2010.

\bibitem{zhang2020bi}
Haifeng Zhang, Weizhe Chen, Zeren Huang, Minne Li, Yaodong Yang, Weinan Zhang,
  and Jun Wang.
\newblock Bi-level actor-critic for multi-agent coordination.
\newblock In {\em Proceedings of the AAAI Conference on Artificial
  Intelligence}, volume~34, pages 7325--7332, 2020.

\bibitem{zhang2020model}
Kaiqing Zhang, Sham Kakade, Tamer Basar, and Lin Yang.
\newblock Model-based multi-agent rl in zero-sum markov games with near-optimal
  sample complexity.
\newblock {\em Advances in Neural Information Processing Systems},
  33:1166--1178, 2020.

\bibitem{zhang2021multi}
Kaiqing Zhang, Zhuoran Yang, and Tamer Ba{\c{s}}ar.
\newblock Multi-agent reinforcement learning: A selective overview of theories
  and algorithms.
\newblock {\em Handbook of reinforcement learning and control}, pages 321--384,
  2021.

\end{thebibliography}

\newpage
\begin{appendix}
\section{Figures of direction and gradient in different games.}\label{secA1}
\counterwithout{figure}{section}
\setcounter{figure}{29}
\begin{figure}[htbp]
        \centering
    \begin{minipage}{0.45\linewidth}
        \centering
        \includegraphics[width=1.0\linewidth]{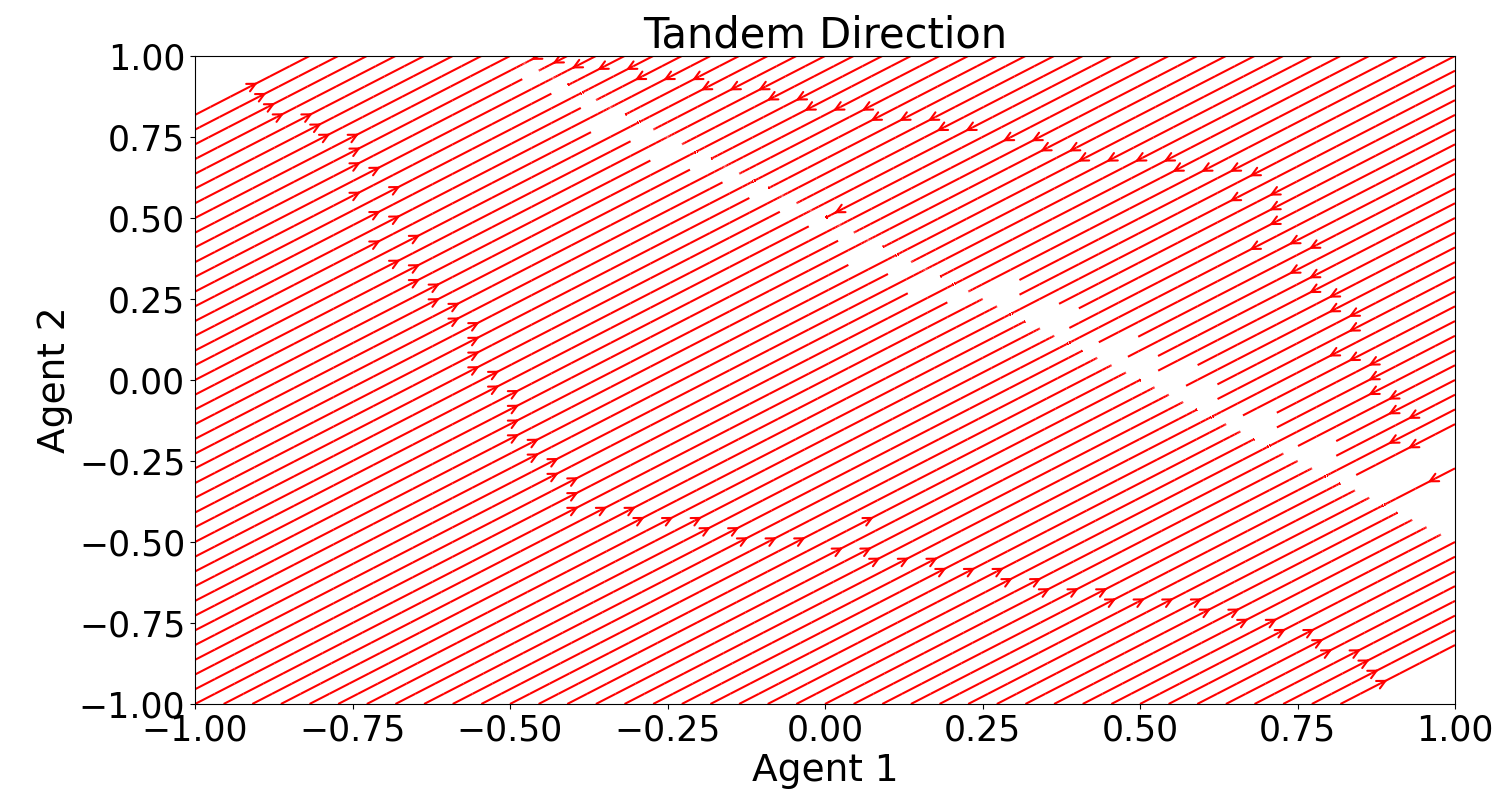}
        \caption{Tandem direction}
        \label{tandem-d}
    \end{minipage}
    \hfill
    \begin{minipage}{0.45\linewidth}
        \centering
        \includegraphics[width=1.0\linewidth]{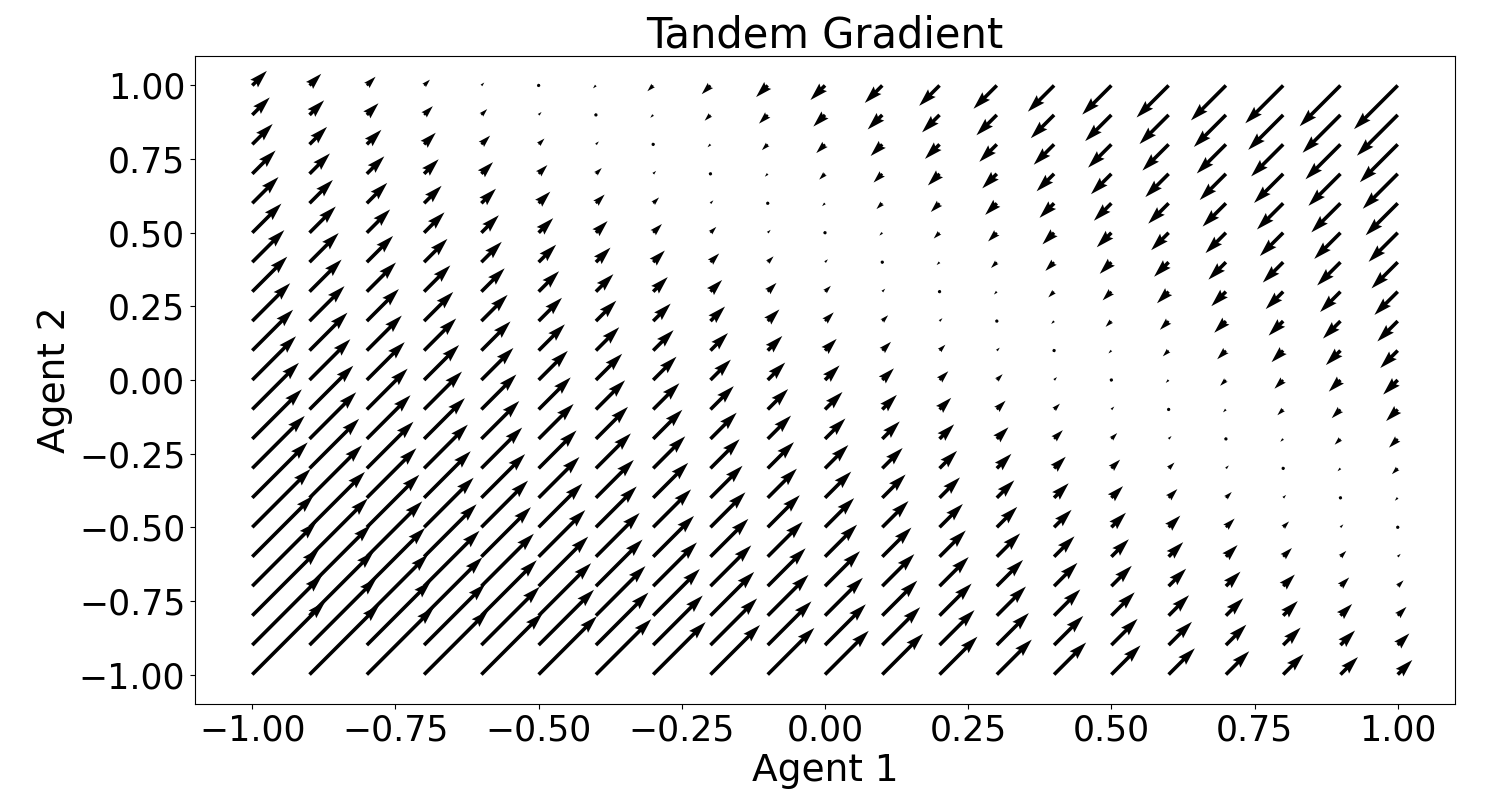}
        \caption{Tandem gradient}
        \label{tandem-g}
    \end{minipage}
        \hfill
    \begin{minipage}{0.45\linewidth}
        \centering
        \includegraphics[width=1.0\linewidth]{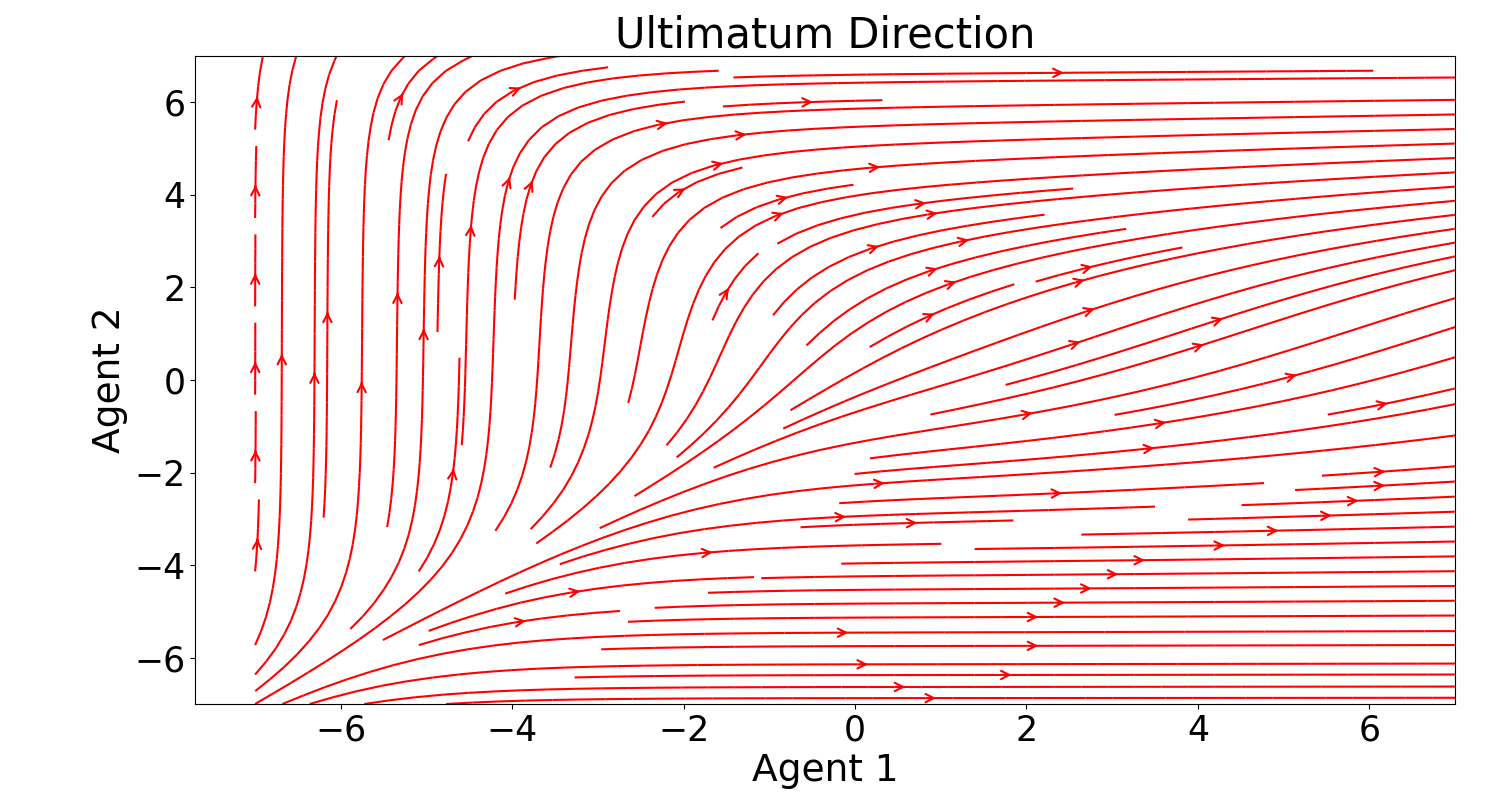}
        \caption{Ultimatum direction}
        \label{Ultimatum-d}
    \end{minipage}
    \hfill
    \begin{minipage}{0.45\linewidth}
        \centering
        \includegraphics[width=1.0\linewidth]{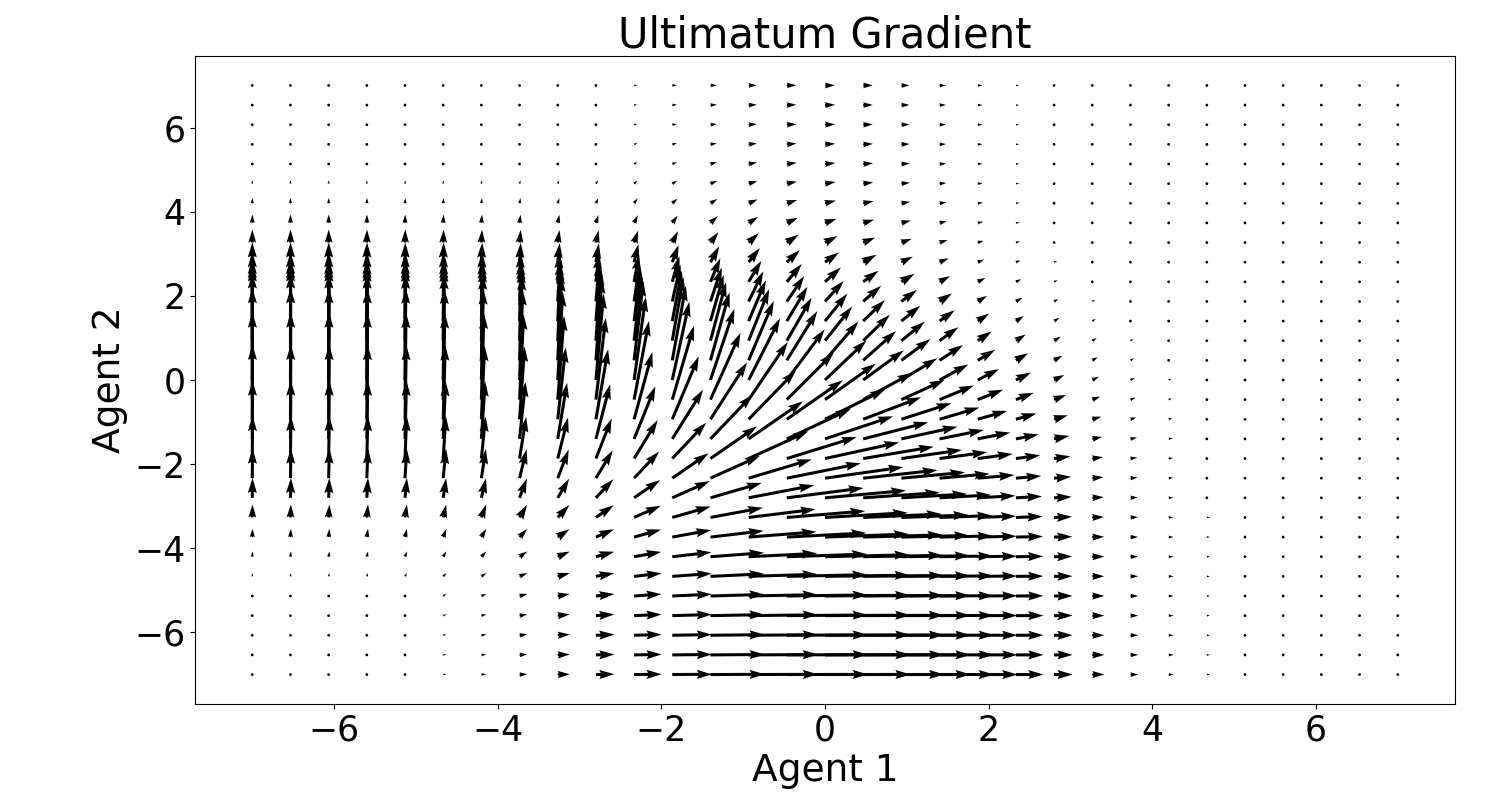}
        \caption{Ultimatum gradient}
        \label{Ultimatum-g}
    \end{minipage}
        \hfill
    \begin{minipage}{0.45\linewidth}
        \centering
        \includegraphics[width=1.0\linewidth]{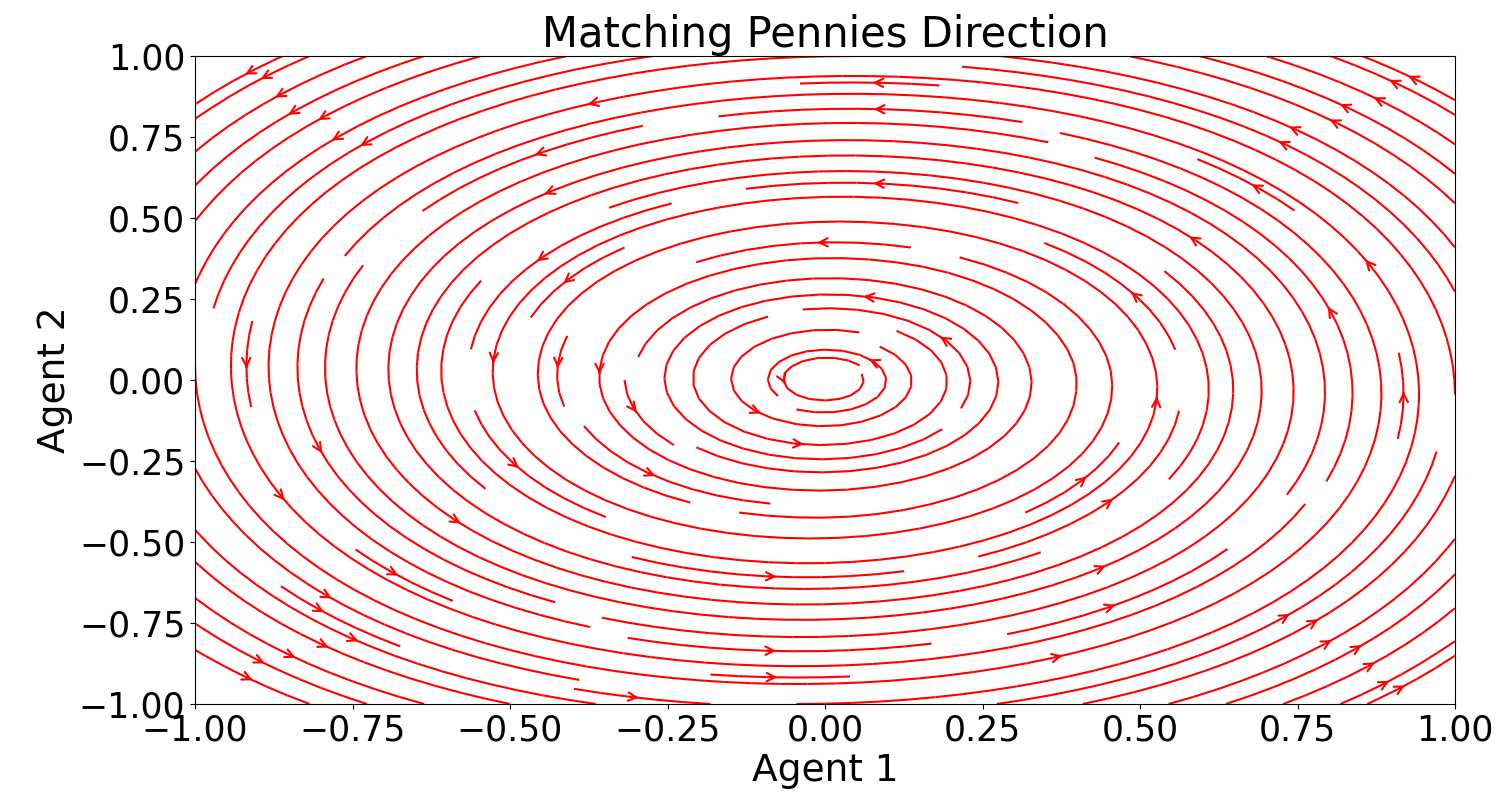}
        \caption{Matching Pennies direction}
        \label{Matching Pennies-d}
    \end{minipage}
    \hfill
    \begin{minipage}{0.45\linewidth}
        \centering
        \includegraphics[width=1.0\linewidth]{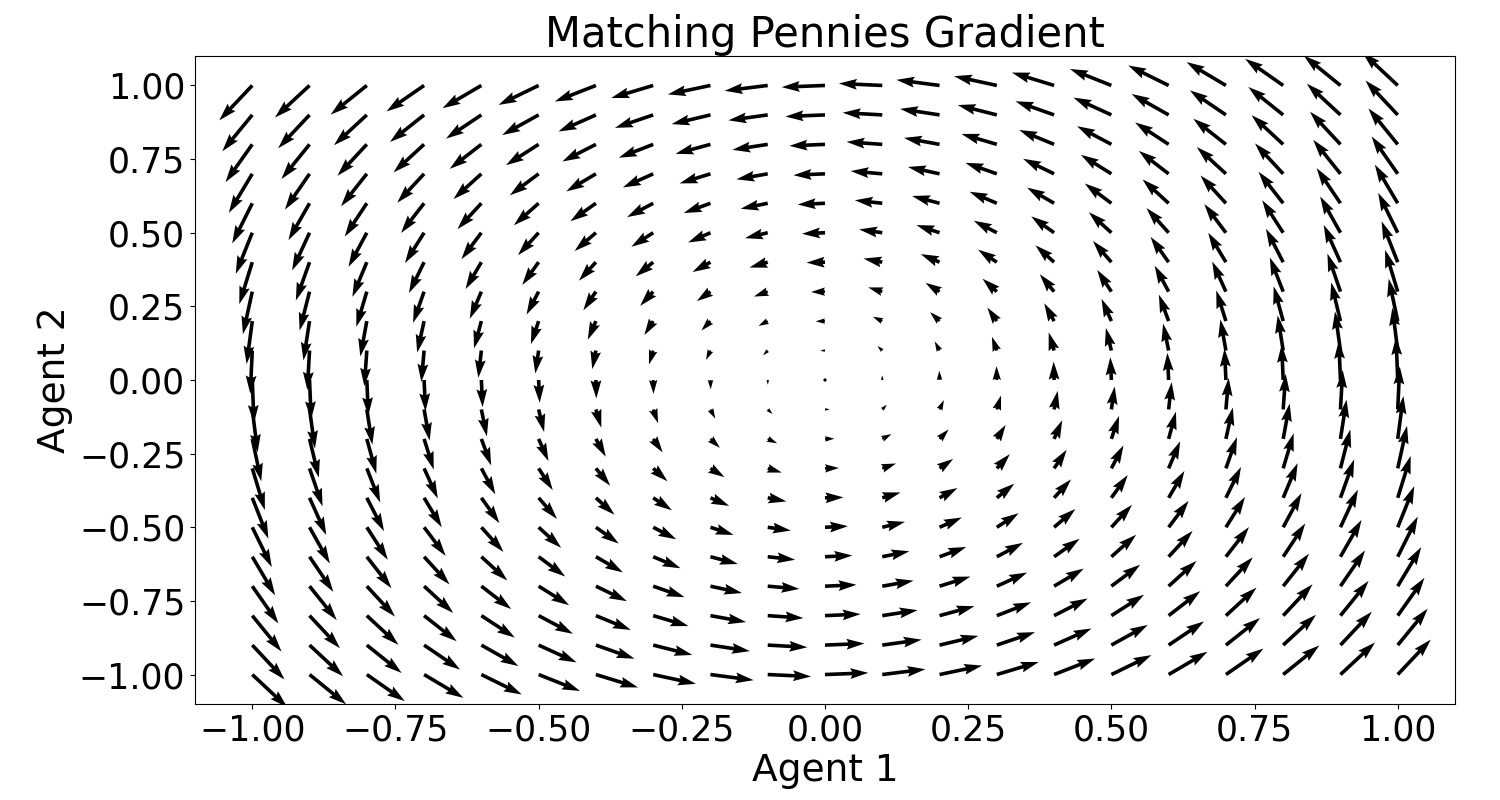}
        \caption{Matching Pennies gradient}
        \label{Matching Pennies-g}
    \end{minipage}
        \hfill
    \begin{minipage}{0.45\linewidth}
        \centering
        \includegraphics[width=1.0\linewidth]{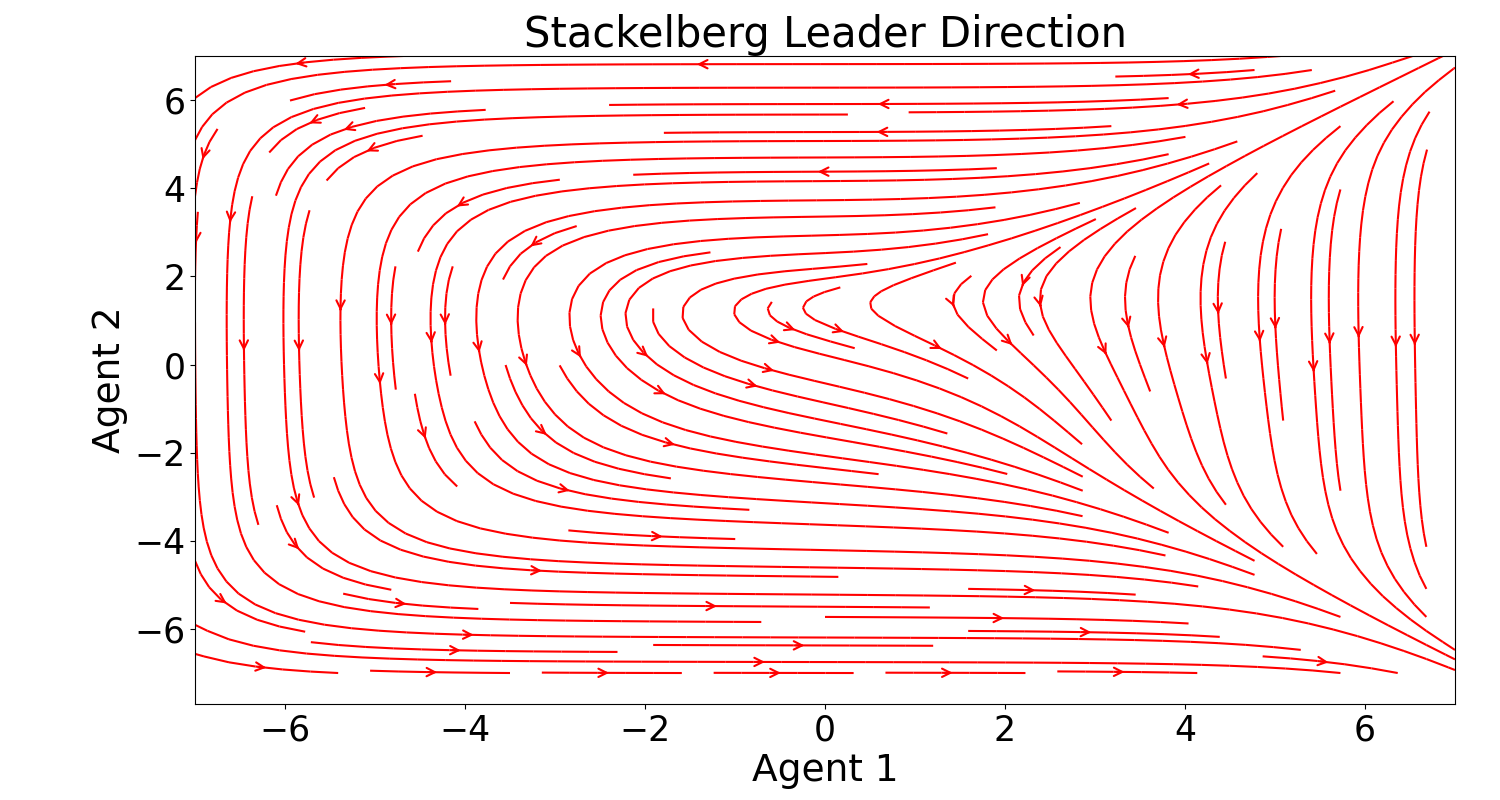}
        \caption{Stackelberg Leader direction}
        \label{Stackelberg Leader-d}
    \end{minipage}
    \hfill
    \begin{minipage}{0.45\linewidth}
        \centering
        \includegraphics[width=1.0\linewidth]{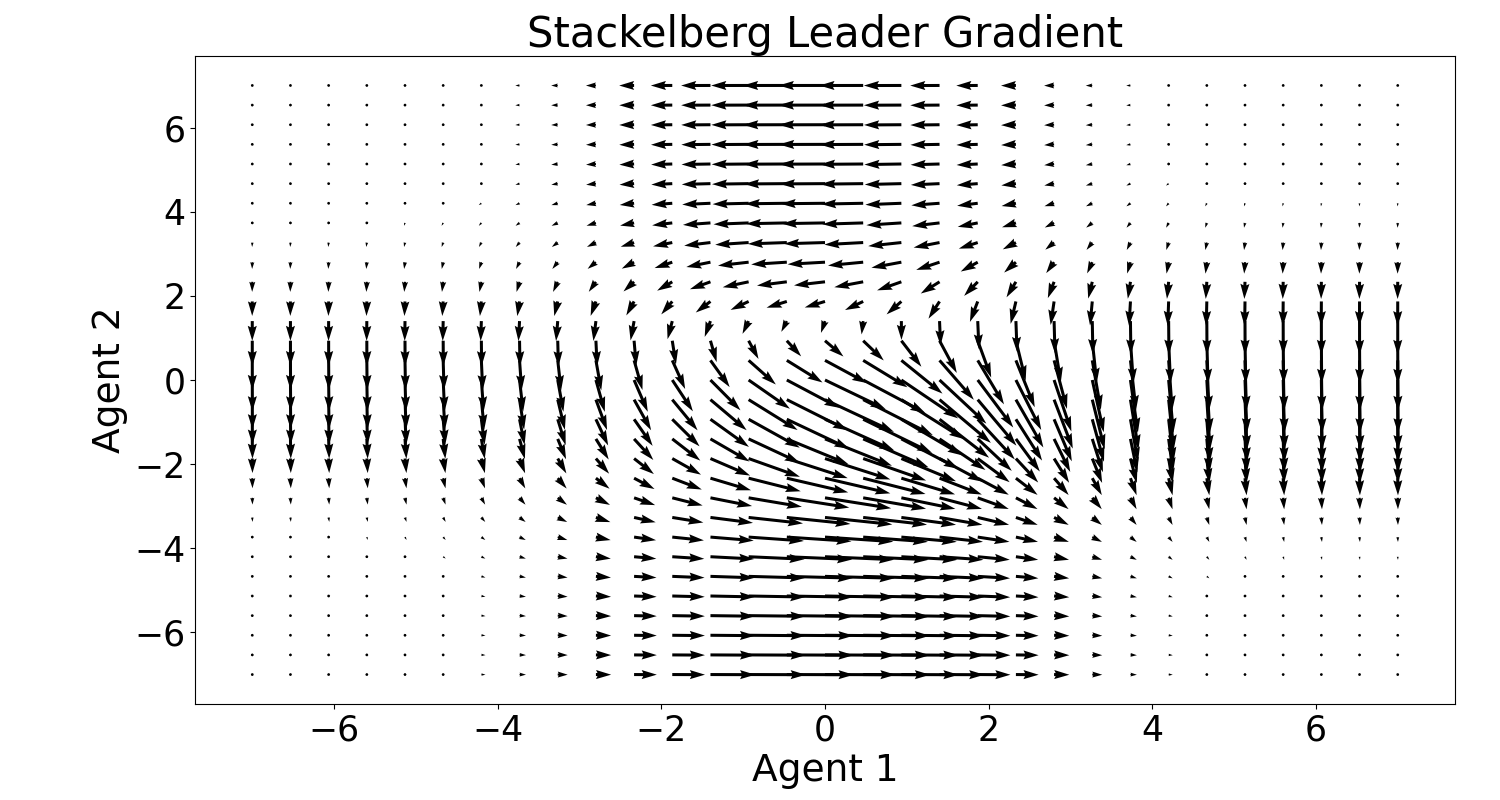}
        \caption{Stackelberg Leader gradient}
        \label{Stackelberg Leader-g}
    \end{minipage}
        \hfill
    \begin{minipage}{0.45\linewidth}
        \centering
        \includegraphics[width=1.0\linewidth]{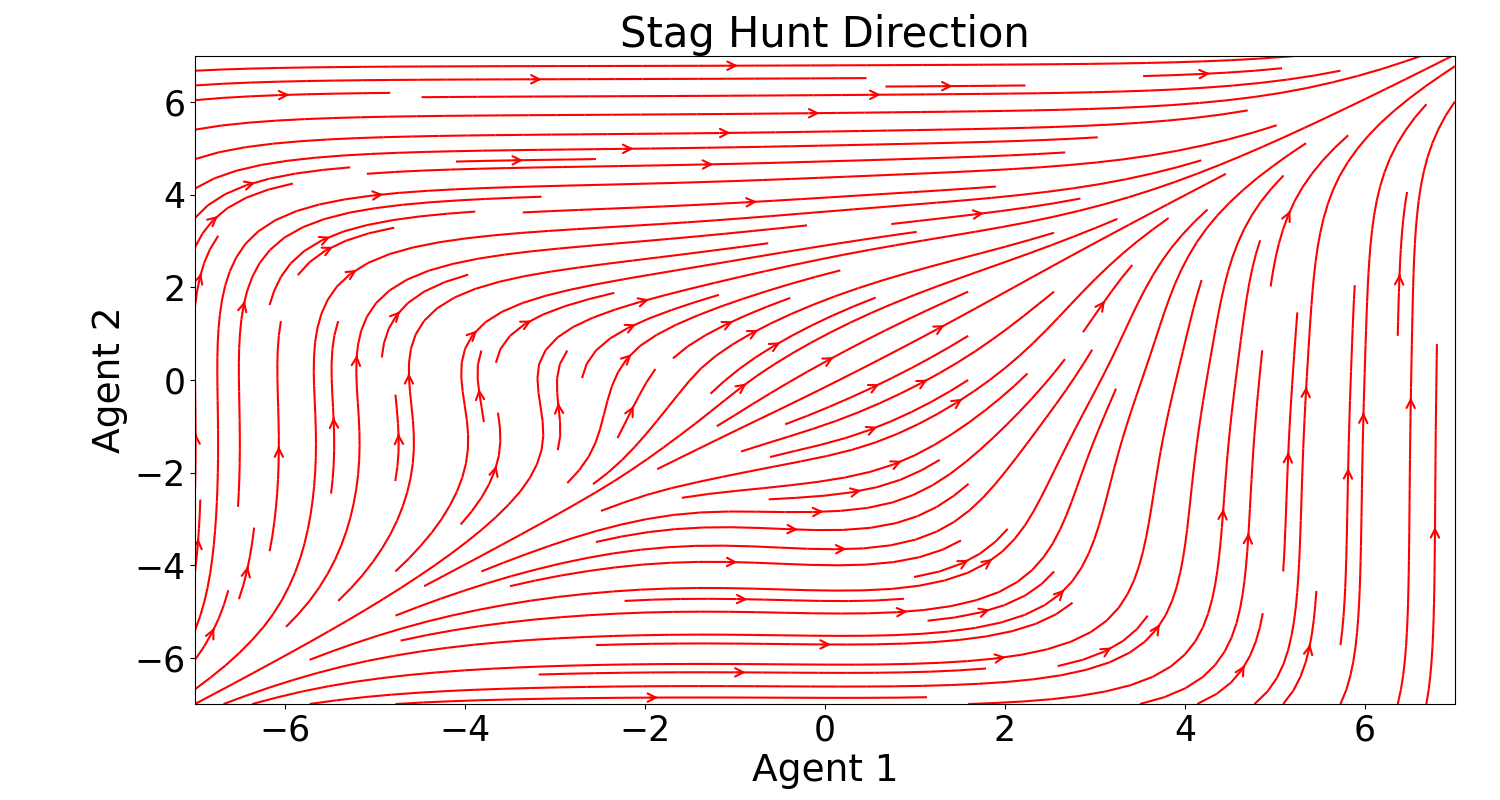}
        \caption{Stag Hunt direction}
        \label{Stag Hunt-d}
    \end{minipage}
    \hfill
    \begin{minipage}{0.45\linewidth}
        \centering
        \includegraphics[width=1.0\linewidth]{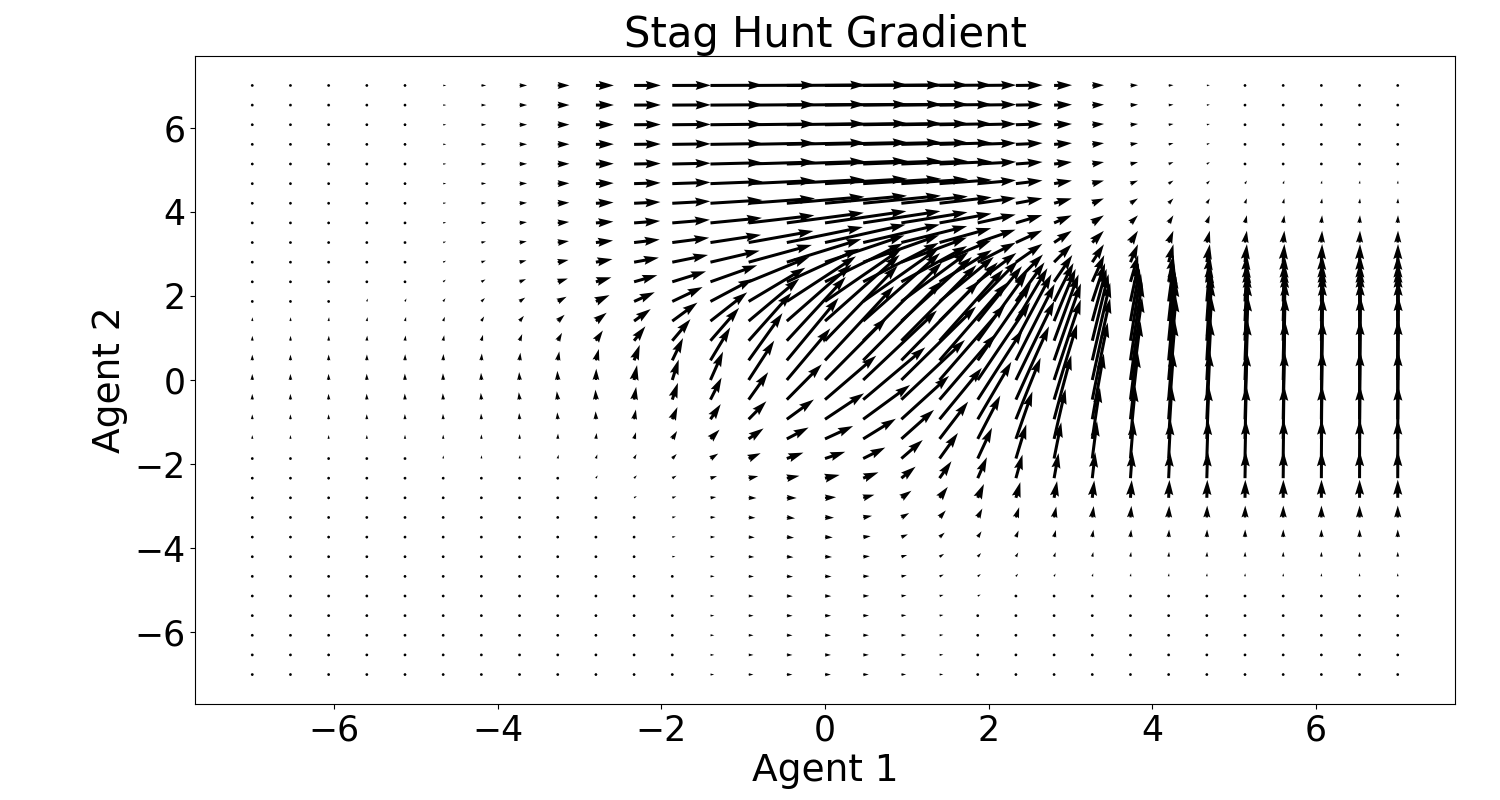}
        \caption{Stag Hunt gradient}
        \label{Stag Hunt-g}
    \end{minipage}
\end{figure}
\end{appendix}
\end{document}